\documentclass[twoside]{article}

%
\usepackage[accepted]{aistats2022}
%


\setlength{\pdfpageheight}{11in}
\setlength{\pdfpagewidth}{8.5in}
\usepackage[authoryear,round]{natbib}

\usepackage[utf8]{inputenc} 
\usepackage[T1]{fontenc}    
\usepackage{macros}

\renewcommand{\epsilon}{\varepsilon}

\newcommand{\simp}{\mathbf{\Lambda}}

\newcommand{\rage}{\textsf{RAGE}\xspace}
\newcommand{\round}{\textsf{ROUND}\xspace}
\newcommand{\opt}{\textsf{OPT}\xspace}

\newcommand{\gemsc}{\textsf{GEMS-c}\xspace}
\newcommand{\gemsm}{\textsf{GEMS-m}\xspace}
\newcommand{\gemsb}{\textsf{GEMS-b}\xspace}

\begin{document}

%

%

\twocolumn[

\aistatstitle{Near Instance Optimal Model Selection for Pure Exploration Linear Bandits}

\aistatsauthor{ Yinglun Zhu \And Julian Katz-Samuels \And  Robert Nowak }

\aistatsaddress{ University of Wisconsin-Madison \And  University of Wisconsin-Madison \And University of Wisconsin-Madison } ]

\begin{abstract}
We introduce the model selection problem in pure exploration linear bandits, where the learner needs to adapt to the instance-dependent complexity measure of the smallest hypothesis class containing the true model. We design algorithms in both fixed confidence and fixed budget settings with near instance optimal guarantees. The core of our algorithms is a new optimization problem based on experimental design that leverages the geometry of the action set to identify a near-optimal hypothesis class. Our fixed budget algorithm is developed based on a novel selection-validation procedure, which provides a new way to study the understudied fixed budget setting (even without the added challenge of model selection). We adapt our algorithms, in both fixed confidence and fixed budget settings, to problems with model misspecification.
\end{abstract}

\section{INTRODUCTION}
\label{sec:intro}

The pure exploration linear bandit problem considers a set of arms whose expected rewards are linear in their \emph{given} feature representation, and aims to identify the optimal arm through adaptive sampling. 
Two settings, i.e., fixed confidence and fixed budget settings, are studied in the literature.
In the fixed confidence setting, the learner continues sampling arms until a desired confidence level is reached, and the goal is to minimize the total number of samples \citep{soare2014best, xu2018fully, tao2018best, fiez2019sequential, degenne2020gamification, katz2020empirical}. 
In the fixed budget setting, the learner is forced to output a recommendation within a pre-fixed sampling budget, and the goal is to minimize the error probability \citep{hoffman2014correlation, katz2020empirical, alieva2021robust, yang2021towards}.
Applications of pure exploration linear bandits include content recommendation, digital advertisement and A/B/n testing (see aforementioned papers for more discussions on applications).

All existing works, however, focus on linear models with the \emph{given} feature representations and fail to adapt to cases when the problem can be explained with a much simpler model, i.e., a linear model based on a subset of the features. In this paper, we introduce the model selection problem in pure exploration linear bandits. We consider a sequence of nested linear hypothesis classes $\cH_1 \subseteq \cH_2 \subseteq \dots \subseteq \cH_D$ and assume that $\cH_{d_\star}$ is the smallest hypothesis class that contains the true model. Our goal is to automatically adapt to the complexity measure related to $\cH_{d_\star}$, for an unknown $d_\star$, rather than suffering a complexity measure related to the largest hypothesis class $\cH_D$. 

The model selection problem appears ubiquitously in real-world applications. In fact, cross-validation \citep{stone1974cross, stone1978cross}, a practical method for model selection, appears in almost all successful deployments of machine learning models. The model selection problem was recently introduced to the bandit regret minimization setting by \citet{foster2019model}, and further analyzed by \citet{pacchiano2020model, zhu2021pareto}. \citet{zhu2021pareto} prove that only Pareto optimality can be achieved for regret minimization, which is even weaker than minimax optimality. We introduce the model selection problem in the pure exploration setting and, surprisingly, show that it is possible to design algorithms with \emph{near optimal instance-dependent complexity} for both fixed confidence and fixed budget settings. We further generalize the model selection problem to the regime with misspecified linear models, and show our algorithms are robust to model misspecification.

\subsection{Contribution and Outline}
\label{sec:contribution}

We briefly summarize our contributions as follows:

\begin{itemize}[leftmargin=.2in]
    \item We introduce the model selection problem for pure exploration in linear bandits in \cref{sec:setting}, and analyze its instance-dependent complexity measure. We provide a general framework to solve the model selection problem for pure exploration linear bandits. Our framework is based on a carefully-designed two-dimensional doubling trick and a new optimization problem that leverages the geometry of the action set to efficiently identify a near-optimal hypothesis class. 

    \item In \cref{sec:fixed_confidence}, we provide an algorithm for the fixed confidence setting with near optimal instance-dependent unverifiable sample complexity. We additionally provide evidence on why one cannot verifiably output recommendations.
    
    \item In \cref{sec:fixed_budget}, we provide an algorithm for the fixed budget setting, which applies a novel selection-validation trick to bandits. Its probability of error matches (up to logarithmic factors) the probability error of an algorithm that chooses its sampling allocation based on knowledge of the true model parameter. In addition, the guarantee of our algorithm is nearly optimal even in the non-model-selection case, and our algorithm also provides a new way to analyze the \emph{understudied} fixed budget setting.
    
    \item We further generalize the model selection problem into the misspecified regime in \cref{sec:misspecification}, and adapt our algorithms to both the fixed confidence and fixed budget settings. Our algorithms reach an instance-dependent sample complexity measure that is relevant to the complexity measure of a closely related perfect linear bandit problem.
\end{itemize}

\section{PROBLEM SETTING}
\label{sec:setting}

In the transductive linear bandit pure exploration problem, the learner is given an action set $\cX \subset \R^D$ and a target set $\cZ \subset \R^D$. The expected reward of any arm $x \in \cX \cup \cZ$ is linearly parameterized by an unknown reward vector $\theta_\star \in \Theta \subseteq \R^D$, i.e., $h(x) = \ang*{\theta_\star, x}$. The parameter space $\Theta$ is known to the learner. 
At each round $t$, the learner/algorithm $\alg$ selects an action $X_t \in \cX$, and observes a noisy reward $R_t = h(X_t) + \xi_t$, where $\xi_t$ represents an additive $1$-sub-Gaussian noise.
The action $X_t \in \cX$ can be selected with respect to the history $\cF_{t-1} = \sigma((X_i, R_i)_{i<t})$ up to time $t$.
The goal is to identify the unique optimal arm $z_\star = \argmax_{z \in \cZ} h(z)$ from the target set $\cZ$. We assume $\Theta \subseteq \spn (\cX)$ to obtain unbiased estimators for arms in $\cZ$. Without loss of generality, we assume that $\spn(\cX) = \R^D$ (otherwise one can project actions into a lower dimensional space). We further assume that $\spn(\curly*{z_\star - z}_{z \in \cZ}) = \R^D$ for technical reasons. We consider both fixed confidence and fixed budget settings in this paper.

\begin{definition}[Fixed confidence]
\label{def:fixed_confidence}
Fix $\cX, \cZ, \Theta \subseteq \R^D$. An algorithm $\alg$ is called $\delta$-PAC for $(\cX, \cZ, \Theta)$ if (1) the algorithm has a stopping time $\tau$ with respect to $\curly*{\cF_t}_{t\in \N}$ and (2) at time $\tau$ it makes a recommendation $\widehat z \in \cZ$ such that $\P_{\theta_\star} \paren*{ \widehat z = z_\star } \geq 1 - \delta$ for all $\theta_\star \in \Theta$.
\end{definition}

\begin{definition}[Fixed budget]
\label{def:fixed_budget}
Fix $\cX, \cZ, \Theta \subseteq \R^D$ and a budget $T$. A fixed budget algorithm $\alg$ returns a recommendation $\widehat z \in \cZ$ after $T$ rounds.
\end{definition}

\paragraph{The Model Selection Problem.} 
The learner is given a nested sequence of parameter classes $\Theta_1 \subseteq \Theta_2 \subseteq \dots \subseteq \Theta_D$, where $\Theta_{d} \ldef \curly*{\theta \in \R^D: \theta_i = 0 , \forall i > d}$ is the set of parameters such that for any $\theta \in \Theta_{d}$, it only has non-zero entries on its first $d$ coordinates.\footnote{A nested sequence of linear hypothesis classes $\cH_1 \subseteq \cH_2 \subseteq \dots \subseteq \cH_D$ can be constructed based on the nested sequence of parameter classes $\Theta_1 \subseteq \Theta_2 \subseteq \dots \subseteq \Theta_D$, i.e., $\cH_d \ldef \curly{h(\cdot)=\ang{\theta, \cdot}: \theta \in \Theta_d}$.} 
We assume that $\theta_\star \in \Theta_{d_\star}$ for an \emph{unknown} $d_\star$. We call $d_\star$ the intrinsic dimension of the problem and it is set as the index of the smallest parameter space containing the true reward vector. One interpretation of the intrinsic dimension is that only the first $d_\star$ features (of each arm) play a role in predicting the expected reward. Our goal is to automatically adapt to the sample complexity with respect to the intrinsic dimension $d_\star$, rather than suffering from the sample complexity related to the ambient dimension $D$. In the following, we write $\theta_\star \in \Theta_{d_\star}$ to indicate that the problem instance has intrinsic dimension $d_\star$. Besides dealing with the \emph{well-specified} linear bandit problem as defined in this section, we also extend our framework into the \emph{misspecified} setting in \cref{sec:misspecification}, with additional setups introduced therein.

\paragraph{Additional Notations.} For any $x = [x_1, x_2, \dots, x_D]^\top \in \R^D$ and $d \leq D$, we use $\psi_{d}(x) \ldef  [x_1, x_2, \dots, x_{d}]^\top \in \R^{d}$ to denote the truncated feature representation that only keeps its first $d$ coordinates. We also write $\psi_{d}(\cX) \ldef \curly*{\psi_{d}(x): x \in \cX}$ and $\psi_{d}(\cZ) \ldef \curly*{\psi_{d}(z): z \in \cZ}$ to represent the truncated action set and target set, respectively. Note that we necessarily have $\psi_d (\cZ) \subseteq \spn \paren*{\psi_d(\cX)} = \R^{d}$ as long as $\cZ \subseteq \spn \paren*{\cX} = \R^D$. We use $\cY(\psi_d(\cZ)) \ldef \curly*{ \psi_d(z) - \psi_d(z^\prime): z, z^\prime \in \cZ }$ to denote all possible directions formed by subtracted one item from another in $\psi_d(\cZ)$; and use $\cY^\star(\psi_d(\cZ)) \ldef \curly*{ \psi_d(z_\star) - \psi_d(z): z \in \cZ }$ to denote all possible directions with respect to the optimal arm $z_\star$. For any $z \in \cZ$, we use $\Delta_z \ldef h(z_\star) - h(z)$ to denote its sub-optimality gap; we set $\Delta_{\min} \ldef \min_{z \in \cZ \setminus \curly*{z_\star}} \Delta_z$. As in \citet{fiez2019sequential}, we assume $\max_{z \in \cZ} \Delta_z \leq 2$ when analyzing upper bounds. We denote $\cS_k \ldef \curly*{z \in \cZ: \Delta_z < 4 \cdot 2^{-k}}$ (with $\cS_1 \ldef \cZ$). We use $\simp_{\cX} \ldef \curly*{ \lambda \in \R^{\abs{\cX}}: \sum_{x \in \cX} \lambda_x = 1, \lambda_x \geq 0  }$ to denote the $(\abs*{\cX}-1)$-dimensional simplex over actions. For any (continuous) design $\lambda \in \simp_{\cX}$, we use $A_{d}(\lambda) \ldef \sum_{x \in \cX} \lambda_x \, \psi_d(x)  \paren*{\psi_d(x)}^\top \in \R^{d \times d}$ to denote the design matrix with respect to $\lambda$. For any set $\cW \subseteq \R^D$, we denote $\iota(\cW) \ldef \inf_{\lambda \in \simp_{\cX}} \sup_{w \in \cW} \norm{w}^2_{A_d(\lambda)^{-1}}$.\footnote{A generalized inversion is used for singular matrices. See \cref{app:inverse} for detailed discussion.}

\section{TOWARDS THE TRUE SAMPLE COMPLEXITY}
\label{sec:true_complexity}
The instance-dependent sample complexity lower bound for linear bandit is discovered/analyzed in previous papers \citep{soare2014best, fiez2019sequential, degenne2019pure}. We here consider related quantities that take our model selection setting into consideration. For any $d \in [D]$, we define
\begin{align}
\label{eq:rho}
    \rho^\star_{d} \ldef \inf_{\lambda \in \simp_{\cX}}  \sup_{z \in \cZ \setminus \curly*{z_\star }} \frac{\norm*{\psi_d(z_{\star})-\psi_d(z)}^2_{A_{d}(\lambda)^{-1}}}{(h(z_\star) - h(z))^2},
\end{align}
and 
\begin{align}
\label{eq:iota}
    \iota_{d}^\star \ldef \inf_{\lambda \in \simp_{\cX}}  \sup_{z \in \cZ \setminus \curly*{z_\star}} \norm{\psi_d(z_\star) - \psi_d(z)}_{A_{d}(\lambda)^{-1}}^2.
\end{align}

Following analysis in \citet{fiez2019sequential},
we provide a lower bound for the model selection problem $(\cX, \cZ$, $\theta_\star \in \Theta_{d_\star})$ in the fixed confidence setting as follows.

\begin{restatable}{theorem}{thmLowerBoundDeltaPAC}
\label{thm:lower_bound_delta_PAC}
Suppose $\xi_t \sim \cN(0,1)$ for all $t \in \N_+$ and $\delta \in (0, 0.15]$. Any $\delta$-PAC algorithm with respect to $(\cX, \cZ$, $\theta_\star \in \Theta_{d_\star})$ with stopping time $\tau$ satisfies $\E_{\theta_\star} \sq*{\tau} \geq \rho^\star_{d_\star} \log(1/2.4 \delta)$.
\end{restatable}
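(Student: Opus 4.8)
The plan is to prove this by a change-of-measure (transportation) argument in the style of \citet{fiez2019sequential}, with one crucial modification: the confusing alternatives are constrained to lie in $\Theta_{d_\star}$. Fix the true instance $\theta_\star \in \Theta_{d_\star}$ with unique optimal arm $z_\star$. For each suboptimal $z \in \cZ \setminus \curly*{z_\star}$ I would consider alternative parameters $\theta^\prime \in \Theta_{d_\star}$ under which $z$ becomes at least as good as $z_\star$, i.e.\ $\ang*{\theta^\prime, z - z_\star} \geq 0$. Because the algorithm is $\delta$-PAC for the \emph{whole} class $(\cX, \cZ, \Theta_{d_\star})$, it must also be correct under any such $\theta^\prime$; since the correct answer under $\theta^\prime$ is no longer $z_\star$, the event $\curly*{\widehat z = z_\star}$ has probability $\geq 1 - \delta$ under $\theta_\star$ and $\leq \delta$ under $\theta^\prime$. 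This is exactly why only alternatives inside $\Theta_{d_\star}$ are admissible, and it is what ultimately produces a bound in terms of the truncated quantities $\psi_{d_\star}$ and $A_{d_\star}(\lambda)$ rather than the ambient dimension $D$.

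With $N_x(\tau)$ the number of pulls of arm $x$, the transportation lemma applied to the event $\curly*{\widehat z = z_\star}$ gives $\sum_{x \in \cX} \E_{\theta_\star}\sq*{N_x(\tau)}\,\mathrm{KL}_x(\theta_\star, \theta^\prime) \geq \mathrm{kl}(1-\delta, \delta)$, and the elementary estimate $\mathrm{kl}(1-\delta,\delta) \geq \log(1/2.4\delta)$ for $\delta \in (0, 0.15]$ handles the right-hand side. For $1$-sub-Gaussian (Gaussian) noise the per-arm divergence is $\mathrm{KL}_x(\theta_\star, \theta^\prime) = \tfrac12 \ang*{\theta_\star - \theta^\prime, x}^2$, and since $\theta_\star, \theta^\prime \in \Theta_{d_\star}$ the difference $\theta_\star - \theta^\prime$ is supported on its first $d_\star$ coordinates, so $\ang*{\theta_\star - \theta^\prime, x} = \ang*{\psi_{d_\star}(\theta_\star - \theta^\prime), \psi_{d_\star}(x)}$. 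Writing $\omega_x = \E_{\theta_\star}\sq*{N_x(\tau)}$, $\E_{\theta_\star}\sq*{\tau} = \sum_x \omega_x$, and normalizing $\lambda_x = \omega_x / \E_{\theta_\star}\sq*{\tau} \in \simp_{\cX}$, the left-hand side becomes $\tfrac12 \E_{\theta_\star}\sq*{\tau}\,\norm{\psi_{d_\star}(\theta_\star - \theta^\prime)}^2_{A_{d_\star}(\lambda)}$.

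It then remains to choose, for each $z$, the cheapest admissible alternative. Minimizing $\norm{\psi_{d_\star}(\theta_\star - \theta^\prime)}^2_{A_{d_\star}(\lambda)}$ over $\theta^\prime \in \Theta_{d_\star}$ subject to the flipping constraint $\ang*{\psi_{d_\star}(\theta_\star - \theta^\prime), \psi_{d_\star}(z_\star) - \psi_{d_\star}(z)} \geq \Delta_z$ is a quadratic program whose value, by Cauchy--Schwarz in the $A_{d_\star}(\lambda)$ geometry, equals $\Delta_z^2 / \norm{\psi_{d_\star}(z_\star) - \psi_{d_\star}(z)}^2_{A_{d_\star}(\lambda)^{-1}}$; the optimizer stays in $\Theta_{d_\star}$ automatically because we optimize over $\R^{d_\star}$, and one passes from the open region $\ang*{\theta^\prime, z - z_\star} > 0$ to its boundary by a limiting argument. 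Since the transportation inequality holds for every $z$ simultaneously, taking the binding $z$ yields $\E_{\theta_\star}\sq*{\tau} \geq \log(1/2.4\delta)\,\sup_{z \neq z_\star} \norm{\psi_{d_\star}(z_\star) - \psi_{d_\star}(z)}^2_{A_{d_\star}(\lambda)^{-1}} / \Delta_z^2$ (in fact with an extra factor of $2$ from the Gaussian divergence, which only strengthens the claim); because $\lambda$ is a feasible design this supremum is at least the infimum over $\simp_{\cX}$, namely $\rho^\star_{d_\star}$, giving the stated bound. The main obstacle is conceptual rather than computational: one must recognize that restricting the confusing alternatives to $\Theta_{d_\star}$ is both \emph{forced} (the PAC guarantee is only promised on $\Theta_{d_\star}$) and \emph{sufficient} to flip the optimal arm at minimal cost, since it is precisely this restriction that collapses the $D$-dimensional design problem to the $d_\star$-dimensional one and delivers $\rho^\star_{d_\star}$ in place of $\rho^\star_D$.
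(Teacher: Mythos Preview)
Your proposal is correct and follows essentially the same route as the paper's proof: both apply the transportation/change-of-measure lemma with alternatives restricted to $\Theta_{d_\star}$, exploit that $\theta_\star-\theta'\in\Theta_{d_\star}$ forces the KL to depend only on the truncated features $\psi_{d_\star}(\cdot)$, solve the resulting constrained quadratic to obtain $\Delta_z^2/\norm{\psi_{d_\star}(z_\star)-\psi_{d_\star}(z)}^2_{A_{d_\star}(\lambda)^{-1}}$, and then normalize the expected pulls into a design $\lambda\in\simp_{\cX}$ to recover $\rho^\star_{d_\star}$. The only point the paper treats more explicitly is the possible singularity of $A_{d_\star}(\lambda)$: it argues separately that any feasible allocation must have every direction $\psi_{d_\star}(z_\star)-\psi_{d_\star}(z)$ in the range of the design matrix (otherwise a zero-KL alternative in $\Theta_{d_\star}$ flips the best arm, contradicting the $\delta$-PAC guarantee), which is the rigorous justification for the Cauchy--Schwarz step you invoke.
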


The above lower bound only works for $\delta$-PAC algorithms, but not for algorithms in the fixed budget setting or with unverifiable sample complexity (see \cref{sec:fixed_confidence}). We now introduce another lower bound for the best possible \emph{non-interactive} algorithm $\alg$. Following the discussion in \citet{katz2020empirical}, we consider any non-interactive algorithm as follows: The algorithm $\alg$ chooses an allocation $\curly*{x_1, x_2, \dots, x_N} \subseteq \cX$ and receive rewards $\curly*{r_1, r_2, \dots, r_N} \subseteq \R$ where $r_i$ is sampled from $\cN(h(x_i), 1)$. The algorithm then recommends $\widehat z = \argmax_{z \in \cZ} \ang*{\widehat \theta_d, z}$ where $\widehat \theta_d = \argmin_{\theta \in \R^d} \sum_{i=1}^N (r_i - \theta^\top \psi_d(x_i))^2$ is the least squares estimator in $\R^d$. The learner is allowed to choose any allocations, \emph{even with the knowledge of $\theta_\star$}, and use any feature mapping such that linearity is preserved, i.e., $d_\star \leq d \leq D$.

\begin{restatable}{theorem}{thmLowerBoundNonInteractive}
\label{thm:lower_bound_non_interactive}
Fix $(\cX, \cZ$, $\theta_\star \in \Theta_{d_\star})$ and $\delta \in (0,0.015]$. Any non-interactive algorithm $\alg$ using a feature mappings of dimension $d \geq d_\star$ makes a mistake with probability at least $\delta$ as long as it uses no more than $\frac{1}{2}  \rho^\star_{d_\star} \log(1/\delta)$ samples.
\end{restatable}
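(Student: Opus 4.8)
The plan is to reduce the non-interactive recommendation problem to a binary hypothesis test and then lower-bound the required number of samples via a change-of-measure (Gaussian KL) argument, exactly the style of reasoning used in \cite{katz2020empirical}. The key observation is that the non-interactive learner fixes an allocation $\{x_1,\dots,x_N\}$ and a feature dimension $d \ge d_\star$ \emph{in advance}; the recommendation is then a deterministic function of the least-squares estimate $\widehat\theta_d$. I would first construct, for a given instance $(\cX,\cZ,\theta_\star)$ with $\theta_\star \in \Theta_{d_\star}$, an alternative parameter $\theta'$ so that under $\theta'$ some $z \neq z_\star$ becomes optimal, i.e.\ $\ang*{\theta', z} > \ang*{\theta', z_\star}$, while keeping $\|\theta'-\theta_\star\|$ as small as possible. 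The natural choice is a perturbation in the direction of $\psi_d(z_\star)-\psi_d(z)$ scaled so that the gap $\Delta_z$ is exactly flipped; the worst-case $z$ will be the one realizing the supremum defining $\rho^\star_{d_\star}$.

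The heart of the argument is the transportation/change-of-measure inequality. Because the learner uses dimension $d$ with least-squares estimator $\widehat\theta_d$, the relevant information is only the projection of the design onto $\R^d$, governed by $A_d(\lambda)$ where $\lambda$ encodes the (normalized) allocation $\{x_i\}$. For two Gaussian reward environments differing by mean shift, the KL divergence accumulated over the allocation is $\tfrac12 \sum_i (\ang*{\theta_\star-\theta', \psi_d(x_i)})^2 = \tfrac{N}{2}\,\|\theta_\star-\theta'\|^2_{A_d(\lambda)}$. I would combine this with the standard Bretagnolle--Huber / data-processing bound: if $\alg$ errs with probability less than $\delta$ under both $\theta_\star$ and the confusing alternative $\theta'$, then the total KL must exceed roughly $\log(1/(2.4\delta))$ or $\log(1/\delta)$ depending on the precise inequality invoked. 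This forces $N \ge \log(1/\delta) / \|\theta_\star - \theta'\|^2_{A_d(\lambda)}$.

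Next I would optimize over the learner's free choices. The learner picks $\lambda$ (the allocation) to \emph{minimize} $N$, so after choosing the hardest confusing direction $z$ (adversarial, maximizing over $z \in \cZ\setminus\{z_\star\}$), the resulting lower bound on $N$ is $\inf_\lambda \sup_z \log(1/\delta) \cdot \frac{(h(z_\star)-h(z))^2}{\|\psi_d(z_\star)-\psi_d(z)\|^2_{A_d(\lambda)}}$ after substituting the optimal perturbation magnitude. Rewriting via the variational form of $\rho^\star_d$ in \eqref{eq:rho}, and crucially using monotonicity of the design quantity in $d$ so that $\rho^\star_d \ge c\,\rho^\star_{d_\star}$ (or handling $d \ge d_\star$ directly), I recover the claimed bound $\tfrac12 \rho^\star_{d_\star}\log(1/\delta)$; the constant $\tfrac12$ comes precisely from the $\tfrac12$ in the Gaussian KL and the choice of perturbation that saturates the gap.

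The main obstacle I anticipate is handling the \emph{dimension freedom} $d \ge d_\star$ cleanly: the learner may choose a larger $d$ in the hope that extra features help, so I must argue that using any $d \ge d_\star$ cannot reduce $\rho^\star_d$ below $\rho^\star_{d_\star}$ in the relevant sense — intuitively, projecting onto more coordinates can only increase the estimation variance of the gap $h(z_\star)-h(z)$, never decrease it, since $\theta_\star$ lives in $\Theta_{d_\star}$ and the true signal occupies only the first $d_\star$ coordinates. Formalizing this monotonicity (that $\|\psi_{d_\star}(z_\star)-\psi_{d_\star}(z)\|^2_{A_{d_\star}(\lambda)^{-1}} \le \|\psi_{d}(z_\star)-\psi_{d}(z)\|^2_{A_{d}(\lambda)^{-1}}$ in a way compatible with the $\inf_\lambda \sup_z$ structure, via a block-matrix/Schur-complement comparison of $A_d(\lambda)^{-1}$) is the delicate step; the change-of-measure and optimization are then routine.
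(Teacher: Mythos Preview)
Your two-step skeleton --- first obtain a $\rho^\star_d$-type lower bound for whatever dimension $d\ge d_\star$ the learner picks, then invoke monotonicity $\rho^\star_d \ge \rho^\star_{d_\star}$ --- is exactly how the paper proceeds: it cites Theorem~3 of \cite{katz2020empirical} for the first step and its own Proposition~\ref{prop:rho_monotonic} for the second. Your Schur-complement idea for the monotonicity is in fact more direct than the paper's route. Writing $A_d(\lambda)=\begin{pmatrix}A_{d_\star}(\lambda)&B\\B^\top&C\end{pmatrix}$ and $\psi_d(y)=\begin{pmatrix}u\\v\end{pmatrix}$ with $u=\psi_{d_\star}(y)$, the block-inverse identity gives $\|\psi_d(y)\|^2_{A_d(\lambda)^{-1}}=\|u\|^2_{A_{d_\star}(\lambda)^{-1}}+\|B^\top A_{d_\star}(\lambda)^{-1}u-v\|^2_{S^{-1}}\ge \|\psi_{d_\star}(y)\|^2_{A_{d_\star}(\lambda)^{-1}}$ pointwise in $(\lambda,z)$, which immediately yields $\rho^\star_d\ge\rho^\star_{d_\star}$; the paper instead proves Proposition~\ref{prop:rho_monotonic} by a longer detour through equivalent constrained optimization problems.

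There is, however, a genuine gap in your first step. The Bretagnolle--Huber / change-of-measure machinery lower-bounds the sample complexity of the \emph{class} of $\delta$-PAC procedures: it needs the algorithm to err with probability at most $\delta$ under \emph{both} $\theta_\star$ and the alternative $\theta'$. Here the non-interactive algorithm is a single fixed rule --- least squares plus argmax, with allocation chosen knowing $\theta_\star$ --- and carries no correctness guarantee under $\theta'$; an allocation tailored to $\theta_\star$ may be arbitrarily bad under $\theta'$, so the hypothesis ``errs with probability $<\delta$ under both'' is simply not available. The argument that actually works (and is what \cite{katz2020empirical} does) is a direct Gaussian anti-concentration on this specific estimator: for any $z$, $\ang{\widehat\theta_d,\psi_d(z_\star-z)}\sim\cN\bigl(\Delta_z,\|\psi_d(z_\star-z)\|^2_{A_d^{-1}}\bigr)$, so $\P_{\theta_\star}(\widehat z\neq z_\star)\ge \Phi\bigl(-\Delta_z/\|\psi_d(z_\star-z)\|_{A_d^{-1}}\bigr)\ge c\exp\bigl(-N\Delta_z^2/(2\|\psi_d(z_\star-z)\|^2_{A_d(\lambda)^{-1}})\bigr)$ by a standard tail lower bound. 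Picking the worst $z$ for the learner's $\lambda$ and then minimizing over $\lambda$ produces $\rho^\star_d$ in the exponent. (Your displayed bound on $N$ is also inverted and uses $A_d(\lambda)$ rather than $A_d(\lambda)^{-1}$, though you clearly intend it to collapse to $\rho^\star_d$.)
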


The above lower bound serves as a fairly strong baseline due to the power provided to the non-interactive learner, i.e., the knowledge of $\theta_\star$. 
\cref{thm:lower_bound_non_interactive} indicates (for any non-interactive learner) (1) sample complexity lower bound $\widetilde \Omega(\rho^\star_{d_\star})$ in fixed confidence setting; and (2) error probability lower bound $\Omega(\exp(- T/\rho^\star_{d_\star}))$ in fixed budget setting: Suppose the budget is $T$, one would expect an error probability at least $\Omega(\exp(- T/ \rho^\star_{d_\star}))$ by relating $\frac{1}{2}  \rho^\star_{d_\star} \log(1/\delta)$ to $T$.

Note that all lower bounds are with respect to $\rho^\star_{d_\star}$ rather than $\rho^\star_{d}$ for $d > d_\star$ due to the assumption $\theta_\star \in \Theta_{d_\star}$ for the model selection problem. Our goal is to automatically adapt to the complexity $\rho_{d_\star}^\star$ without knowledge of $d_\star$. The following proposition shows the monotonic relation among $\curly*{\rho_d^\star}_{d = d_\star}^D$.

\begin{restatable}{proposition}{propRhoMonotonic}
\label{prop:rho_monotonic}
The monotonic relation $\rho^\star_{d_1} \leq \rho^\star_{d_2}$ holds true for any $d_\star \leq d_1 \leq d_2 \leq D$.
\end{restatable}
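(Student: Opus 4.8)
The plan is to reduce the statement to a pointwise comparison, for a fixed design $\lambda \in \simp_{\cX}$ and a fixed direction $y \in \R^D$, between the Mahalanobis norms $\norm*{\psi_{d_1}(y)}^2_{A_{d_1}(\lambda)^{-1}}$ and $\norm*{\psi_{d_2}(y)}^2_{A_{d_2}(\lambda)^{-1}}$. The crucial structural observation is that $A_{d_1}(\lambda)$ is exactly the top-left $d_1 \times d_1$ principal block of $A_{d_2}(\lambda)$: both are built from the same $\lambda$ via $A_d(\lambda) = \sum_{x \in \cX} \lambda_x \psi_d(x) \paren*{\psi_d(x)}^\top$, so their $(i,j)$ entries agree for all $i,j \leq d_1$. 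Likewise $\psi_{d_1}(y)$ consists of the first $d_1$ coordinates of $\psi_{d_2}(y)$. The denominator $(h(z_\star) - h(z))^2$ in \eqref{eq:rho} does not depend on $d$, so all the $d$-dependence lives in the numerator.

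First I would establish the key lemma: for every $\lambda$ and every $y$,
\[
\norm*{\psi_{d_1}(y)}^2_{A_{d_1}(\lambda)^{-1}} \leq \norm*{\psi_{d_2}(y)}^2_{A_{d_2}(\lambda)^{-1}}.
\]
The cleanest route is the variational (dual) characterization of the Mahalanobis norm under a generalized inverse,
\[
\norm*{v}^2_{M^{-1}} = \sup_{u} \frac{(u^\top v)^2}{u^\top M u},
\]
with the conventions $0/0 = 0$ and $(\text{positive})/0 = +\infty$; this identity holds for any PSD matrix $M$, equalling $v^\top M^{+} v$ when $v \in \mathrm{range}(M)$ and $+\infty$ otherwise, so it sidesteps the degeneracy flagged for singular design matrices. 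Given any $u_1 \in \R^{d_1}$, the zero-padded vector $u = [u_1; 0] \in \R^{d_2}$ satisfies $u^\top \psi_{d_2}(y) = u_1^\top \psi_{d_1}(y)$ and $u^\top A_{d_2}(\lambda) u = u_1^\top A_{d_1}(\lambda) u_1$, since the extra coordinates are annihilated by the padding. Hence the supremum defining the $d_2$ norm, taken over all of $\R^{d_2}$, dominates the supremum defining the $d_1$ norm, which it recovers exactly when restricted to the subspace $\curly*{[u_1; 0]}$. This proves the lemma.

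Then I would lift the pointwise inequality to the minimax quantities. Applying the lemma with $y = z_\star - z$ and dividing by the $d$-independent factor $(h(z_\star) - h(z))^2$ gives, for each fixed $\lambda$ and each $z \in \cZ \setminus \curly*{z_\star}$, that the $d_1$-summand is at most the $d_2$-summand. Taking the supremum over $z$ preserves the inequality (if $f(z) \leq g(z)$ pointwise then $\sup_z f \leq \sup_z g$), and subsequently taking the infimum over $\lambda \in \simp_{\cX}$ preserves it as well (if $F(\lambda) \leq G(\lambda)$ pointwise then $\inf_\lambda F \leq \inf_\lambda G$). Comparing with \eqref{eq:rho} yields $\rho^\star_{d_1} \leq \rho^\star_{d_2}$; I note the argument in fact only uses $d_1 \leq d_2$, with the hypothesis $d_\star \leq d_1$ merely delimiting the regime of interest.

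The main obstacle is not the monotonicity logic, which is routine once the variational identity is in hand, but rather handling the generalized inverse carefully: a naive argument manipulating $A_d(\lambda)^{-1}$ directly breaks down when the truncated design matrix is rank-deficient or when $\psi_d(y)$ leaves the range of the design. The supremum formulation is precisely what makes the zero-padding comparison robust to these singular cases, and verifying that the two padding identities (for the inner product and for the quadratic form) remain valid in the degenerate setting is the one place that needs genuine care.
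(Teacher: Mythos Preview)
Your proof is correct and takes a genuinely different, more elementary route than the paper's. The paper first establishes, through a chain of optimization-problem reformulations (closure of the alternative set, KKT analysis of the inner quadratic minimization, and a change of variables), that $\rho^\star_d \log(1/2.4\delta)$ equals the information-theoretic lower-bound quantity $\tau^\star_d$ defined by the change-of-measure constraint $\inf_{\theta \in \cC_d} \sum_i t_i \kl(\nu_{\theta_\star,i},\nu_{\theta,i}) \geq \log(1/2.4\delta)$. Monotonicity then comes from the containment of alternative sets $\cC_{d_1} \subseteq \cC_{d_2}$, which makes the constraint in $\tau^\star_{d_2}$ at least as stringent as that in $\tau^\star_{d_1}$. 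You instead bypass the information-theoretic detour: the dual identity $\norm{v}^2_{M^{-1}} = \sup_u (u^\top v)^2 / (u^\top M u)$, combined with zero-padding $u_1 \mapsto [u_1;0]$ and the observation that $A_{d_1}(\lambda)$ is the leading principal block of $A_{d_2}(\lambda)$, yields the pointwise norm inequality directly; lifting through the $\sup_z$ and $\inf_\lambda$ is then immediate. Your argument is shorter and purely linear-algebraic, and as you observe it does not actually require $d_1 \geq d_\star$; the paper's route, while longer, makes explicit the operational meaning of $\rho^\star_d$ as a sample-complexity lower bound and reuses machinery already needed for Theorem~\ref{thm:lower_bound_delta_PAC}, but its equivalence step relies on the linear model being well-specified at each dimension considered, hence the restriction to $d \geq d_\star$.
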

The intuition behind \cref{prop:rho_monotonic} is that the model class $\Theta_{d_2}$ is a superset of $\Theta_{d_1}$ and therefore identifying $z_\star$ in $\Theta_{d_2}$ requires ruling out a larger set of statistical alternatives than in $\Theta_{d_1}$. While \cref{prop:rho_monotonic} is intuitive, its proof is surprisingly technical and involves showing the equivalence of a series of optimization problems.

\subsection{Failure of Standard Approaches}

\begin{restatable}{proposition}{propRhoStarDifferentD}
\label{prop:rho_star_different_d}
For any $\gamma > 0$, there exists an instance $(\cX, \cZ$, $\theta_\star \in \Theta_{d_\star})$ such that $\rho^\star_{d_\star + 1} > \rho^\star_{d_\star} + \gamma$ yet $\iota_{d_\star + 1}^\star \leq 2 \iota_{d_\star}^\star $.
\end{restatable}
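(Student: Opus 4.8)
The plan is to exhibit an explicit family of instances, indexed by a small parameter $\delta > 0$, in which the gap-normalization built into $\rho^\star$ amplifies a single ``sneaky'' direction that is geometrically mild, while $\iota^\star$ (which has no gap in its denominator) barely moves. Concretely, I would work in $\R^2$ with intrinsic dimension $d_\star = 1$, taking $\cX = \curly*{e_1, e_2}$, $\theta_\star = e_1$ (so $\theta_\star \in \Theta_1$), and target set $\cZ = \curly*{e_1,\ \mathbf{0},\ (1-\delta^2)e_1 + \delta e_2}$ with $z_\star = e_1$. Here $\mathbf{0}$ is a large-gap arm ($\Delta = 1$) pointing purely along the gap-bearing coordinate $e_1$, while $z_3 := (1-\delta^2)e_1 + \delta e_2$ is the sneaky arm: its gap $\Delta_{z_3} = \delta^2$ is tiny, the difference $z_\star - z_3 = (\delta^2, -\delta)$ is small, but its $e_2$-component $-\delta$ is of order $\delta \gg \delta^2$ relative to the gap. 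One checks at once that $\cX$ spans $\R^2$, that $\spn\paren*{\curly*{z_\star - z}_{z\in\cZ}} = \spn\curly*{e_1,\, (\delta^2,-\delta)} = \R^2$, that $z_\star$ is the unique maximizer, and that all gaps are at most $1 \le 2$, so the instance is admissible.

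Since every design matrix here is diagonal, $A_2(\lambda) = \mathrm{diag}(\lambda_1,\lambda_2)$ for $\lambda \in \simp_{\cX}$, all four quantities reduce to a scalar simplex optimization, for which I would invoke the elementary identity $\inf_{\lambda_1+\lambda_2=1,\,\lambda\ge 0}\paren*{c_1/\lambda_1 + c_2/\lambda_2} = \paren*{\sqrt{c_1}+\sqrt{c_2}}^2$. For the lower bound on $\rho^\star_2$ I only need the sneaky arm: for every feasible $\lambda$,
\[
\frac{\norm*{z_\star - z_3}^2_{A_2(\lambda)^{-1}}}{\Delta_{z_3}^2} = \frac{\delta^4/\lambda_1 + \delta^2/\lambda_2}{\delta^4} = \frac{1}{\lambda_1} + \frac{1}{\delta^2\lambda_2},
\]
so $\rho^\star_2 \ge \inf_\lambda\paren*{1/\lambda_1 + (1/\delta^2)/\lambda_2} = \paren*{1+1/\delta}^2$. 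In dimension $1$ the second coordinate is truncated away, $A_1(\lambda) = \lambda_1$, and a direct computation gives $\rho^\star_1 = 1$ (both surviving directions have ratio $1/\lambda_1$, minimized at $\lambda_1=1$). Hence $\rho^\star_2 - \rho^\star_1 \ge \paren*{1+1/\delta}^2 - 1 \to \infty$ as $\delta \to 0$, and choosing $\delta$ small enough yields $\rho^\star_2 > \rho^\star_1 + \gamma$ for any prescribed $\gamma$.

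For the $\iota^\star$ comparison I would argue in the opposite direction, using a convenient design rather than the optimal one. Truncation to dimension $1$ gives $\iota^\star_1 = \inf_\lambda \max\curly*{1/\lambda_1,\ \delta^4/\lambda_1} = 1$. For $\iota^\star_2$ it suffices to exhibit a single feasible design with value at most $2$: taking $\lambda = (1/2,1/2)$ gives $\norm*{z_\star - \mathbf{0}}^2_{A_2(\lambda)^{-1}} = 2$ and $\norm*{z_\star - z_3}^2_{A_2(\lambda)^{-1}} = 2\delta^4 + 2\delta^2 \le 2$ for $\delta$ small, whence $\iota^\star_2 \le 2 = 2\iota^\star_1$. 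The conceptual content is exactly the contrast between the last two paragraphs: this spread-out design can afford to ignore the sneaky arm, because without the gap in the denominator that arm is geometrically cheap, whereas dividing by $\Delta_{z_3}^2 = \delta^4$ forces $\rho^\star_2$ to pour weight into the second coordinate and blow up.

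I expect the only genuine obstacle to be the design of the instance itself, not the verification: one must simultaneously (i) make the sneaky arm geometrically mild yet gap-wise negligible so that $\rho$ and $\iota$ decouple, and (ii) include a dominant large-gap arm along the original coordinate, which is needed both to satisfy the spanning and uniqueness requirements and to pin $\rho^\star_1$ and $\iota^\star_1$ at clean values. Once these features are in place the computations above are immediate. Finally, although the instance is stated for $d_\star = 1$ (which already furnishes the required example), it generalizes to an arbitrary prescribed intrinsic dimension by embedding it into coordinates $\curly*{d_\star, d_\star+1}$ and padding the remaining coordinates with standard basis directions, perturbed slightly in coordinate $d_\star$ to preserve uniqueness of $z_\star$.
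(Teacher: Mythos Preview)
Your construction is correct and the computations check out. Conceptually your instance rests on exactly the same mechanism as the paper's --- a single ``sneaky'' near-optimal arm whose difference vector has a component of order $\delta$ in the $(d_\star{+}1)$-th coordinate but gap of order $\delta^2$, so that dividing by the squared gap blows up $\rho^\star_{d_\star+1}$ while the un-normalized $\iota^\star$ hardly moves. The differences are in packaging: the paper works with $\cX=\cZ=\{e_1,\dots,e_{d_\star},(1-\epsilon)e_{d_\star}+e_{d_\star+1}\}$, bounds $\rho^\star_{d_\star+1}$ via the general inequality $\rho^\star \ge c_2/(c_1\Delta_{\min}^2)$ (\cref{lm:psi_ub_lb}) rather than direct computation, and needs $d_\star\ge 11$ before the ratio $\iota^\star_{d_\star+1}/\iota^\star_{d_\star}$ drops below $2$. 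Your transductive two-dimensional example is more elementary --- everything reduces to a scalar simplex identity and the bound $\iota^\star_2\le 2\iota^\star_1$ holds already at $d_\star=1$ --- at the cost of taking $\cZ\neq\cX$, which is fine for the proposition as stated but slightly weaker as a cautionary example for the $\cX=\cZ$ linear-bandit setting emphasized elsewhere in the paper.
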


One may attempt to solve the model selection problem with a standard doubling trick over dimension, i.e., truncating the feature representations at dimension $d_i = 2^i$ for $i \leq \ceil*{\log_2 D}$ and gradually exploring models with increasing dimension. This approach, however, is directly ruled out by \cref{prop:rho_star_different_d} since such doubling trick could end up with solving a problem with a dimension $d^\prime \leq 2 d_\star$ yet $\rho_{d^\prime}^\star \gg \rho_{d_\star}^\star$. Although doubling trick over dimensions is commonly used to provide \emph{worst-case} guarantees in regret minimization settings \citep{pacchiano2020model, zhu2021pareto}, we emphasize here that matching \emph{instance-dependent} complexities is important in pure exploration setting \citep{soare2014best, fiez2019sequential, katz2020empirical}. Thus, new techniques need to be developed. \cref{prop:rho_star_different_d} also implies that trying to infer the value of $\rho^\star_{d}$ from  $\iota^\star_d$ can be quite misleading. And thus conducting a doubling trick over $\iota^\star_d$ (or an upper bound of it) is likely to fail as well.

\paragraph{Importance of Model Selection.} \cref{prop:rho_star_different_d} also illustrates the importance and necessity of conducting model selection in pure exploration linear bandits. Consider the hard instance used in constructed in \cref{prop:rho_star_different_d} and set $D= d_\star + 1$. All existing algorithms \citep{soare2014best, fiez2019sequential, degenne2019pure, katz2020empirical} that directly work with the \emph{given} feature representation in $\R^D$ end up with a complexity measure scales with $\rho^\star_{D}$, which could be arbitrarily large than the true complexity measure $\rho_{d_\star}^\star$ and even become vacuous (by sending $\gamma \rightarrow \infty$).

\paragraph{Our Approaches.} In this paper, we design a more sophisticated doubling scheme over a two-dimensional grid corresponding to the number of elimination steps and the richest hypothesis class considered at each step. We design subroutines for both fixed confidence and fixed budget settings. Our algorithms define a new optimization problem based on experimental design that leverages the geometry of the action set to efficiently identify a near-optimal hypothesis class. Our fixed budget algorithm additionally uses a novel application of a selection-validation trick in bandits. Our guarantees are with respect to the true instance-dependent complexity measure $\rho_{d_\star}^\star$.

\section{FIXED CONFIDENCE SETTING}
\label{sec:fixed_confidence}

We present our main algorithm (\cref{alg:doubling_fixed_confidence}) for the fixed confidence setting in this section.
\cref{alg:doubling_fixed_confidence} invokes \gemsc (\cref{alg:subroutine_fixed_confidence}) as subroutines and starts to output the optimal arm after $\widetilde O(\rho^\star_{d_\star} + d_\star)$ samples.
Our sample complexity matches, up to an additive $d_\star$ term and logarithmic factors, the strong baseline developed in \cref{thm:lower_bound_non_interactive}.

We first introduce the subroutine \gemsc, which runs for $n$ rounds and takes (roughly) $B$ samples per-round.
\gemsc is built on \rage \citep{fiez2019sequential}, a standard linear bandit pure exploration algorithm works in the ambient space $\R^D$.
The key innovation of \gemsc lies in \emph{adaptive} hypothesis class selection at each round (i.e., selecting $d_k$), which allows us to adapt to the instrinsic dimension $d_\star$.
After selecting the working dimension $d_k$ at round $k$, \gemsc allocates samples based on optimal design (in $\R^{d_k}$); it then eliminate sub-optimal arms based on the estimated rewards constructed using least squares.
Following \citet{fiez2019sequential}, we use a rounding procedure $\round (\lambda,N,d,\zeta)$ to round a continuous experimental design $\lambda \in \simp_{\cX}$ into integer allocations over actions. 
We use $r_d(\zeta)$ to denote the number of samples needed for such rounding in $\R^d$ with approximation factor $\zeta$. 
One can choose $r_d(\zeta) = (d^2+d+2)/\zeta$ \citep{pukelsheim2006optimal, fiez2019sequential} or $r_d(\zeta) = 180d/\zeta^2$ \citep{allen2020near}.
We choose $\zeta$ as a constant throughout the paper, e.g., $\zeta = 1$.
When $N \geq r_d(\zeta)$, there exist computationally efficient rounding procedures that output an allocation $\curly{x_1, x_2, \dots, x_N}$ satisfying
\begin{align}
    & \max_{y \in \cY(\psi_d(\cZ))}  \norm{y}^2_{\paren{\sum_{i=1}^N \psi_d(x_i) \psi_d(x_i)^{\top}}^{-1}} \leq  \nonumber \\
    &(1+\zeta) \max_{y \in \cY(\psi_d(\cZ))} \norm{y}^2_{\paren{\sum_{x \in \cX} \lambda_x \psi_d(x) \psi_d(x)^\top}^{-1}} / N. \label{eq:rounding}
\end{align}

\begin{algorithm}[]
	\caption{\gemsc Gap Elimination with Model Selection (Fixed Confidence)}
	\label{alg:subroutine_fixed_confidence} 
	\renewcommand{\algorithmicrequire}{\textbf{Input:}}
	\renewcommand{\algorithmicensure}{\textbf{Output:}}
	\begin{algorithmic}[1]
		\REQUIRE Number of iterations $n$, budget for dimension selection $B$ and confidence parameter $\delta$.
		\STATE Set $\widehat \cS_1 = \cZ$.
		\FOR {$k = 1, 2, \dots, n$}
		\STATE Set $\delta_k = \delta/k^2$.
		\STATE Define $g_k(d) \ldef \max \curly*{ 2^{2k} \, \iota(\cY(\psi_d(\widehat \cS_k))), r_d(\zeta) }$. 
		\STATE Get $d_k = \text{\opt}(B, D, g_k(\cdot))$, where $d_k \leq D$ is largest dimension such that $g_k(d_k) \leq B$ (see \cref{eq:opt_d_selection} for the detailed optimization problem); set $\lambda_k$ be the optimal design of the optimization problem\\
		$\inf_{\lambda \in \simp_{\cX}} \sup_{z, z^\prime \in \widehat \cS_k} \norm*{\psi_{d_k}(z)-\psi_{d_k}(z^\prime)}^2_{A_{d_k}(\lambda)^{-1}}$;\\
		set $ N_k = \ceil*{g(d_k) 2(1 + \zeta)  \log(\abs*{\widehat \cS_k}^2/\delta_k)}.$
		\STATE Get allocation \\
		$\curly*{x_1, \ldots, x_{N_k} } = \text{\round}(\lambda_k,N_k, d_k, \zeta)$.
		\STATE Pull arms $\curly*{x_1, \ldots, x_{N_k}} $ and receive rewards $\curly*{r_1, \ldots, r_{N_k}}$.
        \STATE Set $\widehat{\theta}_k = A_k^{-1} b_k \in \R^{d_k}$, \\
        where $A_k = \sum_{i=1}^{N_k} \psi_{d_k}(x_i) \psi_{d_k}(x_i)^\top$, \\
        and $b_k =  \sum_{i=1}^{N_k} \psi_{d_k}(x_i) b_i$.
        \STATE Set 
        $\widehat \cS_{k+1} = \widehat \cS_k \setminus \{z \in \widehat \cS_k : 
        \exists z^\prime \text{ s.t. } \ang*{\widehat{\theta}_k, \psi_{d_k}(z^\prime) - \psi_{d_k}(z) } \geq \omega(z^\prime, z) \}$, where $\omega(z^\prime, z) \ldef \norm{\psi_{d_k}(z^\prime) - \psi_{d_k}(z)}_{A_k^{-1}} \sqrt{2 \log \paren*{ {\abs*{\widehat \cS_k}^2}/{\delta_k} }}$.
		\ENDFOR 
		\ENSURE Set of uneliminated arms $\widehat \cS_{n+1}$.
	\end{algorithmic}
\end{algorithm}

We now discuss 
the adaptive selection of hypothesis class, which is achieved through a new optimization problem: 
At round $k$, $d_k \in [D]$ is selected as the largest dimension such that the value of an experimental design is no larger than the fixed selection budget $B$, i.e.,
\begin{align}
    & \max d  \label{eq:opt_d_selection} \\
    & \text{ s.t. } d \in [D], \nonumber \\
    & \qquad \max \curly*{2^{2k} \cdot \inf_{\lambda \in \bLambda_{\cX}} \sup_{y \in \cY(\psi_d(\widehat \cS_k))} \norm{y}^2_{A_d(\lambda)^{-1}} , r_d(\zeta) }\leq B. \nonumber
\end{align}
The experimental design leverages the geometry of the \emph{uneliminated} set of arms. Intuitively, the algorithm is selecting the \emph{richest} hypothesis class that still allows the learner to improve its estimates of the gaps by a factor of 2 using (roughly) $B$ samples. 
When the budget for dimension selection $B$ is large enough, \gemsc operates on well-specified linear bandits (i.e., using $d_k \geq d_\star$) at all rounds, guaranteeing that the output set of arms are $(2^{1-n})$-optimal. The next lemma provides guarantees for \gemsc.

\begin{restatable}{lemma}{lmSubroutineFixedConfidence}
\label{lm:subroutine_fixed_confidence}
Suppose $B \geq \max \curly*{64 \rho_{d_\star}^\star, r_{d_\star}(\zeta)}$. With probability at least $1-\delta$, \gemsc outputs a set of arms $\widehat \cS_{n+1}$ such that $\Delta_z < 2^{1-n}$ for any $z \in \widehat \cS_{n+1}$.
\end{restatable}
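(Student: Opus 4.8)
The plan is to condition on a high-probability ``good'' concentration event and then argue by induction over the iteration index $k$ that the uneliminated set shrinks at the correct geometric rate, namely that $\widehat \cS_k \subseteq \cS_k$ and $z_\star \in \widehat \cS_k$ hold for every $k$. Since $\cS_{n+1} = \curly*{z: \Delta_z < 2^{1-n}}$, the case $k = n+1$ is exactly the claim, and the base case $\widehat \cS_1 = \cZ = \cS_1$ (with $z_\star \in \cZ$) is immediate.

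First I would set up the good event $\cE$. At iteration $k$ the quantities $\widehat \cS_k, d_k, \lambda_k$ and the allocation are measurable with respect to the history before the fresh $N_k$ pulls, so conditional on that history, and on the model being well specified (verified below), the least-squares estimator satisfies $\widehat\theta_k - \psi_{d_k}(\theta_\star) = A_k^{-1}\sum_i \psi_{d_k}(x_i)\xi_i$, and $\ang*{\widehat\theta_k - \psi_{d_k}(\theta_\star), y}$ is mean-zero sub-Gaussian with variance proxy $\norm{y}^2_{A_k^{-1}}$. A one-step sub-Gaussian tail bound and a union bound over the at most $\abs*{\widehat \cS_k}^2$ directions $y = \psi_{d_k}(z') - \psi_{d_k}(z)$ show that, off an event of probability $O(\delta_k)$, $\abs*{\ang*{\widehat\theta_k - \psi_{d_k}(\theta_\star), \psi_{d_k}(z') - \psi_{d_k}(z)}} \le \omega(z', z)$ for all $z, z' \in \widehat \cS_k$; summing $\delta_k = \delta/k^2$ over $k$ (absorbing the harmless constant from $\sum_k k^{-2}$) gives $\P(\cE) \ge 1 - \delta$, and the rest of the argument is deterministic on $\cE$.

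The crux of the induction is the well-specification step: assuming $\widehat \cS_k \subseteq \cS_k$, I must show that $d_\star$ is feasible for the selection rule \eqref{eq:opt_d_selection}, so that $d_k \ge d_\star$ (as $d_k$ is the largest feasible dimension) and hence $\ang*{\theta_\star, x} = \ang*{\psi_{d_k}(\theta_\star), \psi_{d_k}(x)}$ because $\theta_\star \in \Theta_{d_\star} \subseteq \Theta_{d_k}$. Since $B \ge r_{d_\star}(\zeta)$ handles the rounding term, it suffices to bound $2^{2k}\,\iota(\cY(\psi_{d_\star}(\widehat \cS_k))) \le B$; this is where $B \ge 64\rho^\star_{d_\star}$ enters. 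Taking $\lambda$ to achieve $\rho^\star_{d_\star}$, every $z \in \widehat \cS_k$ obeys $\Delta_z \le 2^{2-k}$ by the inductive hypothesis, so $\norm{\psi_{d_\star}(z_\star) - \psi_{d_\star}(z)}_{A_{d_\star}(\lambda)^{-1}} \le \sqrt{\rho^\star_{d_\star}}\,\Delta_z \le \sqrt{\rho^\star_{d_\star}}\,2^{2-k}$; a triangle inequality through $z_\star$ bounds every difference $\psi_{d_\star}(z) - \psi_{d_\star}(z')$ in this norm by $2\sqrt{\rho^\star_{d_\star}}\,2^{2-k}$, whence $\iota(\cY(\psi_{d_\star}(\widehat \cS_k))) \le 64\,\rho^\star_{d_\star}\,2^{-2k}$ and $2^{2k}\iota \le 64\rho^\star_{d_\star} \le B$. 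I expect this step---converting the gap-normalized complexity $\rho^\star_{d_\star}$ into a uniform bound on the un-normalized design value $\iota$ over the shrunken set $\widehat\cS_k$---to be the main obstacle, since it is precisely what guarantees the budget $B$ keeps the selected model well specified in every iteration.

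Finally, on $\cE$ with $d_k \ge d_\star$, I would quantify the elimination. Note $N_k \ge r_{d_k}(\zeta)$ by construction, so \eqref{eq:rounding} applies to $\lambda_k$; combined with $N_k \ge 2^{2k}\iota(\cY(\psi_{d_k}(\widehat \cS_k)))\cdot 2(1+\zeta)\log(\abs*{\widehat \cS_k}^2/\delta_k)$ this yields $\norm{y}^2_{A_k^{-1}} \le 2^{-2k-1}/\log(\abs*{\widehat\cS_k}^2/\delta_k)$ for every $y \in \cY(\psi_{d_k}(\widehat \cS_k))$, hence $\omega(z_\star, z) \le 2^{-k}$ for all $z \in \widehat \cS_k$. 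Using $z' = z_\star$ (available since $z_\star \in \widehat \cS_k$) and the good-event bound, $\ang*{\widehat\theta_k, \psi_{d_k}(z_\star) - \psi_{d_k}(z)} \ge \Delta_z - \omega(z_\star, z) \ge \Delta_z - 2^{-k}$, so any $z$ with $\Delta_z \ge 2^{1-k}$ triggers the elimination rule and is removed, giving $\widehat \cS_{k+1} \subseteq \cS_{k+1}$. Symmetrically, for any $z' \ne z_\star$ the true gap forces $\ang*{\widehat\theta_k, \psi_{d_k}(z') - \psi_{d_k}(z_\star)} \le -\Delta_{z'} + \omega(z', z_\star) < \omega(z', z_\star)$, so $z_\star$ is never eliminated and $z_\star \in \widehat \cS_{k+1}$. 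This closes the induction; only routine constant-tracking in the concentration and rounding steps remains.
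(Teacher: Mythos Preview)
Your proposal is correct and follows essentially the same approach as the paper: induction on the event $\{z_\star \in \widehat\cS_k \subseteq \cS_k\}$, the well-specification step $d_k \ge d_\star$ via the bound $2^{2k}\iota(\cY(\psi_{d_\star}(\cS_k))) \le 64\rho^\star_{d_\star}$, sub-Gaussian concentration on the directions, and the elimination argument. The paper factors your direct triangle-inequality bound on $\iota$ into a standalone lemma, and for the total error probability it uses the product identity $\prod_{k\ge 1}(1-\delta/k^2)=\sin(\pi\delta)/(\pi\delta)\ge 1-\delta$ rather than a union bound, which is what avoids the $\pi^2/6$ constant you flagged as ``harmless''; otherwise the arguments coincide.
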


\begin{algorithm}[]
	\caption{Adaptive Strategy for Model Selection (Fixed Confidence)}
	\label{alg:doubling_fixed_confidence} 
	\renewcommand{\algorithmicrequire}{\textbf{Input:}}
	\renewcommand{\algorithmicensure}{\textbf{Output:}}
	\newcommand{\algorithmicbreak}{\textbf{break}}
    \newcommand{\BREAK}{\STATE \algorithmicbreak}
	\begin{algorithmic}[1]
		\REQUIRE Confidence parameter $\delta$.
		\STATE Randomly select a $\widehat z_\star \in \cZ$ as the recommendation for the optimal arm.
		\FOR {$\ell = 1, 2, \dots$}
		\STATE Set $\gamma_\ell = 2^\ell$ and $\delta_\ell = \delta/(2\ell^3)$.
		    \FOR {$i = 1, 2, \dots, \ell$}
		    \STATE Set $n_i  =2^i$, $B_i = \gamma_{\ell}/n_i= 2^{\ell -i}$, and \\
		    get $\widehat \cS_i = \text{\gemsc}(n_i, B_i, \delta_\ell)$.
		    \IF{$\widehat \cS_i = \curly*{\widehat z}$ is a singleton set}
		    \STATE Update the recommendation $\widehat z_\star = \widehat z$. 
		    \BREAK  \, (the inner for loop over $i$)
		    \ENDIF
		    \ENDFOR
		\ENDFOR 
	\end{algorithmic}
\end{algorithm}

We present our main algorithm for model selection in \cref{alg:doubling_fixed_confidence}, which loops over an iterate $\ell$ with roughly geometrically increasing budget $\gamma_\ell = \ell 2^\ell$. Within each iteration $\ell$, \cref{alg:doubling_fixed_confidence} invokes \gemsc $\ell$ times with different configurations $(n_i, B_i)$: $n_i$ is viewed as a guess for the unknown quantity $\log_2(1/\Delta_{\min})$; and $B_i$ is viewed as a guess of $\rho^\star_{d_\star}$, which is then used to determine the adaptive selection hypothesis class. The configurations $\curly*{(n_i, B_i)}_{i=1}^\ell$ are chosen as the diagonal of a two dimensional gird over $n_i$ and $B_i$. 
Within each iteration $\ell$, the recommendation $\widehat z_\star$ is updated as the arm contained in the \emph{first} singleton set returned (if any). Since $B_i$ is chosen in a decreasing order, we are recommending the arm selected from the richest hypothesis class that terminates recommending a single arm. The singleton is guaranteed to contain the optimal arm once a rich enough hypothesis class is considered.
We provide the formal guarantees as follows.\looseness=-1

\begin{restatable}{theorem}{thmDoublingFixedConfidence}
\label{thm:doubling_fixed_confidence}
Let $\tau_\star = \log_2(4/\Delta_{\min}) \max \curly*{\rho^\star_{d_\star}, r_{d_\star}(\zeta)}$.
With probability at least $1-\delta$, \cref{alg:doubling_fixed_confidence} starts to output the optimal arm within iteration $\ell_\star = O( \log_2(\tau_\star))$, and takes at most $N = O \paren{ \tau_\star \log_2(\tau_\star) \log(\abs{\cZ} \log_2(\tau_\star)/\delta) }$ samples.
\end{restatable}

The sample complexity in \cref{thm:doubling_fixed_confidence} is analyzed in an unverifiable way:
\cref{alg:doubling_fixed_confidence} starts to output the optimal arm after $N$ samples, but it does not stop its sampling process.
Nevertheless, up to a rounding-related term and other logarithmic factors,\footnote{We refer readers to \citet{katz2020true} for detailed discussion on unverifiable sample complexity. The rounding term $r_{d_\star}(\zeta) = O(d_\star/\zeta^2)$ commonly appears in the linear bandit pure exploration literature \citep{fiez2019sequential, katz2020empirical}. Although we do not focus on optimizing logarithmic terms in this paper, e.g., the $\log(\abs{\cZ})$ term, our techniques can be extended to address this by combining techniques developed in \citet{katz2020empirical}.} the unverifiable sample complexity matches the non-interactive lower bound developed in \cref{thm:lower_bound_non_interactive}.
The non-interactive lower bound serves as a fairly strong baseline since the non-interactive learner is allowed to sample \emph{with the knowledge of $\theta_\star$}.
Computationally, \cref{alg:doubling_fixed_confidence} starts to output the optimal arm after iteration $\ell_\star$, with at most $O(\ell_\star^2)$ subroutines (\cref{alg:subroutine_fixed_confidence}) invoked.
At each iteration $\ell \leq \ell_\star$, \cref{alg:subroutine_fixed_confidence} is invoked with configurations $n_i$, $B_i$ such that $n_i B_i = 2^\ell \leq 2^{\ell_\star}$ (note that $\ell_\star$ is of logarithmic order).
Up to a model selection step (i.e., selecting $d_k$),
the per-round computational complexity of \cref{alg:subroutine_fixed_confidence} is similar to the complexity of the standard linear bandit algorithm \rage.

\paragraph{Why Not Recommend Arm Verifiably?} We provide a simple example to demonstrate that outputting the estimated best arm (using least squares) before examining full vectors in $\R^D$ can lead to incorrect answers, indicating that verifiable sample complexity, i.e., the number of samples required to terminate the game with a recommendation, scales with $D$ ($\rho^\star_D$). We consider a linear bandit problem with action set $\cX = \cZ = \curly*{e_i}_{i=1}^{D}$. We consider two cases: either (1) $\theta_\star  \ldef [1, 0, \dots, 0, 0]^\top \in \R^D$ with $z_\star = e_1$; or (2) $\theta_\star  \ldef [1, 0, \dots, 0, 2]^\top \in \R^D$ with $z_\star = e_D$. We assume \emph{deterministic} feedback in this example. Let $n_x \geq 1$ denote the number of pulls on arm $x \in \cX$. In both cases, for any $d < D$, the design matrix $\sum_{x \in \cX} n_x \psi_d(x) \psi_d(x)^\top$ is diagonal with entries $(n_{e_i})_{i=1}^{d}$, and the least squares estimator is $\widehat \theta_d = e_1 \in \R^d$. As a result, $e_1$ will be recommended as the best arm: the recommendation is correct in the first case but incorrect in the second case. Essentially, one cannot rule out the possibility that $d_\star$ is equal to $D$ without examining full vectors in $\R^D$. 
Verifiably identifying the best arm in $\R^D$ (with noisy feedback) takes $\widetilde \Omega(\rho^\star_D)$ samples \citep{fiez2019sequential}.

\section{FIXED BUDGET SETTING}
\label{sec:fixed_budget}

We study the fixed budget setting with $\cZ \subseteq \cX$, which includes the linear bandit problem $\cZ = \cX$ as a special case. 
Similar to fixed confidence setting, we develop a main algorithm (\cref{alg:doubling_fixed_budget}) that invokes a base algorithm as subroutines (\gemsb, \cref{alg:subroutine_fixed_budget}). 
\cref{alg:doubling_fixed_budget} achieves an error probability $\widetilde O(\exp(-T/\rho^\star_{d_\star}))$, which, again, matches the strong baseline developed in \cref{thm:lower_bound_non_interactive}.

\begin{algorithm}[]
    \caption{\gemsb Gap Elimination with Model Selection (Fixed Budget)}
    \label{alg:subroutine_fixed_budget} 
	\renewcommand{\algorithmicrequire}{\textbf{Input:}}
	\renewcommand{\algorithmicensure}{\textbf{Output:}}
	\begin{algorithmic}[1]
	\REQUIRE Total budget $T$ (allowing non-integer input), number of rounds $n$, budget for dimension selection $B$.
	\STATE Set $T^\prime = \floor*{T/n}$, $\widehat \cS_1 = \cZ$. Set $\widetilde D$ as the largest dimension that ensures rounding with $T^\prime$ samples, i.e., $\widetilde D = \text{\opt}(T^\prime, D, f(\cdot))$, where $f(d) = r_d(\zeta)$.
	\FOR {$k = 1,  \dots, n$}
	\STATE Define function $g_k(d) \ldef 2^{2k} \, \iota(\cY(\psi_d(\widehat \cS_k)))$. 
	\STATE Get $d_{k} = \text{\opt}(B, \widetilde D, g_k(\cdot) )$, where where $d_k \leq \widetilde  D$ is largest dimension such that $g_k(d_k) \leq B$ (similar to the optimization problem in \cref{eq:opt_d_selection}). Set $\lambda_k$ be the optimal design of the optimization problem \\
	$\inf_{\lambda \in \simp_{\cX}} \sup_{z, z^\prime \in \widehat \cS_k} \norm*{\psi_{d_k}(z)-\psi_{d_k}(z^\prime)}^2_{A_{d_k}(\lambda)^{-1}}$.
	\STATE Get allocations \\
	$\{x_1, \ldots, x_{T^\prime} \} = \text{\round} (\lambda_{k},T^\prime,d_k, \zeta)$.
	\STATE Pull arms $\curly*{x_1, \ldots, x_{T^\prime}} $ and receive rewards $\curly*{r_1, \ldots, r_{T^\prime}}$.
	\STATE Set $\widehat{\theta}_k = A_k^{-1} b_k \in \R^{d_k}$, \\
	where $A_k = \sum_{i=1}^{N_k} \psi_{d_k}(x_i) \psi_{d_k}(x_i)^\top$,\\
	and $b_k =  \sum_{i=1}^{N_k} \psi_{d_k}(x_i) b_i$.
    \STATE Set $\widehat \cS_{k+1} = \widehat \cS_k \setminus \{z \in \widehat \cS_k : \exists z^\prime \text{ s.t. } \ang*{\widehat{\theta}_k, \psi_{d_k}(z^\prime) - \psi_{d_k}(z) } \geq 2^{-k} \}$.
	\ENDFOR
	\ENSURE Any uneliminated arm $\widehat z_\star \in \widehat \cS_{n+1}$.
	\end{algorithmic}
\end{algorithm}

The subroutine \gemsb takes sample budget $T$, number of iterations $n$ and dimension selection budget $B$ as input, and outputs an (arbitrary) uneliminated arm after $n$ iterations. As in the fixed confidence setting, \gemsb performs adaptive selection of the hypothesis class through an optimization problem defined similar to the one in \cref{eq:opt_d_selection}. The main differences from the fixed confidence subroutine is as follows: the selection budget $B$ is only used for dimension selection, and the number of samples allocated per iteration is determined as $\floor*{T/n}$. \gemsb is guaranteed to output the optimal arm with probability $1 - \widetilde O (\exp(- T/\rho^\star_{d_\star}))$ when the selection budget $B$ is selected properly, as detailed in \cref{lm:subroutine_fixed_budget}.

\begin{restatable}{lemma}{lmSubroutineFixedBudget}
\label{lm:subroutine_fixed_budget}
Suppose $64 \rho_{d_\star}^\star \leq B \leq 128 \rho_{d_\star}^\star $ and $T/n \geq r_{d_\star}(\zeta) + 1$. \cref{alg:subroutine_fixed_budget} outputs an arm $\widehat z_\star$ such that $\Delta_{\widehat z_\star} < 2^{1-n}$ with probability at least
\begin{align*}
    1- n \abs*{ \cZ }^2 \exp \paren{ - { T}/{ 640 \, n \, \rho_{d_\star}^\star } }.
\end{align*}
\end{restatable}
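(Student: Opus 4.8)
The plan is to run the standard gap-elimination induction, but with two model-selection-specific ingredients layered on top: (i) showing that the dimension $d_k$ chosen in each round is \emph{well-specified}, i.e. $d_k \geq d_\star$, and (ii) controlling the per-round estimation error through the rounding guarantee together with the dimension-selection rule, so that a fresh concentration bound applies each round. I would maintain, on a suitable good event, the invariant that $z_\star \in \widehat\cS_k$ and $\widehat\cS_k \subseteq \cS_k$ for every $k$, with base case $\widehat\cS_1 = \cZ = \cS_1$.

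\textbf{Well-specification.} I first argue $d_k \geq d_\star$ deterministically given the invariant. Since $T/n \geq r_{d_\star}(\zeta)+1$ forces $T' = \floor*{T/n} \geq r_{d_\star}(\zeta)$, dimension $d_\star$ is feasible for the $\opt(T',D,f)$ problem, so $\widetilde D \geq d_\star$ and rounding at any $d_k \leq \widetilde D$ is valid (using monotonicity of $r_d(\zeta)$). Next, from $\widehat\cS_k \subseteq \cS_k$ every $z,z' \in \widehat\cS_k$ obey $\Delta_z,\Delta_{z'} < 4\cdot 2^{-k}$; evaluating $\iota(\cY(\psi_{d_\star}(\widehat\cS_k)))$ at the $\rho^\star_{d_\star}$-optimal design and routing through $z_\star$ by the triangle inequality in $\norm{\cdot}_{A_{d_\star}(\lambda)^{-1}}$ gives $\norm{\psi_{d_\star}(z)-\psi_{d_\star}(z')}^2_{A_{d_\star}(\lambda)^{-1}} \leq \rho^\star_{d_\star}(\Delta_z+\Delta_{z'})^2 < 64\rho^\star_{d_\star}2^{-2k}$, so $g_k(d_\star) \leq 64\rho^\star_{d_\star} \leq B$ (this is where $B \geq 64\rho^\star_{d_\star}$ enters). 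Hence $d_\star$ is feasible for the dimension-selection problem and the largest feasible dimension satisfies $d_k \geq d_\star$, which makes $\psi_{d_k}(\theta_\star)$ reproduce all gaps: $\ang*{\psi_{d_k}(\theta_\star),\psi_{d_k}(z')-\psi_{d_k}(z)} = h(z')-h(z)$.

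\textbf{Concentration and elimination.} The selection rule forces $g_k(d_k) \leq B$, i.e. $\iota(\cY(\psi_{d_k}(\widehat\cS_k))) \leq B2^{-2k} \leq 128\rho^\star_{d_\star}2^{-2k}$ (here $B \leq 128\rho^\star_{d_\star}$ enters). Applying the rounding guarantee \eqref{eq:rounding} to $\widehat\cS_k$ yields $\norm{\psi_{d_k}(z')-\psi_{d_k}(z)}^2_{A_k^{-1}} \leq (1+\zeta)\,128\rho^\star_{d_\star}2^{-2k}/T'$ for all $z,z'\in\widehat\cS_k$. Conditioning on the history up to the start of round $k$ (which fixes $\widehat\cS_k$, $d_k$, and the allocation), the error $\ang*{\widehat\theta_k-\psi_{d_k}(\theta_\star),y}$ is $\norm{y}_{A_k^{-1}}$-sub-Gaussian, so a one-sided Chernoff bound gives $\P(\ang*{\widehat\theta_k-\psi_{d_k}(\theta_\star),\psi_{d_k}(z)-\psi_{d_k}(z')}\geq 2^{-k}) \leq \exp(-T'/(320\rho^\star_{d_\star}))$, using $\zeta\leq 1/4$. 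On the complementary good event, only one-sided bounds are needed: for $z=z_\star$ the term $-\Delta_{z'}\leq 0$ prevents elimination of $z_\star$, while any $z$ with $\Delta_z \geq 2\cdot 2^{-k}$ is eliminated by testing against $z'=z_\star\in\widehat\cS_k$. Thus $z_\star\in\widehat\cS_{k+1}$ and $\widehat\cS_{k+1}\subseteq\cS_{k+1}$, closing the induction; after $n$ rounds any output arm satisfies $\Delta_{\widehat z_\star} < 4\cdot 2^{-(n+1)} = 2^{1-n}$.

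\textbf{Failure probability and main obstacle.} Since only one-sided bounds over ordered pairs are required, a union bound over the at most $\abs*{\cZ}^2$ ordered pairs per round and over the $n$ rounds gives total failure probability at most $n\abs*{\cZ}^2\exp(-T'/(320\rho^\star_{d_\star}))$; as $T/n\geq 2$ implies $T'\geq T/(2n)$, this is at most $n\abs*{\cZ}^2\exp(-T/(640\,n\,\rho^\star_{d_\star}))$, matching the claim. The main obstacle is the well-specification step: one must certify $d_k\geq d_\star$ while simultaneously keeping $\iota(\cY(\psi_{d_k}(\widehat\cS_k)))$ small, which requires threading the inductive gap control through the \emph{definition} of $\rho^\star_{d_\star}$ (the triangle-inequality routing through $z_\star$), and must account for $\widehat\cS_k$ and $d_k$ being data-dependent, which I handle by conditioning on the past before invoking concentration in each round.
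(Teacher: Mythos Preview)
Your proposal is correct and follows essentially the same approach as the paper: the same induction on the event $\{z_\star \in \widehat\cS_k \subseteq \cS_k\}$, the same well-specification argument $d_k \geq d_\star$ via $g_k(d_\star) \leq 64\rho^\star_{d_\star} \leq B$, the same per-round concentration yielding $\exp(-T'/(320\rho^\star_{d_\star}))$, and the same union bound. The only cosmetic difference is that you inline the bound $2^{2k}\iota(\cY(\psi_{d_\star}(\widehat\cS_k))) \leq 64\rho^\star_{d_\star}$ directly via the triangle-inequality routing through $z_\star$, whereas the paper packages this into a separate supporting lemma (\cref{lm:rho_stratified}); your use of one-sided tails over ordered pairs in place of two-sided tails over unordered pairs is an immaterial variation that yields the identical $n\abs{\cZ}^2$ factor.
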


\begin{algorithm}[]
	\caption{Adaptive Strategy for Model Selection (Fixed Budget)}
	\label{alg:doubling_fixed_budget} 
	\renewcommand{\algorithmicrequire}{\textbf{Input:}}
	\renewcommand{\algorithmicensure}{\textbf{Output:}}
	\begin{algorithmic}[1]
		\REQUIRE Total budget $2T$.
		\STATE \textbf{Step 1: Selection.} Initialize an empty selection set $\cA = \emptyset$.
		\STATE Set $p = \floor*{W(T)}$ and $T^\prime = {T/p}$.
		\FOR {$i = 1, \dots, p$}
		\STATE Set $B_i = 2^i$, $q_i = \floor*{W(T^\prime/B_i)}$ and $T^{\prime \prime} = {T^\prime/q_i}$.
		\FOR {$j = 1,\dots, q_i$}
		\STATE Set $n_j = 2^j$. \\
		Get $\widehat z_\star^{ij} = \text{\gemsb}(T^{\prime \prime}, n_j, B_i)$ and insert $\widehat z_\star^{ij}$ into the pre-selection set $\cA$.
		\ENDFOR
		\ENDFOR
		\STATE \textbf{Step 2: Validation.} Pull each arm in the pre-selection set $\cA$ exactly $\floor*{T/\abs*{\cA}}$ times.
		\ENSURE Output arm $\widehat z_\star$ with the highest empirical reward from the validation step.
	\end{algorithmic}
\end{algorithm}

Our main algorithm for the fixed budget setting is introduced in \cref{alg:doubling_fixed_budget}. \cref{alg:doubling_fixed_budget} consists of two phases: a pre-selection phase and a validation phase. The pre-selection phase collects a set of potentially optimal arms, selected by subroutines, and the validation phase examines the optimality of the collected arms. We provide \cref{alg:doubling_fixed_budget} with $2T$ total sample budget, and split the budget equally for each phase. At least one good subroutine is guaranteed to be invoked in the pre-selection phase (for sufficiently large $T$). The validation step focuses on identifying the best arm among the pre-selected $O((\log_2 T)^2)$ candidates (as explained in the next paragraph). Our selection-validation trick can be viewed as a \emph{dimension-reduction} technique: we convert a linear bandit problem in $\R^D$ (with unknown $d_\star$) to another linear bandit problem in $\R^{O((\log_2 T)^2)}$,\footnote{Technically, we treat the problem as a standard multi-armed bandit problem with $O((\log_2 T)^2)$ arms, which is a special case of a linear bandit problem in $\R^{O((\log_2 T)^2)}$.} i.e., a problem whose dimension is only polylogarithmic in the budget $T$. 

For non-negative variable $p$, we use $p = W(T)$ to represent the solution of equation $T = p \cdot 2^p$. One can see that $W(T) \leq \log_2 T$. As a result, at most $(\log_2 T)^2$ subroutines are invoked with different configurations of $\curly*{(T^{\prime \prime}, n_j, B_i)}$. The use of $W(\cdot)$ is to make sure that $T^{\prime \prime} \geq n_j B_i $ for all subroutines invoked. This provides more efficient use of budget since the error probability upper bound guaranteed by \gemsb scales as $\widetilde O (\exp(- T^{\prime \prime} / n_j B_i ))$.

\begin{restatable}{theorem}{thmDoublingFixedBudget}
\label{thm:doubling_fixed_budget}
Suppose $\cZ \subseteq \cX$. If $T = \widetilde \Omega \paren{ \log_2(1/\Delta_{\min}) \max \curly*{\rho_{d_\star}^
    \star, r_{d_\star}(\zeta)} }$, then \cref{alg:doubling_fixed_budget} outputs the optimal arm with error probability at most
    \begin{align*}
    &\log_2 (4/\Delta_{\min}) \abs*{ \cZ }^2 \exp \paren{ - \frac{ T}{ 1024 \, \log_2 (4/\Delta_{\min}) \, \rho_{d_\star}^\star } } \\
    & \quad + 2 (\log_2 T)^2 \exp \paren{ - \frac{T}{8 (\log_2 T)^2/ \Delta_{\min}^2} }.
\end{align*}
Furthermore, if there exist universal constants such that $\max_{x \in \cX} \norm{\psi_{d_\star}(x)}^2 \leq c_1$ and $\min_{z \in \cZ} \norm{\psi_{d_\star}(z_\star) - \psi_{d_\star}(z)}^2 \geq c_2$, the error probability is upper bounded by
    \begin{align*}
    O \Bigg(& \max \curly*{ \log_2(1/\Delta_{\min}) \abs*{\cZ}^2,  (\log_2 T)^2 } \\
    & \times  \exp \paren{ - \frac{c_2 T}{\max \curly*{ \log_2(1/\Delta_{\min}), (\log_2 T)^2 } c_1 \rho_{d_\star}^\star} }  \Bigg).
\end{align*}
\end{restatable}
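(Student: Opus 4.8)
The plan is to split the failure event of \cref{alg:doubling_fixed_budget} into two parts and union bound: (i) the pre-selection set $\cA$ fails to contain the optimal arm $z_\star$, and (ii) $z_\star\in\cA$ but the validation step returns some other arm. The first event will be controlled by \cref{lm:subroutine_fixed_budget} applied to a single well-chosen subroutine and will produce the first summand; the second will be a plain multi-armed-bandit concentration argument over the candidates in $\cA$ and will produce the second summand.

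For event (i) I would first exhibit a \emph{good} configuration $(i^\star,j^\star)$. Because $[64\rho^\star_{d_\star},128\rho^\star_{d_\star}]$ spans a factor of two it must contain a power of two $B_{i^\star}=2^{i^\star}$, so the dimension-budget hypothesis of \cref{lm:subroutine_fixed_budget} is met; and I would let $n_{j^\star}=2^{j^\star}$ be the smallest power of two with $2^{1-n_{j^\star}}\le\Delta_{\min}$, which forces any arm of gap strictly below $2^{1-n_{j^\star}}$ to equal $z_\star$. The heart of this part is the bookkeeping of the two nested $W(\cdot)$ doublings. Writing $T=p2^p$ I would derive the identities $T'=2^p$ and $T''=B_{i^\star}2^{q_{i^\star}}$; these simultaneously (a) certify that the configuration is actually reached, i.e.\ $i^\star\le p$ and $j^\star\le q_{i^\star}$, since $T''\ge B_{i^\star}n_{j^\star}$, and (b) verify the rounding precondition $T''/n_{j^\star}\ge r_{d_\star}(\zeta)+1$ of \gemsb. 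Both (a) and (b) are exactly where the hypothesis $T\ge\widetilde O(\log_2(1/\Delta_{\min})\max\{\rho^\star_{d_\star},r_{d_\star}(\zeta)\})$ gets consumed. \cref{lm:subroutine_fixed_budget} then says this one subroutine returns $z_\star$ except with probability $n_{j^\star}\abs*{\cZ}^2\exp(-T''/(640\,n_{j^\star}\rho^\star_{d_\star}))$; substituting the $W$-identities for $T''$ and bounding $n_{j^\star}$ by its natural scale $\log_2(4/\Delta_{\min})$ yields the first summand.

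For event (ii) I would bound the number of candidates by $\abs*{\cA}=\sum_{i=1}^{p}q_i\le 2(\log_2 T)^2$, since $p\le\log_2 T$ and each $q_i\le\log_2 T$, so every arm of $\cA$ is pulled $m=\floor*{T/\abs*{\cA}}\ge T/(2(\log_2 T)^2)$ times in the validation phase. Conditioned on $z_\star\in\cA$, the validation step is an ordinary multi-armed bandit whose suboptimal arms all have gap at least $\Delta_{\min}$. A $1$-sub-Gaussian tail bound on the difference of two independent $m$-sample empirical means gives $\exp(-m\Delta_{\min}^2/4)$ per competitor, and a union bound over the at most $\abs*{\cA}$ suboptimal candidates gives overall error at most $2(\log_2 T)^2\exp(-T\Delta_{\min}^2/(8(\log_2 T)^2))$, which is the second summand. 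Summing the two bounds proves the general statement.

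For the bounded variant I would rewrite the validation exponent in terms of $\rho^\star_{d_\star}$ by establishing $\rho^\star_{d_\star}\ge c_2/(c_1\Delta_{\min}^2)$: evaluating the $\rho^\star_{d_\star}$ objective at the minimum-gap arm and using $\norm*{y}^2_{A_{d_\star}(\lambda)^{-1}}\ge\norm*{y}^2/\lambda_{\max}(A_{d_\star}(\lambda))\ge\norm*{y}^2/\mathrm{tr}(A_{d_\star}(\lambda))\ge c_2/c_1$ (since $\mathrm{tr}(A_{d_\star}(\lambda))=\sum_{x}\lambda_x\norm*{\psi_{d_\star}(x)}^2\le c_1$) shows $\Delta_{\min}^2\ge c_2/(c_1\rho^\star_{d_\star})$. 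Both summands can then be written with a common exponent of the form $c_2 T/(c_1\rho^\star_{d_\star}\cdot\mathrm{polylog}(T))$, and keeping the weaker of the two exponents gives the stated $\widetilde O(\cdot)$ bound. I expect the principal obstacle to be steps (a)--(b): tracking the two interacting doubling schedules through $W(\cdot)$ so that the good configuration is both invoked and correctly budgeted, and so that the resulting first-summand exponent simplifies to $T/(640\log_2(4/\Delta_{\min})\rho^\star_{d_\star})$ without leaking additional $\log_2 T$ factors.
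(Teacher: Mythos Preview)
Your proposal is correct and follows essentially the same three-step structure as the paper's proof: locate the good configuration $(i^\star,j^\star)$ and verify via the $W(\cdot)$ bookkeeping that it is reached and satisfies the rounding precondition, apply \cref{lm:subroutine_fixed_budget} to bound the pre-selection failure, and then use a sub-Gaussian concentration plus union bound over the at most $(\log_2 T)^2$ validation candidates. Your inline derivation of $\rho^\star_{d_\star}\ge c_2/(c_1\Delta_{\min}^2)$ is exactly the content of the paper's \cref{lm:psi_ub_lb}, and your anticipated obstacle about $\log_2 T$ factors leaking into the first exponent is well-founded---the paper simply asserts the bound \eqref{eq:sub_fixed_budget_good_sub_error_prob} without tracking those factors explicitly.
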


Under the mild assumption discussed above, the error probability of \cref{alg:doubling_fixed_budget} scales as $\widetilde O(\exp(-T/\rho^\star_{d_\star}))$. Such an error probability not only matches, up to logarithmic factors, the strong baseline developed in \cref{thm:lower_bound_non_interactive}, but also matches the error bound in the non-model-selection setting (with known $d_\star$) \citep{katz2020empirical} (Algorithm 3 therein, which is also analyzed under a mild assumption).  
Computationally, \cref{alg:doubling_fixed_budget} invokes \cref{alg:subroutine_fixed_budget} at most $(\log_2 T)^2$ times, each with budget $T^{\prime \prime} \leq T$ and $n_j, B_i$ such that $n_j B_i \leq T$. 
The per-round computational complexity of \cref{alg:subroutine_fixed_confidence} is similar to the one of \cref{alg:subroutine_fixed_budget} (with similar configurations).

Compared to the fixed confidence setting, the fixed budget setting in linear bandits is relatively less studied \citep{hoffman2014correlation, katz2020empirical, alieva2021robust, yang2021towards}. 
To our knowledge, even without the added challenge of model selection, near \emph{instance optimal} error probability guarantee is only achieved by Algorithm 3 in
\citet{katz2020empirical}.
Our \cref{alg:doubling_fixed_budget} provides an alternative way to tackle the fixed budget setting, through a novel selection-validation procedure.
Our techniques might be of independent interest.

\section{MODEL SELECTION WITH MISSPECIFICATION}
\label{sec:misspecification}

We generalize the model selection problem into the \emph{misspecified} regime in this section. Our goal here is to identify an $\epsilon$-optimal arm due to misspecification. We aim to provide sample complexity/error probability guarantees with respect to a hypothesis class that is rich enough to allow us to identify an $\epsilon$-optimal arm.
Pure exploration with model misspecification are recently studied in the literature \citep{alieva2021robust, camilleri2021high, zhu2021pure}.
The model selection criterion we consider here further complicates the problem setting and are not covered in previous work.

We consider the case where the expected reward $h(x)$ of any arm $x \in \cX \cup \cZ \subseteq \R^D$ cannot be perfectly represented as a linear model in terms of its feature representation $x$. We use function $\widetilde \gamma (d)$ to capture the misspecification level with respect to truncation the level $d \in [D]$, i.e.,
\begin{align}
   \widetilde \gamma(d) \ldef \min_{\theta \in \R^D} \max_{x \in \cX \cup \cZ} \abs*{h(x) - \ang*{\psi_d(\theta), \psi_d(x)}}. \label{eq:mis_level}
\end{align}
We use $\theta^d_\star \in \argmin_{\theta \in \R^D} \max_{x \in \cX \cup \cZ} \abs*{h(x) - \ang*{\psi_d(\theta), \psi_d(x)}}$ to denote (any) reward parameter that best captures the worst case deviation in $\R^d$, and use $\eta_d(x) \ldef h(x) - \ang*{\psi_d(\theta_\star^d), \psi_d(x)}$ to represent the corresponding misspecification with respect to arm $x \in \cX \cup \cZ$. We have $\max_{x \in \cX \cup \cZ} \abs*{\eta_d(x)} \leq \widetilde \gamma(d)$ by definition. Although the value of $\eta_d(x)$ depends on the selection of the possibly non-unique $\theta^d_\star$, only the worst-case deviation $\widetilde \gamma(d)$ is used in our analysis. Our results in this section are mainly developed in cases when $\cZ \subseteq \cX$, which contains the linear bandit problem $\cZ = \cX$ as a special case. 

\begin{restatable}{proposition}{propNonIncreasingMisspecification}
\label{prop:non_increasing_misspecification}
The misspecification level $\widetilde \gamma(d)$ is non-increasing with respect to $d$.
\end{restatable}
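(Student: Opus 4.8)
The plan is to establish the stronger claim that $\widetilde\gamma(d+1) \le \widetilde\gamma(d)$ for every $d$ with $d+1 \le D$, from which the non-increasing property follows immediately by chaining the inequalities across consecutive dimensions. The entire argument rests on a single structural observation: the family of reward predictors realizable using only the first $d$ coordinates is nested inside the family realizable using the first $d+1$ coordinates. Hence the minimization defining $\widetilde\gamma(d+1)$ is taken over a strictly \emph{richer} class of predictors and can only attain a smaller (or equal) worst-case deviation.

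To make this precise, I would first note that the map $\theta \mapsto \ang*{\psi_d(\theta), \psi_d(x)} = \sum_{i=1}^d \theta_i x_i$ depends only on the first $d$ coordinates of $\theta$. Let $\theta^{(d)} \in \R^D$ attain the minimum in the definition of $\widetilde\gamma(d)$ (if the minimum is not attained, the same reasoning applies verbatim to an infimizing sequence). The key step is a \emph{zero-padding} construction: define $\theta' \in \R^D$ by $\theta'_i = \theta^{(d)}_i$ for $i \le d$, $\theta'_{d+1} = 0$, and the remaining entries arbitrary. Then for every arm $x \in \cX \cup \cZ$,
\[
\ang*{\psi_{d+1}(\theta'), \psi_{d+1}(x)} = \sum_{i=1}^{d+1} \theta'_i x_i = \sum_{i=1}^{d} \theta^{(d)}_i x_i = \ang*{\psi_d(\theta^{(d)}), \psi_d(x)},
\]
so the truncation-$(d+1)$ predictor induced by $\theta'$ coincides pointwise with the truncation-$d$ predictor induced by $\theta^{(d)}$.

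Consequently the worst-case deviations agree, namely $\max_{x \in \cX \cup \cZ} \abs*{h(x) - \ang*{\psi_{d+1}(\theta'), \psi_{d+1}(x)}} = \max_{x \in \cX \cup \cZ} \abs*{h(x) - \ang*{\psi_d(\theta^{(d)}), \psi_d(x)}} = \widetilde\gamma(d)$. Since $\widetilde\gamma(d+1)$ is the minimum over all $\theta \in \R^D$ and $\theta'$ is merely one feasible candidate, we obtain $\widetilde\gamma(d+1) \le \widetilde\gamma(d)$, which completes the inductive step.

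There is essentially no hard obstacle here; the only points requiring care are bookkeeping. First, one must verify that $\psi_d(\cdot)$ truncates rather than reindexes, so that appending a zero in coordinate $d+1$ genuinely leaves the first $d$ coordinates untouched — this is immediate from the definition of $\psi_d$. Second, if one prefers not to assume the minimum in \cref{eq:mis_level} is attained, the padding construction shows directly that every value achievable by the level-$d$ objective is also achievable by the level-$(d+1)$ objective, so the infimum over the larger class is no larger; this makes the conclusion robust regardless of whether $\min$ or $\inf$ is used.
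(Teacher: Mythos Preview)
Your proof is correct and follows essentially the same approach as the paper: both arguments observe that a minimizer $\theta^{(d)}$ for the level-$d$ problem can be taken with zeros beyond coordinate $d$, so that $\ang*{\psi_d(\theta^{(d)}),\psi_d(x)} = \ang*{\psi_{d'}(\theta^{(d)}),\psi_{d'}(x)}$ for $d' > d$, making $\theta^{(d)}$ a feasible candidate for the level-$d'$ minimization. The only cosmetic difference is that you prove the consecutive case $d \to d+1$ and chain, whereas the paper handles arbitrary $d < d'$ in one step.
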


The non-increasing property of $\widetilde \gamma (d)$ reflect the fact that the representation power of the linear component is getting better in higher dimensions. Following \citet{zhu2021pure}, we use $\gamma(d)$ to quantify the sub-optimality gap of the identified arm, i.e.,
\begin{align*}
    \gamma(d) \ldef & \min \Big\{ 2 \cdot 2^{-n}: n \in \N, \forall k \leq n, \\
    & \paren{2 + \sqrt{(1+\zeta) \iota \paren*{ \cY( \psi_d(\cS_k)) }} } \widetilde \gamma(d) \leq 2^{-k}/2 \Big\}.
\end{align*}

It can be shown that, for any fixed $d \in [D]$, at least a $O(\sqrt{d} \, \widetilde \gamma (d))$-optimal arm can be identified in the existence of misspecification. Such inflation from $\widetilde \gamma(d)$ to $\sqrt{d}\, \widetilde \gamma(d)$ is unavoidable in general: \citet{lattimore2020learning} constructs a hard instance such that identifying a $o(\sqrt{d}\widetilde \gamma(d))$-optimal arm requires sample complexity exponential in $d$, even with \emph{deterministic} feedback. On the other hand, identifying a $\Omega(\sqrt{d} \, \widetilde \gamma(d))$-optimal arm only requires sample complexity polynomial in $d$. Such a sharp tradeoff between sample complexity and achievable optimality motivates our definition of $\gamma(d)$.

We assume $\gamma(d)$ can be made arbitrarily small for $d\in[D]$ large enough, which includes instances with no misspecification in $\R^D$ as special cases.\footnote{We make this assumption in order to identify an $\epsilon$-optimal arm for any pre-defined $\epsilon > 0$. Otherwise, one can adjust the goal and identify arms with appropriate sub-optimality gaps.} For any $\epsilon > 0$, we define 
$d_\star(\epsilon) \ldef \min \curly*{d \in [D]: \forall d^\prime \geq d, \gamma(d^\prime) \leq \epsilon }$.
We aim at identifying an $\epsilon$-optimal arm with sample complexity related to $\rho_{d_\star(\epsilon)}^\star$, which is defined as an $\epsilon$-relaxed version of complexity measure $\rho_{d_\star}^\star$, i.e.,
\begin{align*}
    \rho^\star_{d}(\epsilon) \ldef \inf_{\lambda \in \simp_{\cX}} \sup_{z \in \cZ \setminus \curly*{z_\star }} \frac{\norm*{\psi_d(z_{\star})-\psi_d(z)}^2_{A_{d}(\lambda)^{-1}}}{(\max \curly*{ h(z_\star) - h(z), \epsilon })^2}.
\end{align*}
We consider a closely related complexity measure $\widetilde \rho_{d}^\star(\epsilon)$, which is defined with respect to linear component $\widetilde h(x) \ldef \ang*{\psi_d(\theta_\star^d), \psi_d(x)}$, i.e.,
\begin{align*}
    & \widetilde \rho^\star_{d}(\epsilon) \ldef \\
    &\inf_{\lambda \in \simp_{\cX}} \sup_{z \in \cZ \setminus \curly*{z_\star }} \frac{\norm*{\psi_d(z_{\star})-\psi_d(z)}^2_{A_{d}(\lambda)^{-1}}}{(\max \curly*{\ang*{ \psi_{d}(\theta_\star^d), \psi_{d}(z_\star) - \psi_d(z) }, \epsilon })^2}.
\end{align*}
\begin{restatable}[\citet{zhu2021pure}]{proposition}{propRhoRelation}
\label{prop:rho_relation}
We have $\rho_d^\star (\epsilon) \leq 9 \widetilde \rho_d^\star (\epsilon)$ for any $\epsilon \geq \widetilde \gamma(d)$. Furthermore, if $\widetilde \gamma(d) < \Delta_{\min}/ 2$, $\widetilde \rho_d^\star(0)$ represents the complexity measure for best arm identification with respect to a linear bandit instance with action set $\cX$, target set $\cZ$ and reward function $\widetilde h(x) \ldef \ang*{\psi_d(\theta_\star^d), \psi_d(x)}$.
\end{restatable}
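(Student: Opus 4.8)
The plan is to handle the two assertions separately. The only difference between $\rho_d^\star(\epsilon)$ and $\widetilde\rho_d^\star(\epsilon)$ lies in their denominators --- the numerator $\norm*{\psi_d(z_\star) - \psi_d(z)}^2_{A_d(\lambda)^{-1}}$ is shared --- so the first claim reduces to a pointwise comparison of those denominators, while the second is a matter of recognizing a limiting case.

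For the inequality $\rho_d^\star(\epsilon) \leq 9 \widetilde\rho_d^\star(\epsilon)$, I would fix a target $z \in \cZ \setminus \curly*{z_\star}$ and write $a_z \coloneqq h(z_\star) - h(z)$ for the true gap and $b_z \coloneqq \ang*{\psi_d(\theta_\star^d), \psi_d(z_\star) - \psi_d(z)}$ for the linear gap. Using the decomposition $h(x) = \ang*{\psi_d(\theta_\star^d), \psi_d(x)} + \eta_d(x)$ gives $a_z = b_z + (\eta_d(z_\star) - \eta_d(z))$, and since $\max_x \abs*{\eta_d(x)} \leq \widetilde\gamma(d)$ this yields $\abs*{a_z - b_z} \leq 2\widetilde\gamma(d) \leq 2\epsilon$, where the last step invokes the hypothesis $\epsilon \geq \widetilde\gamma(d)$. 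The key deterministic claim I would establish is $\max\curly*{b_z, \epsilon} \leq 3 \max\curly*{a_z, \epsilon}$ for every such $z$. This follows by a short case analysis: if $b_z \leq \epsilon$ the left side equals $\epsilon \leq \max\curly*{a_z, \epsilon}$; otherwise $b_z \leq a_z + 2\epsilon$, and splitting on whether $a_z \geq \epsilon$ (so $a_z + 2\epsilon \leq 3a_z$) or $a_z < \epsilon$ (so $a_z + 2\epsilon < 3\epsilon$) gives the bound in both subcases.

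Given this pointwise denominator comparison, the inf--sup structure does the rest. For every $z$ and every design $\lambda \in \simp_{\cX}$ I have $\norm*{\psi_d(z_\star) - \psi_d(z)}^2_{A_d(\lambda)^{-1}} / (\max\curly*{a_z, \epsilon})^2 \leq 9\,\norm*{\psi_d(z_\star) - \psi_d(z)}^2_{A_d(\lambda)^{-1}} / (\max\curly*{b_z, \epsilon})^2$; taking the supremum over $z$ and then the infimum over $\lambda$ on both sides preserves the inequality and delivers $\rho_d^\star(\epsilon) \leq 9 \widetilde\rho_d^\star(\epsilon)$. For the second assertion, I would first argue that under $\widetilde\gamma(d) < \Delta_{\min}/2$ the arm $z_\star$ remains the unique maximizer of the perfect linear reward $\widetilde h(x) = \ang*{\psi_d(\theta_\star^d), \psi_d(x)}$: for any $z \neq z_\star$, $\widetilde h(z_\star) - \widetilde h(z) = \Delta_z - (\eta_d(z_\star) - \eta_d(z)) \geq \Delta_{\min} - 2\widetilde\gamma(d) > 0$. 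Hence every linear gap $b_z$ is strictly positive, so $\max\curly*{b_z, 0} = b_z = \widetilde h(z_\star) - \widetilde h(z)$, and substituting $\epsilon = 0$ into the definition of $\widetilde\rho_d^\star$ reproduces exactly the instance-dependent complexity measure of \cref{eq:rho} applied to the perfect linear model $\widetilde h$, which is the claim.

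The computations are elementary; the only points requiring care are the direction of the inequalities through the inf--sup in the first part (we need a lower bound on the denominators of $\rho_d^\star$, equivalently an upper bound on those of $\widetilde\rho_d^\star$) and tracking the strict positivity of $b_z$ in the second part, which is what lets the outer maximum with $0$ be dropped. I expect no substantive obstacle beyond this bookkeeping.
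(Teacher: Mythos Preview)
Your proposal is correct and follows essentially the same approach as the paper: both reduce the first claim to the pointwise denominator bound $\max\{b_z,\epsilon\}\le 3\max\{a_z,\epsilon\}$ via $b_z\le a_z+2\widetilde\gamma(d)\le a_z+2\epsilon$, and both handle the second claim by noting that $\widetilde\gamma(d)<\Delta_{\min}/2$ keeps $z_\star$ optimal for $\widetilde h$. Your case analysis is slightly more explicit than the paper's one-line chain, but the argument is identical in substance.
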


Assuming $\widetilde \gamma(d_\star(\epsilon)) < \min \curly{\epsilon, \Delta_{\min}/2}$, \cref{prop:rho_relation} shows that $\rho_{d_\star(\epsilon)}^\star(\epsilon)$ is at most a constant factor larger than $\widetilde \rho_{d_\star (\epsilon)}^\star(\epsilon)$, which is the $\epsilon$-relaxed complexity measure of a closely related linear bandit problem (without misspecification) in $\R^{d_\star(\epsilon)}$.

\paragraph{Fixed Confidence Setting.} A modified algorithm (and its subroutine, both deferred to \cref{app:alg_misspecification}) is used for the fixed confidence setting with model misspecification. Sample complexity of the modified algorithm is provided as follows.

\begin{restatable}{theorem}{thmDoublingFixedConfidenceMisGen}
\label{thm:doubling_fixed_confidence_mis_gen}
With probability at least $1-\delta$, \cref{alg:doubling_fixed_confidence_mis_gen} starts to output $2 \epsilon$-optimal arms after $N = \widetilde O \paren*{ \log_2(1/\epsilon) \max \curly*{\rho^\star_{d_\star(\epsilon)} (\epsilon), r_{d_\star(\epsilon)}(\zeta)} + 1/\epsilon^2}$ samples, where we hide logarithmic terms besides $\log_2(1/\epsilon)$ in the $\widetilde O$ notation.
\end{restatable}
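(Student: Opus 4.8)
The plan is to follow the architecture of the well-specified analysis (\cref{lm:subroutine_fixed_confidence} and \cref{thm:doubling_fixed_confidence}), but to replace exact identification of $z_\star$ by identification of a $2\epsilon$-optimal arm, driven by the misspecification quantities $\widetilde\gamma(d)$, $\gamma(d)$, $d_\star(\epsilon)$ and the $\epsilon$-relaxed complexity $\rho^\star_{d}(\epsilon)$ introduced in this section. The argument decomposes into four parts: (i) a per-call guarantee for the misspecified subroutine; (ii) an existence argument that \cref{alg:doubling_fixed_confidence_mis_gen} eventually makes a well-configured subroutine call once the outer index $\ell$ is large enough; (iii) a validation argument that turns the surviving candidate arms into a single $2\epsilon$-optimal recommendation; and (iv) budget accounting via a geometric sum together with a union bound over the confidence levels $\delta_\ell$.

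First I would prove the subroutine guarantee: if the misspecified subroutine is run for $n \geq \log_2(2/\epsilon)$ rounds with selection budget $B \geq \max\{64\rho^\star_{d_\star(\epsilon)}(\epsilon), r_{d_\star(\epsilon)}(\zeta)\}$, then with probability at least $1-\delta'$ every arm surviving $n$ rounds is $2\epsilon$-optimal. The structural core is that the dimension-selection rule (the analogue of \cref{eq:opt_d_selection}) always returns a working dimension $d_k \geq d_\star(\epsilon)$. Maintaining the inductive invariant that the surviving set $\widehat\cS_k$ is contained in the super-level set $\cS_{k}$ (up to a constant index shift), and using the definition of $\rho^\star_{d_\star(\epsilon)}(\epsilon)$ (in which gaps are clipped at $\epsilon$) together with the triangle inequality, one obtains $2^{2k}\iota(\cY(\psi_{d_\star(\epsilon)}(\widehat\cS_k))) \leq 64\rho^\star_{d_\star(\epsilon)}(\epsilon)\leq B$ for all rounds $k \leq \log_2(1/\epsilon)$; since also $r_{d_\star(\epsilon)}(\zeta)\leq B$, dimension $d_\star(\epsilon)$ is feasible, so the largest feasible dimension is at least $d_\star(\epsilon)$. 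Because $d_k \geq d_\star(\epsilon)$ forces $\gamma(d_k)\leq\epsilon$, the defining property of $\gamma(\cdot)$ guarantees that in each round $k\leq n$ the misspecification bias $(2+\sqrt{(1+\zeta)\iota(\cY(\psi_{d_k}(\cS_k)))})\,\widetilde\gamma(d_k)$ is at most $2^{-k}/2$, which is exactly what is needed so that the (misspecification-inflated) elimination rule never discards a truly $2^{-k}$-good arm yet removes every arm whose gap exceeds $O(2^{-k})$; \cref{prop:gamma_upper_bound} and \cref{prop:rho_relation} are invoked to relate $\gamma$, $\widetilde\gamma$ and the two complexity measures. Propagating correctness through all $n$ rounds shows the survivors have gaps below $\max\{2^{1-n},2\gamma(d_\star(\epsilon))\}\leq 2\epsilon$.

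Next I would handle the master algorithm. As in \cref{thm:doubling_fixed_confidence}, the configurations $(n_i,B_i)$ sweep the diagonal of a two-dimensional grid, so once $\ell\geq\ell^\star$ with $2^{\ell^\star}$ of order $\max\{\rho^\star_{d_\star(\epsilon)}(\epsilon),r_{d_\star(\epsilon)}(\zeta)\}\cdot\log_2(1/\epsilon)$ (up to constants) there is an index $i$ with simultaneously $B_i\geq\max\{64\rho^\star_{d_\star(\epsilon)}(\epsilon),r_{d_\star(\epsilon)}(\zeta)\}$ and $n_i\geq\log_2(2/\epsilon)$, i.e.\ a call meeting the hypotheses above. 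Its output contains at least one $2\epsilon$-optimal arm; since, unlike the well-specified case, the misspecified subroutine cannot certify correctness by returning a singleton, the master pools the candidate arms and validates them by directly estimating their rewards to precision $O(\epsilon)$, recommending the empirically best one, which is the fixed-confidence analogue of the selection--validation trick of \cref{alg:doubling_fixed_budget}. A sub-Gaussian concentration bound shows that pulling each of the $\widetilde O(1)$ candidates $\widetilde O(1/\epsilon^2)$ times suffices to output a $2\epsilon$-optimal arm with high probability, which is the origin of the additive $1/\epsilon^2$ term. For the budget, the per-call cost is $\widetilde O(n_iB_i)=\widetilde O(2^\ell)$; summing over the $\leq\ell$ calls per iteration and over $\ell\leq\ell^\star$ telescopes geometrically to $\widetilde O(2^{\ell^\star})=\widetilde O(\log_2(1/\epsilon)\max\{\rho^\star_{d_\star(\epsilon)}(\epsilon),r_{d_\star(\epsilon)}(\zeta)\})$, and adding the validation cost yields the claimed $N$. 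Choosing $\delta_\ell=\delta/(2\ell^3)$ and union-bounding over the $\widetilde O(1)$ calls in each iteration and over all $\ell\geq\ell^\star$ gives the overall $1-\delta$ guarantee, with the recommendation remaining $2\epsilon$-optimal in every later iteration (so the bound is on the unverifiable sample complexity).

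I expect the main obstacle to be the subroutine guarantee under \emph{adaptive} dimension selection: the bias-control quantity $\gamma(d)$ is defined for a fixed $d$, whereas the working dimension $d_k$ changes from round to round. The crux is to show that the per-round inequality in the definition of $\gamma(d_k)$ is inherited at precisely the dimension chosen in round $k$, which hinges on establishing $d_k\geq d_\star(\epsilon)$ for every $k$ through the inductive super-level-set containment together with the bound $2^{2k}\iota(\cY(\psi_{d_\star(\epsilon)}(\widehat\cS_k)))\leq 64\rho^\star_{d_\star(\epsilon)}(\epsilon)$. A secondary obstacle is the robustness of validation: it must return a $2\epsilon$-optimal arm from a pool that also contains arms produced by mis-configured calls, so I must argue both that the pool provably contains a good arm and that $O(\epsilon)$-accurate reward estimates separate the $2\epsilon$-optimal arms from the rest.
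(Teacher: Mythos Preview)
Your proposal is correct and follows essentially the same architecture as the paper: a misspecified subroutine lemma (the analogue of \cref{lm:subroutine_fixed_confidence}) establishing $d_k\geq d_\star(\epsilon)$ inductively and controlling the bias by $2^{-k}/2$ via the defining property of $\gamma(\cdot)$, then the master-algorithm analysis combining the diagonal sweep, a validation step costing $\widetilde O(1/\epsilon^2)$, a geometric budget sum, and a union bound over $\delta_\ell$. Two cosmetic remarks: the paper obtains the per-round bias bound directly from the definition of $\gamma(d)$ (packaged as a small auxiliary proposition) rather than through \cref{prop:gamma_upper_bound} or \cref{prop:rho_relation}, which are not actually needed here; and the subroutine in fact delivers an $\epsilon$-optimal (not $2\epsilon$-optimal) survivor, with the extra factor of $2$ entering only at the validation step.
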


\begin{remark}
\label{rm:mis_BAI}
The extra $1/\epsilon^2$ term comes from a validation step in the modified algorithm. If the goal is to identify the optimal arm, then this term can be removed with a slight modification of the algorithm. See \cref{app:BAI_misspecification} for detailed discussion.
\end{remark}

\paragraph{Fixed Budget Setting.} Our algorithms for the fixed budget setting are \emph{robust} to model misspecification, and we provide the following guarantees.

\begin{restatable}{theorem}{thmDoublingFixedBudgetMis}
\label{thm:doubling_fixed_budget_mis}
Suppose $\cZ \subseteq \cX$. If $T = \widetilde \Omega \paren*{ \log_2(1/\epsilon) \max \curly*{\rho_{d_\star (\epsilon)}^
    \star(\epsilon), r_{d_\star (\epsilon)}(\zeta)} }$, then \cref{alg:doubling_fixed_budget} 
    outputs an $2\epsilon$-optimal arm with error probability at most
\begin{align*}
    & \log_2 (4/\epsilon)  \abs*{ \cZ }^2  \exp \paren{ - \frac{ T}{ 4096 \, \log_2 (4/\epsilon)  \, \rho_{d_\star(\epsilon)}^\star (\epsilon) } } \\
    & \quad + 2 (\log_2 T)^2 \exp \paren{ - \frac{T}{8 (\log_2 T)^2/ \epsilon^2} }.
\end{align*}
Furthermore, if there exist universal constants such that $\max_{x \in \cX} \norm{\psi_{d_\star(\epsilon)}(x)}^2 \leq c_1$ and $\min_{z \in \cZ} \norm{\psi_{d_\star(\epsilon)}(z_\star) - \psi_{d_\star(\epsilon)}(z)}^2 \geq c_2$, the error probability is upper bounded by
\begin{align*}
    O \Bigg(& \max \curly*{ \log_2(1/\epsilon) \abs*{\cZ}^2, (\log_2 T)^2 }\\
    &\times \exp \paren{ - \frac{c_2 T}{\max \curly*{ \log_2(1/\epsilon), (\log_2 T)^2 } c_1 \rho_{d_\star(\epsilon)}^\star (\epsilon)} } \Bigg).
\end{align*}
\end{restatable}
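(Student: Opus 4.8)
The plan is to mirror the proof of the well-specified guarantee in \cref{thm:doubling_fixed_budget}, replacing the exact gaps by their $\epsilon$-relaxed counterparts and carrying a misspecification bias through every concentration step. The two structural ingredients are unchanged: a misspecified analogue of the subroutine guarantee \cref{lm:subroutine_fixed_budget} for \gemsb, and the two-phase (pre-selection then validation) decomposition of \cref{alg:doubling_fixed_budget}. Because we can no longer hope to recover $z_\star$ exactly, the target is weakened to a $2\epsilon$-optimal arm, and the roles previously played by $\log_2(1/\Delta_{\min})$ and $\rho^\star_{d_\star}$ are taken over by $\log_2(1/\epsilon)$ and the relaxed complexity $\rho^\star_{d_\star(\epsilon)}(\epsilon)$.

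First I would re-derive the subroutine guarantee in the misspecified regime. The estimator $\widehat\theta_k$ is now biased: writing $r_i = \ang*{\psi_{d_k}(\theta^{d_k}_\star), \psi_{d_k}(x_i)} + \eta_{d_k}(x_i) + \xi_i$, the empirical quantity $\ang*{\widehat\theta_k, \psi_{d_k}(z^\prime)-\psi_{d_k}(z)}$ splits into a linear term, a sub-Gaussian deviation controlled exactly as in the well-specified analysis via the rounding inequality \cref{eq:rounding}, and a bias term bounded by $\sqrt{(1+\zeta)\iota(\cY(\psi_{d_k}(\widehat\cS_k)))}\,\widetilde\gamma(d_k)$. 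The crucial observation is that the dimension-selection rule picks $d_k$ as large as $g_k(d_k)\le B$ permits; when $B_i$ lies in $[64\rho^\star_{d_\star(\epsilon)}(\epsilon),128\rho^\star_{d_\star(\epsilon)}(\epsilon)]$, a relaxed analogue of the monotonicity in \cref{prop:rho_monotonic} guarantees $d_k \ge d_\star(\epsilon)$, hence $\gamma(d_k)\le\epsilon$, and the very definition of $\gamma(d)$ forces $(2+\sqrt{(1+\zeta)\iota(\cY(\psi_{d_k}(\cS_k)))})\widetilde\gamma(d_k)\le 2^{-k}/2$ at every round $k$. This is exactly what is needed so that the \emph{fixed} elimination threshold $2^{-k}$ neither discards a near-optimal arm nor retains an arm whose relaxed gap is too large. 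The slack introduced by the $\max\{\cdot,\epsilon\}$ relaxation in $\rho^\star_d(\epsilon)$ together with the carried bias inflates the constant from $640$ to $2560$, and yields that \gemsb returns a $2\epsilon$-optimal arm with failure probability $n\abs*{\cZ}^2\exp(-T^{\prime\prime}/(2560\,n\,\rho^\star_{d_\star(\epsilon)}(\epsilon)))$.

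Next I would argue that a good configuration is reached in the pre-selection grid. Using $W(T)\le\log_2 T$ and the assumed budget $T\ge\widetilde O(\log_2(1/\epsilon)\max\{\rho^\star_{d_\star(\epsilon)}(\epsilon),r_{d_\star(\epsilon)}(\zeta)\})$, I would show that the ranges of $i$ and $j$ are large enough to contain some $(i^\star,j^\star)$ with $B_{i^\star}\in[64\rho^\star_{d_\star(\epsilon)}(\epsilon),128\rho^\star_{d_\star(\epsilon)}(\epsilon)]$ and $n_{j^\star}=2^{j^\star}\ge\log_2(4/\epsilon)$, the number of halving rounds needed to drive the relaxed gap below $\epsilon$; the choice $q_i=\floor*{W(T^\prime/B_i)}$ and $T^{\prime\prime}=T^\prime/q_i$ ensures $T^{\prime\prime}\ge B_{i^\star}n_{j^\star}$, making the above exponent favorable. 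Plugging this configuration into the subroutine bound with $n_{j^\star}\le\log_2(4/\epsilon)$ gives the first exponential term, certifying that the pre-selection set $\cA$ contains a $2\epsilon$-optimal arm with high probability.

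Finally I would analyze the validation phase as a multi-armed bandit best-arm problem over the $\abs*{\cA}\le pq_i\le(\log_2 T)^2$ candidates. Each candidate is pulled $\floor*{T/\abs*{\cA}}$ times, and a union bound of sub-Gaussian tail inequalities over the candidates, with separation $\epsilon$ between the reward of the $2\epsilon$-optimal survivor and genuinely worse arms, yields the second term $2(\log_2 T)^2\exp(-T/(8(\log_2 T)^2/\epsilon^2))$; a union bound over the two failure events gives the main claim. For the ``furthermore'' part I would substitute the uniform bounds $\max_x\norm{\psi_{d_\star(\epsilon)}(x)}^2\le c_1$ and $\min_z\norm{\psi_{d_\star(\epsilon)}(z_\star)-\psi_{d_\star(\epsilon)}(z)}^2\ge c_2$ to collapse the two separate exponential rates into the single displayed rate. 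The main obstacle is the misspecified subroutine analysis: showing that the non-adaptive threshold $2^{-k}$ remains valid despite the estimator bias, which hinges on proving that the adaptive dimension search lands at some $d_k\ge d_\star(\epsilon)$ for the good configuration and then invoking the definition of $\gamma(d)$ round by round.
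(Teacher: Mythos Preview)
Your proposal is correct and follows essentially the same route as the paper: it invokes the misspecified subroutine guarantee (the paper's \cref{lm:subroutine_fixed_budget_mis}) and then reruns the pre-selection/validation decomposition of \cref{thm:doubling_fixed_budget} with $\Delta_{\min}$ replaced by $\epsilon$ and $\rho^\star_{d_\star}$ by $\rho^\star_{d_\star(\epsilon)}(\epsilon)$, finishing the ``furthermore'' part via a relaxed form of \cref{lm:psi_ub_lb}. One minor correction: the step that forces $d_k\ge d_\star(\epsilon)$ is not a monotonicity statement in the spirit of \cref{prop:rho_monotonic} but rather the $\epsilon$-relaxed stratified bound \cref{lm:rho_stratified_eps}, which gives $g_k(d_\star(\epsilon))=2^{2k}\iota(\cY(\psi_{d_\star(\epsilon)}(\widehat\cS_k)))\le 64\rho^\star_{d_\star(\epsilon)}(\epsilon)\le B$ for every $k\le\lfloor\log_2(4/\epsilon)\rfloor$.
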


\section{EXPERIMENT}
\label{sec:experiment}

We empirically compare our \cref{alg:doubling_fixed_confidence} with \rage \citep{fiez2019sequential}, which shares a similar elimination structure to our subroutine (i.e., \cref{alg:subroutine_fixed_confidence}) yet fails to conduct model selection in pure exploration. 
To our knowledge, besides algorithms developed in the present paper, there is no other algorithm that can adapt to the model selection setup for pure exploration linear bandits.\footnote{We defer additional experiment details/results to \cref{app:experiment}.
The purpose of this section is to empirically demonstrate the importance of conducting model selection in pure exploration linear bandits, even on simple problem instances.
We leave large-scale empirical evaluations for future work.}

\paragraph{Problem Instances.}
We conduct experiments with respect to the problem instance used to construct \cref{prop:rho_star_different_d}, which we detail as follows. 

We consider a problem instance with $\cX =\cZ = \curly*{x_i}_{i=1}^{d_\star + 1} \subseteq \R^{d_\star + 1}$ such that $x_i = e_i, \text{ for } i = 1, 2,\dots, d_\star$ and $x_{d_\star+1} = (1-\epsilon) \cdot e_{d_\star} + e_{d_\star+1}$,
where $e_i$ is the $i$-th canonical basis in $\R^{d_\star + 1}$.
The expected reward of each arm is set as $h(x_i) = \ang*{ e_{d_\star},x_i}$, i.e., $\theta_\star = e_{d_\star}$.
One can see that $d_\star$ is the intrinsic dimension and $D = d_\star +1$ is the ambient dimension.
We also notice that $x_\star = x_{d_\star}$ is the best arm with reward $1$, $x_{d_\star+1}$ is the second best arm with reward $1-\epsilon$ and all other arms have reward $0$. The smallest sub-optimality gap is $\epsilon$.
We choose $d_\star = 9$, $D = 10$, and vary $\epsilon$ to control the instance-dependent complexity. By setting $\epsilon$ to be a small value, we create a problem instance such that $\rho^\star_D \gg \rho^\star_{d_\star}$: we have $\rho^\star_{d_\star} = O(d_\star)$ yet $\rho^\star_D = \Omega(1/\epsilon^2)$ (see \cref{app:rho_star_different_d} for proofs).

\begin{table}[H]
  \caption{Comparison of Success Rate}
  \label{tab:success_rate_1}
  \centering
  \begin{tabular}{lcccc}
    \toprule
          $\epsilon $ & $10^{-2}$   & $10^{-3}$ & $10^{-4}$ & $10^{-5}$\\
    \midrule
    \rage    & $100\%$ & $98\%$ & $56\%$ & $62\%$ \\
    Ours    & $100\%$ & $100\%$ & $100\%$ & $100\%$  \\
    \bottomrule
  \end{tabular}
\end{table}
\vspace{-10pt}
\paragraph{Empirical Evaluations.}
We evaluate the performance of each algorithm in terms of success rate, sample complexity and runtime.
We conduct $100$ independent trials for each algorithm. 
Both algorithms are force-stopped after reaching $10$ million samples (denoted as the black line in \cref{fig:comparison_prop}).
We consider an trial as failure if the algorithm fails to identify the best arm within $20$ million samples.
For each algorithm, we calculate the (unverifiable) sample complexity $\tau$ as the smallest integer such that the algorithm (1) empirically identifies the best arm; \emph{and} (2) the algorithm won't change its recommendation for any later rounds $t > \tau$ (up to $20$ million samples).
The (empirical) runtime of the algorithm is calculated as the total time consumed up to round $\tau$.
We average sample complexities and runtimes with respect to succeeded trials.

\begin{figure}[h]
    \centering
    \includegraphics[width=.4\textwidth]{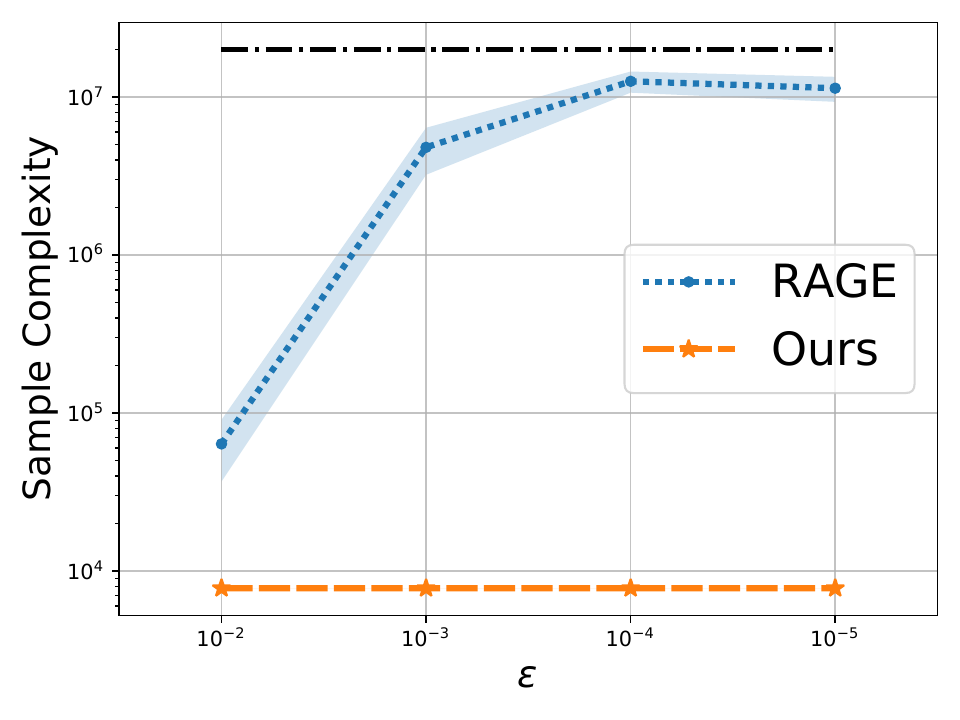}
    \caption{Comparison of Sample Complexity}
    \label{fig:comparison_prop}
\end{figure}

The success rates of \rage and our algorithm are shown in \cref{tab:success_rate_1}. 
The success rate of \rage drops dramatically as $\epsilon$ (the smallest sub-optimality gap) gets smaller. On the other hand, however, our algorithm is not affected by the change of $\epsilon$ since it automatically adapts to the intrinsic dimension $d_\star$: One can immediately see that $h(x_{d_\star}) \geq h(x_{d_\star+1})$ when working in $\R^{d_\star}$. 
Due to the same reason, our algorithm significantly outperforms \rage in sample complexity as well (see \cref{fig:comparison_prop}): Our algorithm adapts to the true sample complexity $\rho^\star_{d_\star}$ yet \rage suffers from complexity $\rho^\star_D \gg \rho^\star_{d_\star}$, especially when $\epsilon$ is small.

The runtime of both algorithms are shown in \cref{tab:runtime}.
Our algorithm is affected by the computational overhead of conducting model selection (e.g., the two dimensional doubling trick). Thus, \rage shows advantages in runtime when $\epsilon$ is relatively large.
However, our algorithm runs faster than \rage when $\epsilon$ gets smaller.
This observation further shows that the implementation overhead can be small in comparison with the sample complexity gains achieved from model selection.

\begin{table}[H]
  \caption{Comparison of Runtime}
  \label{tab:runtime}
  \centering
  \begin{tabular}{lcccc}
    \toprule
          $\epsilon $ & $10^{-2}$   & $10^{-3}$ & $10^{-4}$ & $10^{-5}$\\
    \midrule
    \rage    & $3.46\,$s & $7.87\,$s & $17.33\,$s & $16.81\,$s \\
    Ours    & $12.12\,$s & $11.17\,$s & $12.44\,$s & $12.41\,$s  \\
    \bottomrule
  \end{tabular}
\end{table}
\vspace{-10pt}

It is worth mentioning that simple variations of the problem instance studied in this section have long been considered as hard instances to examine linear bandit pure exploration algorithms \citep{soare2014best, xu2018fully, tao2018best, fiez2019sequential, degenne2020gamification}. Our results show that, both theoretically and empirically, the problem instance becomes quite easy when viewed from the model selection perspective.

\section{DISCUSSION}
\label{sec:discussion}

We initiate the study of model selection in pure exploration linear bandits, in both fixed confidence and fixed budget settings, and design algorithms with near instance optimal guarantees.
Along the way, we develop a novel selection-validation procedure to deal with the understudied fixed budget setting in linear bandits (even without the added challenge of model selection).
We also adapt our algorithms to problems with model misspecification.

We conclude the paper with some directions for future work.
An immediate next step is to conduct large-scale evaluations for model selection in pure exploration linear bandits. 
One may need to develop practical version of our algorithms to bypass the computational overheads of conducting model selection.
Another interesting direction is provide guarantees to general transductive linear bandits, i.e., not restricted to cases $\cZ \subseteq \cX$, in fixed budget setting/misspecified regime. We believe one can use a selection-validation procedure similar to the one developed in \cref{alg:subroutine_fixed_budget}, but with the current validation step replaced by another linear bandit pure exploration algorithm. 
Note that the number of arms to be validated is of logarithmic order.

\subsubsection*{Acknowledgements}
We thank anonymous reviewers for helpful comments. 
This work is partially supported by NSF grant 1934612 and ARMY MURI grant W911NF-15-1-0479.

\bibliographystyle{plainnat}
\bibliography{refs}


\clearpage
\appendix

\thispagestyle{empty}

\onecolumn \makesupplementtitle

\section{SUPPORTING MATERIALS}
\label{app:supporting}

\subsection{Matrix Inversion and Rounding in Optimal Design}
\label{app:inverse}

Our treatments are similar to the ones discussed in \cite{zhu2021pure}. We provide the details here for completeness.

\textbf{Matrix Inversion.}  The notation $\norm{y}^2_{A_d(\lambda)^{-1}}$ is clear when $A_d(\lambda)$ is invertible. For possibly singular $A_d(\lambda)$, pseudo-inverse is used if $y$ belongs to the range of $A_{d(\lambda)}$; otherwise, we set $\norm{y}_{A_{d}(\lambda)^{-1}}^2 = \infty$. With this (slightly abused) definition of matrix inversion, we discuss how to do rounding next.

\textbf{Rounding in Optimal Design.} For any $\cS \subseteq \cZ$, the following optimal design 
\begin{align*}
    \inf_{\lambda \in \simp_{\cX}} \sup_{y \in \cY(\psi_d(\cS))} \norm{y}^2_{A_{d}(\lambda)^{-1}}
\end{align*}
will select a design $\lambda^\star \in \simp_{\cX}$ such that every $y \in \cY(\psi_d(\cS))$ lies in the range of $A_{d}(\lambda^\star)$.\footnote{If the infimum is not attained, we can simply take a design $\lambda^{\star \star}$ with associated value $\tau^{\star \star} \leq (1+\zeta_0) \inf_{\lambda \in \bLambda_{\cX}} \sup_{\by \in \cY(\bpsi_d(\cS))} \norm{\by}^2_{\bA_{\bpsi_d}(\lambda)^{-1}}$ for a $\zeta_0 > 0$ arbitrarily small. This modification is used in our algorithms as well, and our results (bounds on sample complexity and error probability) goes through with changes only in constant terms. } If $\spn(\cY(\psi_d(\cS))) = \R^d$, then $A_{_d}(\lambda^\star)$ is positive definite (recall that $A_{d}(\lambda^\star) = \sum_{x \in \cX} \lambda_{x} \psi_d(x) \psi_d(x)^\top$ and $\spn(\psi_d(\cX))= \R^d$ comes from the assumption that $\spn(\psi(\cX))= \R^D$). Thus the rounding guarantees in \cite{allen2020near} goes through (Theorem 2.1 therein, which requires a positive definite design; with additional simple modifications dealt as in Appendix B of \cite{fiez2019sequential}).

We now consider the case when $A_{d}(\lambda^\star)$ is singular. Since $\spn(\psi_d(\cX)) = \R^d$, we can always find another $\lambda^\prime$ such that $A_{d}(\lambda^\prime)$ is invertible. For any $\zeta_1 >0$, let $\widetilde \lambda^\star = (1-\zeta_1) \lambda^\star + \zeta_1 \lambda^\prime $. We know that $\widetilde \lambda^\star$ leads to a positive definite design. With respect to $\zeta_1$, we can find another $\zeta_2 > 0$ small enough (e.g., smaller than the smallest eigenvalue of $\zeta_1 A_{d}(\lambda^\prime)$) such that $A_{d}(\widetilde \lambda^\star) \succeq A_{d}((1-\zeta_1) \lambda^\star) + \zeta_2 I$. Since $A_{d}((1-\zeta_1) \lambda^\star) + \zeta_2 I$ is positive definite, for any $y \in \cY(\psi_d(\cS))$, we have 
\begin{align*}
    \norm{y}^2_{A_{d}(\widetilde \lambda^\star)^{-1}} \leq \norm{y}^2_{(A_{d}((1-\zeta_1) \lambda^\star) + \zeta_2 I)^{-1}}.
\end{align*}
Fix any $y \in \cY(\psi_d(\cS))$. Since $y$ lies in the range of $A_{d}(\lambda^\star)$ (by definition of the objective and matrix inversion), we clearly have 
\begin{align*}
    \norm{y}^2_{(A_{d}((1-\zeta_1) \lambda^\star) + \zeta_2 I)^{-1}} 
    \leq \norm{y}^2_{(A_{d}((1-\zeta_1) \lambda^\star))^{-1}}
    \leq \frac{1}{1-\zeta_1} \norm{y}^2_{A_{d}(\lambda^\star)^{-1}}.
\end{align*}
To summarize, we have 
\begin{align*}
    \norm{y}^2_{A_{d}(\widetilde \lambda^\star)^{-1}} \leq \frac{1}{1-\zeta_1} \norm{y}^2_{A_{d}(\lambda^\star)^{-1}},
\end{align*}
where $\zeta_1$ can be chosen arbitrarily small. We can thus send the positive definite design $\widetilde \lambda^\star$ to the rounding procedure in \cite{allen2020near}. We can incorporate the additional $1/(1-\zeta_1)$ overhead, for $\zeta_1 >0$ chosen sufficiently small, into the sample complexity requirement $r_d(\zeta)$ of the rounding procedure.

\subsection{Supporting Theorems and Lemmas}
\label{app:supporting_thm_lm}

\begin{lemma}[\citep{kaufmann2016complexity}]
\label{lm:change_of_measure}
Fixed any pure exploration algorithm $\pi$. Let $\nu$ and $\nu^\prime$ be two bandit instances with $K$ arms such that the distribution $\nu_i$ and $\nu_i^\prime$ are mutually absolutely continuous for all $i \in [K]$. For any almost-surely finite stopping time $\tau$ with respect to the filtration $\curly{\cF_t}_{t \geq 0}$, let $N_i(\tau)$ be the number of pulls on arm $i$ at time $\tau$. We then have 
\begin{align*}
    \sum_{i=1}^K \E_{\nu} [N_i(\tau)] \kl \paren{\nu_i, \nu_i^\prime} \geq \sup_{\cE \in \cF_{\tau}} d \paren{ \P_\nu (\cE), \P_{\nu^\prime} (\cE)},
\end{align*}
where $d(x, y) = x \log(x/y) + (1-x) \log((1-x)/(1-y))$ for $x, y \in [0,1]$ and with the convention that $d(0,0) = d(1,1) = 0$.
\end{lemma}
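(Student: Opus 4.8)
The plan is to reduce the inequality to two classical facts about the log-likelihood ratio (LLR) process: a Wald-type identity that evaluates the expected LLR at the stopping time $\tau$, and a data-processing (contraction) inequality that passes from the KL divergence of the full $\cF_\tau$-measurable trajectory laws down to the binary divergence $d(\cdot,\cdot)$ of any single event. Let $p_i$ and $p_i^\prime$ be densities of $\nu_i$ and $\nu_i^\prime$ with respect to a common dominating measure (which exists by the assumed mutual absolute continuity), let $A_s \in [K]$ be the arm pulled at round $s$ and $Y_s$ the observation, and define the accumulated LLR $L_t = \sum_{s=1}^t \log \paren{ p_{A_s}(Y_s) / p_{A_s}^\prime(Y_s) }$. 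Since the policy $\pi$ is fixed and $A_s$ is $\cF_{s-1}$-measurable, the trajectory law factorizes identically in its policy part under $\nu$ and $\nu^\prime$ and differs only through the reward kernels, so $L_t$ is exactly the log Radon--Nikodym derivative of $\P_\nu$ with respect to $\P_{\nu^\prime}$ restricted to $\cF_t$.

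First I would establish the Wald identity $\E_\nu[L_\tau] = \sum_{i=1}^K \E_\nu[N_i(\tau)] \, \kl\paren{\nu_i, \nu_i^\prime}$. The cleanest route is to note that $M_t \coloneqq L_t - \sum_{s \leq t} \kl\paren{\nu_{A_s}, \nu_{A_s}^\prime}$ is a $\P_\nu$-martingale: because $A_s$ is predictable (i.e. $\cF_{s-1}$-measurable), the conditional expectation given $\cF_{s-1}$ of the $s$-th LLR increment is precisely $\kl\paren{\nu_{A_s}, \nu_{A_s}^\prime}$, so the increments of $M_t$ are conditionally mean-zero. Applying optional stopping at the a.s. finite $\tau$ gives $\E_\nu[L_\tau] = \E_\nu \sq*{ \sum_{s \leq \tau} \kl\paren{\nu_{A_s}, \nu_{A_s}^\prime} }$, and the regrouping $\sum_{s \leq \tau} \kl\paren{\nu_{A_s}, \nu_{A_s}^\prime} = \sum_{i=1}^K N_i(\tau) \, \kl\paren{\nu_i, \nu_i^\prime}$ is immediate from the definition of $N_i(\tau)$ as the pull count of arm $i$.

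Next I would carry out the data-processing step. Since $L_\tau$ is the log-likelihood ratio of the two laws on $\paren{\Omega, \cF_\tau}$, the previous identity reads $\kl\paren{ \P_\nu|_{\cF_\tau}, \P_{\nu^\prime}|_{\cF_\tau} } = \E_\nu[L_\tau]$. Now fix any event $\cE \in \cF_\tau$: its indicator $\mathbf{1}_\cE$ is a deterministic function of the $\cF_\tau$-measurable data, hence a Bernoulli statistic, and the contraction of KL divergence under this map yields $d\paren{ \P_\nu(\cE), \P_{\nu^\prime}(\cE) } = \kl\paren{ \mathrm{Ber}(\P_\nu(\cE)), \mathrm{Ber}(\P_{\nu^\prime}(\cE)) } \leq \kl\paren{ \P_\nu|_{\cF_\tau}, \P_{\nu^\prime}|_{\cF_\tau} }$. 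Chaining this with the Wald identity and taking the supremum over $\cE \in \cF_\tau$ gives exactly the stated bound.

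The main obstacle is the measure-theoretic bookkeeping at the random stopping time rather than any of the probabilistic identities themselves. Concretely, one must justify that $L_\tau$ truly represents the log Radon--Nikodym derivative of $\P_\nu$ with respect to $\P_{\nu^\prime}$ on $\cF_\tau$ (an application of the optional-stopping / change-of-measure machinery valid precisely because $\tau$ is $\curly{\cF_t}$-measurable and a.s. finite), and must verify the integrability conditions that make optional stopping applicable at a random time; the mutual absolute continuity hypothesis and the sub-Gaussian reward structure ensure finiteness of the per-step KL terms and of $\E_\nu[L_\tau]$, so the argument closes. The contraction inequality and the martingale decomposition are then routine.
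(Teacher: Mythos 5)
The paper never proves this lemma; it is imported verbatim (with citation) from Kaufmann, Capp\'e and Garivier (2016), so the only proof to compare against is the one in that reference. Your reconstruction --- identifying $L_\tau$ as the log Radon--Nikodym derivative of $\P_\nu$ with respect to $\P_{\nu^\prime}$ on $\cF_\tau$ (valid because the policy factors cancel in the likelihood ratio), the Wald-type identity $\E_\nu[L_\tau] = \sum_{i=1}^K \E_\nu[N_i(\tau)] \kl(\nu_i,\nu_i^\prime)$ via the martingale decomposition and optional stopping, and then the data-processing contraction from $\kl\paren{\P_\nu|_{\cF_\tau}, \P_{\nu^\prime}|_{\cF_\tau}}$ down to $d\paren{\P_\nu(\cE),\P_{\nu^\prime}(\cE)}$ for each $\cE \in \cF_\tau$ --- is precisely the argument of that reference and is correct; the one step demanding real care, extending optional stopping from $\tau \wedge n$ to an unbounded a.s.-finite $\tau$ (monotone convergence handles the nonnegative compensator, while the likelihood-ratio part needs an integrability argument, with the negative part of each increment always integrable since finite $\kl$ controls it), is exactly the point you flag rather than gloss over.
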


The following two lemmas largely follow the analysis in \cite{fiez2019sequential}.
\begin{lemma}
\label{lm:rho_stratified_eps}
Let $\cS_k = \curly*{z \in \cZ: \Delta_z < 4 \cdot 2^{-k}}$. We then have 
\begin{align}
    \sup_{k \in [\floor{\log_2 \paren{4/\epsilon}}]} \curly*{  2^{2k}  \iota \paren{ \cY(\psi_d(\cS_k)) } }  \leq 64 \rho_d^\star(\epsilon), \label{eq:psi_stratified_eps_0}
\end{align}
and
\begin{align}
    \sup_{k \in [\floor{\log_2 \paren{4/\epsilon}}]} \curly*{ \max \curly*{ 2^{2k}  \iota \paren{ \cY(\psi_d(\cS_k)) }, r_{d}(\zeta) } } \leq \max \curly{ 64 \rho_d^\star(\epsilon), r_d(\zeta) }, \label{eq:psi_stratified_eps_1}
\end{align}
where $\zeta$ is the rounding parameter.
\end{lemma}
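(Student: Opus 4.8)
The plan is to prove the first inequality \eqref{eq:psi_stratified_eps_0} directly and then deduce \eqref{eq:psi_stratified_eps_1} as an immediate consequence. The guiding idea is that the single (near-)optimal design witnessing $\rho_d^\star(\epsilon)$ is a legitimate feasible choice inside the infimum defining $\iota(\cY(\psi_d(\cS_k)))$, and it works simultaneously for \emph{every} $k$ in the range. So rather than re-optimizing per $k$, I would plug one good design into all of the $\iota$ problems at once.

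First I would fix a design $\lambda^\star \in \simp_{\cX}$ that attains $\rho_d^\star(\epsilon)$ (or, if the infimum is not attained, comes within arbitrarily small multiplicative slack, exactly as handled in the infimum-attainment footnote), so that $\norm{\psi_d(z_\star) - \psi_d(z)}^2_{A_d(\lambda^\star)^{-1}} \leq \rho_d^\star(\epsilon)\,\paren*{\max \curly*{\Delta_z, \epsilon}}^2$ for every $z \in \cZ \setminus \curly*{z_\star}$; finiteness of $\rho_d^\star(\epsilon)$ guarantees each such direction lies in the range of $A_d(\lambda^\star)$, so the pseudo-inverse norms are finite. Next I would translate membership in $\cS_k$ into a clean bound on $\max\curly*{\Delta_z, \epsilon}$. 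For $z \in \cS_k$ we have $\Delta_z < 4\cdot 2^{-k}$ by definition, and for $k \leq \floor{\log_2(4/\epsilon)}$ we have $2^k \leq 4/\epsilon$, i.e.\ $\epsilon \leq 4\cdot 2^{-k}$. A two-case split (whether $\Delta_z \geq \epsilon$ or $\Delta_z < \epsilon$) then gives $\max\curly*{\Delta_z, \epsilon} \leq 4\cdot 2^{-k}$ for every $z \in \cS_k$, hence $\norm{\psi_d(z_\star) - \psi_d(z)}_{A_d(\lambda^\star)^{-1}} \leq 4\cdot 2^{-k}\sqrt{\rho_d^\star(\epsilon)}$.

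The main step is then to control an arbitrary pairwise difference. For $z, z' \in \cS_k$ I would write $\psi_d(z) - \psi_d(z') = \paren*{\psi_d(z_\star) - \psi_d(z')} - \paren*{\psi_d(z_\star) - \psi_d(z)}$ and apply the triangle inequality for the seminorm $\norm{\cdot}_{A_d(\lambda^\star)^{-1}}$, obtaining $\norm{\psi_d(z) - \psi_d(z')}_{A_d(\lambda^\star)^{-1}} \leq 8\cdot 2^{-k}\sqrt{\rho_d^\star(\epsilon)}$ and, after squaring, $\norm{\psi_d(z) - \psi_d(z')}^2_{A_d(\lambda^\star)^{-1}} \leq 64\cdot 4^{-k}\rho_d^\star(\epsilon)$. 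Since $\lambda^\star$ is feasible for the infimum defining $\iota$, taking the supremum over $z, z' \in \cS_k$ yields $\iota(\cY(\psi_d(\cS_k))) \leq 64\cdot 4^{-k}\rho_d^\star(\epsilon)$; multiplying by $2^{2k} = 4^{k}$ and taking the supremum over the admissible $k$ proves \eqref{eq:psi_stratified_eps_0}. Finally, \eqref{eq:psi_stratified_eps_1} follows by applying $\max\curly*{\cdot, r_d(\zeta)}$ to both sides and using the elementary identity $\sup_k \max\curly*{a_k, c} = \max\curly*{\sup_k a_k, c}$; the hypothesis $\zeta \leq 1/4$ serves only to keep $r_d(\zeta)$ well-defined and is carried along untouched.

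I expect the only genuine subtlety---the main obstacle---to be the bookkeeping that converts the $z_\star$-referenced guarantees of $\rho_d^\star(\epsilon)$ (ratios over $\cZ \setminus \curly*{z_\star}$, in the direction to the optimal arm) into a bound on the \emph{arbitrary pairwise} differences that populate $\cY(\psi_d(\cS_k))$. The triangle inequality step is precisely where the extra factor of $2$ enters, which after squaring produces the constant $64$ rather than $16$. Everything else is direct substitution, and the edge cases are routine: when $z = z_\star$ the referenced direction is the zero vector, and $z_\star \in \cS_k$ for all $k$ since $\Delta_{z_\star} = 0$, so no pair is excluded and the bounds degrade gracefully.
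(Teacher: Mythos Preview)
Your proposal is correct and follows essentially the same approach as the paper's proof. The paper phrases the argument as a descending chain of inequalities starting from $\rho_d^\star(\epsilon)$ and invoking the swap $\inf_\lambda \sup_k \geq \sup_k \inf_\lambda$, whereas you fix a single (near-)optimal design $\lambda^\star$ and plug it into every $\iota$ problem; these are two presentations of the same idea, and the decomposition $\psi_d(z)-\psi_d(z') = (\psi_d(z_\star)-\psi_d(z')) - (\psi_d(z_\star)-\psi_d(z))$ together with the bound $\max\{\Delta_z,\epsilon\} \leq 4\cdot 2^{-k}$ on $\cS_k$ is exactly the paper's route to the constant $64$.
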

\begin{proof}
For $y = \psi_d(z_\star) - \psi_d(z)$, we define $\Delta_y = \Delta_z = h(z_\star) - h(z)$. We have that 
\begin{align}
    \rho_d^\star (\epsilon) & = \inf_{\lambda \in \simp_{\cX}} \sup_{ y \in \cY^\star(\psi_d(\cZ))} \frac{\norm{y}^2_{A_d(\lambda)^{-1}}}{\max \curly*{\Delta_y, \epsilon}^2} \nonumber\\
    & = \inf_{\lambda \in \simp_{\cX}} \sup_{k \in [\floor{\log_2 \paren{4/\epsilon}}]} \sup_{ y \in \cY^\star(\psi_d(\cS_k))} \frac{\norm{y}^2_{A_d(\lambda)^{-1}}}{\max \curly*{\Delta_y, \epsilon}^2} \nonumber\\
    & \geq \sup_{k \in [\floor{\log_2 \paren{4/\epsilon}}]} \inf_{\lambda \in \simp_{\cX}}  \sup_{ y \in \cY^\star(\psi_d(\cS_k))} \frac{\norm{y}^2_{A_d(\lambda)^{-1}}}{\max \curly*{\Delta_y, \epsilon}^2} \nonumber\\
    & >  \sup_{k \in [\floor{\log_2 \paren{4/\epsilon}}]} \inf_{\lambda \in \simp_{\cX}} \sup_{ y \in \cY^\star(\psi_d(\cS_k))} \frac{\norm{y}^2_{A_d(\lambda)^{-1}}}{\paren{4 \cdot 2^{-k}}^2} \label{eq:psi_stratified_eps_2}\\
    & \geq \sup_{k \in [\floor{\log_2 \paren{4/\epsilon}}]} \inf_{\lambda \in \simp_{\cX}} \sup_{ y \in \cY(\psi_d(\cS_k))} \frac{\norm{y}^2_{A_d(\lambda)^{-1}} / 4}{\paren{4 \cdot 2^{-k}}^2} \label{eq:psi_stratified_eps_3}\\
    & \geq  \sup_{k \in [\floor{\log_2 \paren{4/\epsilon}}]} 2^{2k} \iota (\cY(\psi_d(\cS_k))) / 64, \nonumber
\end{align}
where \cref{eq:psi_stratified_eps_2} comes from the fact that $4 \cdot 2^{-k} \geq \epsilon$ when $k \leq \floor*{\log_2(4/\epsilon)}$; \cref{eq:psi_stratified_eps_3} comes from the fact that $\psi_d(z) - \psi_d(z^\prime) = (\psi_d(z) - \psi_d(z_\star)) + (\psi_d(z_\star) - \psi_d(z^\prime))$. This implies that, for any $k \in [\floor{\log_2 \paren{4/\epsilon}}]$,
\begin{align*}
    \max \curly{2^{2k} \rho(\cY(\psi_d(\cS_k))), r_d(\zeta)} \leq \max \curly{64 \rho_d^\star(\epsilon), r_d(\zeta)}.
\end{align*}
And the desired \cref{eq:psi_stratified_eps_1} immediately follows.
\end{proof}

\begin{lemma}
\label{lm:rho_stratified}
Let $\cS_k = \curly*{z \in \cZ: \Delta_z < 4 \cdot 2^{-k}}$. We then have 
\begin{align}
    \sup_{k \in [\ceil{\log_2 \paren{4/\Delta_{\min}}}]} \curly*{  2^{2k}  \iota \paren{ \cY(\psi_d(\cS_k)) } }  \leq 64 \rho_d^\star, \label{eq:psi_stratified_0}
\end{align}
and
\begin{align}
    \sup_{k \in [\ceil{\log_2 \paren{4/\Delta_{\min}}}]} \curly*{ \max \curly*{ 2^{2k}  \iota \paren{ \cY(\psi_d(\cS_k)) }, r_{d}(\zeta) } } \leq \max \curly{ 64 \rho_d^\star, r_d(\zeta) }, \label{eq:psi_stratified_1}
\end{align}
where $\zeta$ is the rounding parameter.
\end{lemma}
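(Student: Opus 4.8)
The plan is to follow the proof of \cref{lm:rho_stratified_eps} essentially verbatim, specializing the relaxation parameter to $\epsilon = 0$. Note first that $\rho_d^\star = \rho_d^\star(0)$, since for every suboptimal $z$ we have $\max\curly{\Delta_z, 0} = \Delta_z$. Writing $y = \psi_d(z_\star) - \psi_d(z)$ and $\Delta_y = \Delta_z$, I would begin by reformulating the complexity as $\rho_d^\star = \inf_{\lambda \in \simp_{\cX}} \sup_{y \in \cY^\star(\psi_d(\cZ))} \norm{y}^2_{A_d(\lambda)^{-1}} / \Delta_y^2$, where the supremum ranges over the nonzero directions, i.e.\ over $z \ne z_\star$.

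The core of the argument reuses the same four moves as in \cref{lm:rho_stratified_eps}. First, since $\cS_1 = \cZ$ and $\cS_k \subseteq \cS_1$, the supremum over $\cY^\star(\psi_d(\cZ))$ equals the supremum over $k$ of the suprema over $\cY^\star(\psi_d(\cS_k))$, so I can insert the stratified index $k \in [\ceil{\log_2(4/\Delta_{\min})}]$ without changing the value. Second, I would apply the max--min inequality to pull $\sup_k$ outside $\inf_\lambda$. Third, for every $z \in \cS_k$ with $z \ne z_\star$ the defining inequality $\Delta_z < 4 \cdot 2^{-k}$ lets me replace the denominator $\Delta_y^2$ by the larger $(4 \cdot 2^{-k})^2$, producing the strict lower bound of \cref{eq:psi_stratified_eps_2}. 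Fourth, the decomposition $\psi_d(z) - \psi_d(z^\prime) = (\psi_d(z) - \psi_d(z_\star)) + (\psi_d(z_\star) - \psi_d(z^\prime))$ together with $\norm{a+b}^2 \le 2\norm{a}^2 + 2\norm{b}^2$ converts the supremum over $\cY^\star$ into one over the full set $\cY$ at the cost of a factor $1/4$, as in \cref{eq:psi_stratified_eps_3}. Collecting the constants $(4 \cdot 2^{-k})^2 = 16 \cdot 2^{-2k}$ and invoking the definition of $\iota$ then yields $\rho_d^\star \ge \sup_k 2^{2k} \iota(\cY(\psi_d(\cS_k)))/64$, which is \cref{eq:psi_stratified_0}.

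The only genuinely new point relative to \cref{lm:rho_stratified_eps} is justifying that capping the index at $k = \ceil{\log_2(4/\Delta_{\min})}$ is without loss, since here there is no $\epsilon$ to truncate the range. I expect this to be the one step requiring explicit attention, though it is elementary: for $k > \log_2(4/\Delta_{\min})$ we have $4 \cdot 2^{-k} < \Delta_{\min}$, and because every suboptimal arm satisfies $\Delta_z \ge \Delta_{\min}$, the stratum $\cS_k$ then contains only $z_\star$. Consequently $\cY(\psi_d(\cS_k)) = \cY^\star(\psi_d(\cS_k)) = \curly{0}$, so such strata contribute nothing to any of the suprema and restricting $k$ to $[\ceil{\log_2(4/\Delta_{\min})}]$ is harmless.

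Finally, \cref{eq:psi_stratified_1} follows from \cref{eq:psi_stratified_0} by taking a maximum with $r_d(\zeta)$ on both sides, exactly as the last display of \cref{lm:rho_stratified_eps} is deduced. I do not anticipate a real obstacle: the entire lemma is the $\epsilon \to 0$ shadow of \cref{lm:rho_stratified_eps}, and the only care needed is the observation about empty upper strata described above.
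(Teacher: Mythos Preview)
Your proposal is correct and essentially retraces the paper's argument. The paper's own proof is a one-liner: it invokes \cref{lm:rho_stratified_eps} directly with $\epsilon = \Delta_{\min}$ (so that $\rho_d^\star(\Delta_{\min}) = \rho_d^\star$), whereas you set $\epsilon = 0$ and redo the chain of inequalities; your empty-strata observation is exactly what covers the ceiling-versus-floor discrepancy in the index range between the two lemmas.
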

\begin{proof}
Take $\epsilon = \Delta_{\min}$ in \cref{lm:rho_stratified_eps}.
\end{proof}

The following lemma largely follows the analysis in \cite{soare2014best}, with generalization to the transductive setting and more careful analysis in terms of matrix inversion. 
\begin{lemma}
\label{lm:psi_ub_lb}
Fix $\cZ \subseteq \cX \subseteq \R^D$. Suppose $\max_{x \in \cX} \norm{x}^2 \leq c_1$ and $\min_{z \in \cZ \setminus \curly*{z_\star}}\norm{z_\star - z}^2 \geq c_2$ with some absolute constant $c_1$ and $c_2$. We have 
\begin{align*}
    \frac{c_2}{c_1 \Delta_{\min}^2} \leq \rho^\star \ldef \inf_{\lambda \in \simp_{\cX}} \sup_{z \in \cZ \setminus \{z_\star \}} \frac{\norm*{z_{\star}-z}^2_{A(\lambda)^{-1}}}{\Delta_z^2},
\end{align*}
where $\Delta_{\min} = \min_{z \in \cZ \setminus \curly{z_\star}}\curly{\Delta_z}$.
\end{lemma}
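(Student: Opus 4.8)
The plan is to prove the lower bound by evaluating the inner supremum at a single fixed arm and then controlling the resulting quadratic form uniformly over all designs. First I would fix an arm $z' \in \cZ \setminus \curly*{z_\star}$ attaining the minimal gap, so that $\Delta_{z'} = \Delta_{\min}$, and set $y = z_\star - z'$. For any $\lambda \in \simp_{\cX}$, the supremum over $z \in \cZ \setminus \curly*{z_\star}$ is at least the value of the objective at $z'$, namely $\norm{y}^2_{A(\lambda)^{-1}} / \Delta_{\min}^2$. Since this holds for every $\lambda$, it suffices to show the \emph{uniform} lower bound $\norm{y}^2_{A(\lambda)^{-1}} \geq c_2/c_1$ for all $\lambda$, and then pass to the infimum.

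The key tool is a matrix Cauchy--Schwarz inequality: for any positive semidefinite $M$ and any $y$ in its range, $\norm{y}^4 \leq (y^\top M y)(y^\top M^{+} y)$, which rearranges to $\norm{y}^2_{M^{-1}} \geq \norm{y}^4 / (y^\top M y)$. Applying this with $M = A(\lambda) = \sum_{x \in \cX} \lambda_x x x^\top$, I would then bound the denominator by
$$
    y^\top A(\lambda) y = \sum_{x \in \cX} \lambda_x (x^\top y)^2 \leq \norm{y}^2 \sum_{x \in \cX} \lambda_x \norm{x}^2 \leq c_1 \norm{y}^2,
$$
using the scalar Cauchy--Schwarz inequality $(x^\top y)^2 \leq \norm{x}^2 \norm{y}^2$ termwise, the hypothesis $\norm{x}^2 \leq c_1$, and $\sum_{x} \lambda_x = 1$. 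Combining the two displays gives $\norm{y}^2_{A(\lambda)^{-1}} \geq \norm{y}^2 / c_1 \geq c_2/c_1$, where the final step invokes $\norm{z_\star - z'}^2 \geq c_2$. Hence for every $\lambda$ the inner supremum is at least $c_2 / (c_1 \Delta_{\min}^2)$, and taking the infimum over $\lambda \in \simp_{\cX}$ establishes the claim.

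The one point requiring care, and what I would treat as the main obstacle, is the possibly singular design matrix. Here I would invoke the pseudo-inverse convention from \cref{app:inverse}: when $y \notin \mathrm{range}(A(\lambda))$ we set $\norm{y}^2_{A(\lambda)^{-1}} = \infty$, so the desired bound is trivially satisfied; and when $y \in \mathrm{range}(A(\lambda))$ the matrix Cauchy--Schwarz inequality still holds, which I would verify by diagonalizing $A(\lambda) = \sum_i \sigma_i v_i v_i^\top$ over its nonzero eigendirections and writing the inequality as the scalar Cauchy--Schwarz bound $\paren*{\sum_i y_i^2}^2 \leq \paren*{\sum_i \sigma_i y_i^2}\paren*{\sum_i y_i^2/\sigma_i}$ with $y_i = \ang*{y, v_i}$. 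This confirms the pointwise lower bound is valid for \emph{all} designs rather than only positive-definite ones, which is exactly what is needed for the infimum argument to go through.
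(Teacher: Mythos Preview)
Your proof is correct and follows essentially the same architecture as the paper's: fix an arm $z'$ achieving $\Delta_{\min}$, then show $\norm{z_\star - z'}^2_{A(\lambda)^{-1}} \geq c_2/c_1$ uniformly in $\lambda$, handling the singular case via the pseudo-inverse convention. The only difference is cosmetic: the paper bounds the operator norm $\norm{A(\lambda)}_2 \leq c_1$ and then uses the eigendecomposition directly to conclude $w^\top \Sigma^{-1} w \geq \norm{w}^2/c_1$, whereas you reach the same inequality via the matrix Cauchy--Schwarz bound $\norm{y}^2_{M^{-1}} \geq \norm{y}^4/(y^\top M y)$ together with the termwise estimate $y^\top A(\lambda) y \leq c_1 \norm{y}^2$; both routes yield the identical pointwise bound.
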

\begin{proof}
Let $\lambda^\star$ be the optimal design that attains $\rho^\star$;\footnote{If the infimum is not attained, one can apply the argument that follows with a limit sequence. See footnote in \cref{app:inverse} for more details on how to construct an approximating design.} and let $z^\prime \in \cZ$ be any arm with the smallest sub-optimality gap $\Delta_{\min}$. We then have 
\begin{align}
    \rho^\star & = \max_{z \in \cZ \setminus \{z_\star \}} \frac{\norm*{z_{\star}-z}^2_{A(\lambda^\star)^{-1}}}{\Delta_z^2} \nonumber\\
    & \geq \frac{\norm*{z_{\star}-z^\prime}^2_{A(\lambda^\star)^{-1}}}{\Delta_{z^\prime}^2} \nonumber \\
    & = \frac{\norm*{z_{\star}-z^\prime}^2_{A(\lambda^\star)^{-1}}}{\Delta_{\min}^2}, \label{eq:psi_ub_lb_Delta_min}
\end{align}
where $z_\star - z^\prime$ necessarily lie in the range of $A(\lambda^\star)$ according to the definition of matrix inversion in \cref{app:inverse}. 

We now lower bound $\norm*{z_{\star}-z^\prime}^2_{A(\lambda^\star)^{-1}}$. Note that $A(\lambda^\star)$ is positive semi-definite. We write $A(\lambda^\star) = Q \Sigma Q^\top$ where $Q$ is an orthogonal matrix and $\Sigma$ is a diagonal matrix storing eigenvalues. We assume that the last $k$ eigenvalues of $\Sigma$ are zero. Let $\gamma_{\max} = \norm{A(\lambda^\star)}_2 = \norm{\Sigma}_2$ be the largest eigenvalue, we have $\gamma_{\max} \leq \max_{x \in \cX} \norm{x}^2 \leq c_1$ since $A(\lambda^\star) = \sum_{x \in \cX} \lambda^\star(x) x x^\top$ and $\sum_{x \in \cX} \lambda^\star (x) = 1$. Let $w = Q^\top (z_\star - z^\prime)$. Since $z_\star - z^\prime$ is in the range of $A(\lambda^\star)$, we know that the last $k$ entries of $w$ must be zero. We then have 
\begin{align}
    \norm*{z_{\star}-z^\prime}^2_{A(\lambda^\star)^{-1}} & = (z_\star - z)^\top A(\lambda^\star)^{-1} (z_\star - z) \nonumber \\
    & = w^\top \Sigma^{-1} w  \nonumber \\
    & \geq{\norm{w}^2}/{c_1} \nonumber \\
    & \geq c_2/c_1, \label{eq:psi_ub_lb_dist}
\end{align}
where \cref{eq:psi_ub_lb_dist} comes from fact that $\norm{w}^2 = \norm{z_\star - z^\prime}^2$ and the assumption $\norm{z_\star - z}^2 \geq c_2$ for all $z \in \cZ$.
\end{proof}

\begin{lemma}
\label{lm:relation_log}
The following statements hold.
\begin{enumerate}
    \item $T \geq 4a \log 2a \implies T \geq a \log_2 T$ for $T,a > 0$.
    \item $T \geq 16 a \, (\log 16 a)^2 \implies T \geq a \, (\log_2 T)^2$ for $T,a > 1$.
\end{enumerate}
\end{lemma}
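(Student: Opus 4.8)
The plan is to derive both statements from a single tangent-line inequality for the logarithm and then reduce part (ii) to part (i) by a square-root substitution. For part (i), I would start from the elementary bound for the concave map $\ln$: for every $x>0$ and every $\alpha>0$, $\ln x \le \alpha x + \ln(1/\alpha) - 1$, with equality at $x=1/\alpha$. Dividing by $\ln 2$ turns this into a linear upper bound on $\log_2 T$, and I would choose $\alpha = (\ln 2)/(2a)$ precisely so that the linear term collapses to $T/2$. Substituting $x=T$ and simplifying the constant, this yields $a\log_2 T \le \tfrac{T}{2} + a\log_2(2a) - c\,a$ for an absolute constant $c>0$, an inequality valid for \emph{all} $T,a>0$ with no case distinction (the tangent bound holds globally, so no separate small-$a$ analysis is needed).

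It then suffices to show the hypothesis forces $a\log_2(2a)\le T/2$, since combining with the previous display gives $a\log_2 T \le T/2 + T/2 = T$. For $a\ge 1/2$ we have $\log_2(2a)\ge 0$ and $4a\log(2a)\ge 2a\log_2(2a)$ regardless of whether $\log$ denotes $\log_2$ or $\ln$ (the factor $4$ absorbs the change-of-base constant $\ln 2\approx 0.69$), so $T\ge 4a\log(2a)$ gives $a\log_2(2a)\le T/2$; for $a<1/2$ the quantity $a\log_2(2a)$ is non-positive and the bound is trivial since $T>0$. This closes part (i).

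For part (ii) I would reduce directly to part (i) via the substitution $S=\sqrt T$ and $a'=2\sqrt a$. Rewriting the target $a(\log_2 T)^2\le T$ as $2\sqrt a\,\log_2 S \le S$, i.e. $S\ge a'\log_2 S$, we see the conclusion is exactly the conclusion of part (i) applied to the pair $(S,a')$. The hypothesis required by part (i) is $S\ge 4a'\log(2a')$, which reads $\sqrt T\ge 8\sqrt a\,\log(4\sqrt a)$; squaring and using the identity $2\log(4\sqrt a)=\log\big((4\sqrt a)^2\big)=\log(16a)$ gives exactly $T\ge 16a(\log 16a)^2$, matching the stated hypothesis with no loss. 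The standing assumptions $T,a>1$ ensure $S=\sqrt T$ and $a'=2\sqrt a$ lie in the admissible range and keep all logarithms positive.

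The only genuine subtlety is constant bookkeeping: verifying that the factor-$4$ (resp. factor-$16$) slack is enough to swallow both the change-of-base constant and the additive $-c\,a$ correction so that no small-$a$ case arises, and confirming that the reduction in part (ii) preserves constants \emph{exactly} through the identity $2\log(4\sqrt a)=\log(16a)$. I expect the additive-constant tracking in part (i) to be the fiddliest step, but it is purely mechanical once $\alpha=(\ln 2)/(2a)$ is fixed.
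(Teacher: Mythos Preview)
Your proposal is correct. The reduction of part (ii) to part (i) via the square-root substitution $S=\sqrt T$, $a'=2\sqrt a$ and the identity $2\log(4\sqrt a)=\log(16a)$ is exactly the maneuver the paper uses, and your constant bookkeeping checks out (with $a>1$ the right-hand side is nonnegative, so squaring is an equivalence).

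The only substantive difference is in how you establish part (i). The paper treats the base inequality $T\ge 2a\log a \Rightarrow T\ge a\log T$ as a black box (citing \cite{shalev2014understanding}) and then converts bases via $\log_2 T<2\log T$, invoking this black box a second time inside the square-root reduction for (ii). You instead prove (i) from scratch via the tangent bound $\ln x\le \alpha x+\ln(1/\alpha)-1$ with the specific choice $\alpha=(\ln 2)/(2a)$, which yields $a\log_2 T\le T/2 + a\log_2(2a)-ca$ directly in base $2$, and then feed part (i) itself (rather than the cited fact) into the reduction for (ii). Your route is more self-contained and avoids the external reference, at the cost of tracking the additive constant $c=1/\ln 2+\log_2(\ln 2)\approx 0.91$ explicitly; the paper's route is shorter to write down but relies on the reader trusting the citation. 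Both arrive at the same constants $4$ and $16$ without slack to spare.
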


\begin{proof}
We first recall that $T \geq 2a \log a \implies T \geq a \log T$ for $T,a > 0$ \citep{shalev2014understanding}. Since $\log_2 T = \log T / \log 2 < 2 \log T$, the first statement immediately follows.

To prove the second statement, we only need to find conditions on $T$ such that $T \geq 4a \, (\log T)^2$. Note that we have $\sqrt{T} \geq 8 \sqrt{a}  \log 4 \sqrt{a} = 4 \sqrt{a} \log 16 a \implies \sqrt{T} \geq 4 \sqrt{a} \log \sqrt{T} = 2 \sqrt{a} \log T$. For $T,a > 1$, this is equivalent to 
$T \geq 16 a \, (\log 16 a)^2 \implies T \geq 4 a \, (\log T)^2 \geq a \, (\log_2 T)^2$, and thus the second statement follows.
\end{proof}

\subsection{Supporting Algorithms}
\label{app:supporting_alg}

\begin{algorithm}[H]
    \caption{\opt}
    \label{alg:opt}
    \renewcommand{\algorithmicrequire}{\textbf{Input:}}
	\renewcommand{\algorithmicensure}{\textbf{Output:}}
	\begin{algorithmic}[1]
	    \REQUIRE Selection budget $B$, dimension upper bound $D$ and selection function $g(\cdot)$ (which is a function of the dimension $d \in [D]$).
	    \STATE Get $d_k$ such that
	        \begin{align*}
                d_k   = & \max d \\
                & \text{ s.t. } g(d) \leq B, \text{ and } d \in [D] \nonumber .
            \end{align*}
        \ENSURE The selected dimension $d_k$.
	\end{algorithmic}
\end{algorithm}

\section{OMITTED PROOFS FOR SECTION \ref{sec:true_complexity}}

\subsection{Proof of \cref{thm:lower_bound_delta_PAC}}
\label{app:lower_bound_PAC}
\thmLowerBoundDeltaPAC*
\begin{proof}
The proof of the theorem mostly follows the proof of lower bound in \cite{fiez2019sequential}. We additionally consider the model selection problem $(\cX, \cZ$, $\theta_\star \in \Theta_{d_\star})$ and carefully deal with the matrix inversion. 

Consider the instance $(\cX, \cZ$, $\theta_\star \in \Theta_{d_\star})$, where $\cX = \{x_1,\ldots, x_n\}$ and $\spn(\cX) = \R^D$, $\cZ = \{z_1,\ldots, z_m\}$. Suppose that $z_1 = \argmax_{z \in \cZ} \ang*{\theta_\star, z}$. We consider the alternative set $\cC_{d_\star} \ldef \curly*{ \theta \in \Theta_{d_\star}: \exists i \in [m] \text{ s.t. } \ang*{\theta, z_1 - z_i} < 0 }$, where $z_1$ is not the best arm for any $\theta \in \cC_{d_\star}$. Following the ``change of measure'' argument in \cref{lm:change_of_measure}, we know that $\E_{\theta_\star}[\tau] \geq \tau^\star$, where $\tau^\star$ is the solution of the following constrained optimization
\begin{align}
    \tau^\star  & \ldef \min_{t_1, \dots, t_n \in \R_{+}} \sum_{i=1}^n t_i \label{eq:lower_bound_PAC_opt} \\
    & \qquad \text{ s.t. } \inf_{\theta \in \cC_{d_\star}} \sum_{i=1}^n t_i \kl(\nu_{\theta_\star,i}, \nu_{\theta,i}) \geq \log(1/2.4 \delta)\nonumber ,
\end{align}
where we use the notation $\nu_{\theta,i} = \cN ( \ang*{ \theta, x_i }, 1) = \cN (\ang*{\psi_{d_\star}(\theta), \psi_{d_\star}(x_i)}, 1)$ (due to the fact that $\theta \in \cC_{d_\star}$). We also have $\kl(\nu_{\theta_\star,i}, \nu_{\theta,i}) = \frac{1}{2} \ang*{\psi_{d_\star}(\theta_\star)-\psi_{d_\star}(\theta), \psi_{d_\star}(x_i)}^2$.

We next show that for any $t = (t_1, \dots, t_n)^\top \in \R_+^n$ satisfies the constraint of \cref{eq:lower_bound_PAC_opt}, we must have $\psi_{d_\star}(z_1) -\psi_{d_\star}(z_i) \in \spn(\{ \psi_{d_\star}(x_i) : t_i > 0\}), \forall \, 2\leq i \leq m$. Suppose not, there must exists a $\psi_{d_\star}(u) \in \R^{d_\star}$ such that (1) $\ang*{\psi_{d_\star}(u), \psi_{d_\star}(x_i)} = 0$ for all $i \in [n]$ such that $t_i > 0$; and (2) there exists a $2 \leq j \leq m$ such that $\ang*{\psi_{d_\star}(z_1) - \psi_{d_\star}(z_j), \psi_{d_\star}(u) } \neq 0$. Suppose $\ang*{\psi_{d_\star}(z_1 ) - \psi_{d_\star}(z_j), \psi_{d_\star}(u) } > 0$ (the other direction is similar), we can choose a $\theta^\prime \in \Theta_{d_\star}$ such that the first $d_\star$ coordinates of $\theta^\prime$ equals to $\psi_{d_\star}(\theta_\star) - \alpha \, \psi_{d_\star}(u)$ for a $\alpha > 0$ large enough (so that $\theta^\prime \in \cC_{d_\star}$). With such $\theta^\prime$, however, we have 
\begin{align*}
    \sum_{i=1}^n t_i\kl(\nu_{\theta_\star,i}, \nu_{\theta^\prime,i}) = \sum_{i=1}^n t_i \frac{1}{2} \ang*{\alpha \, \psi_{d_\star}(u), \psi_{d_\star}(x_i)}^2 = 0 < \log(1/2.4 \delta),
\end{align*}
which leads to a contradiction. As a result, we can safely calculate $\norm{\psi_{d_\star}(z_1) -\psi_{d_\star}(z_i)}^2_{A_{d_\star}(t)^{-1}}$ or $A_{d_\star}(t)^{-1} (\psi_{d_\star}(z_1) -\psi_{d_\star}(z_i))$ where $A_{d_\star}(t) \ldef \sum_{i=1}^n t_i \psi_{d_\star}(x_i) \psi_{d_\star}(x_i)^\top / \bar t$ and $\bar t \ldef \sum_{i=1}^n t_i$. The rest of the proof follows from the proof of theorem 1 in \cite{fiez2019sequential}.
\end{proof}

\subsection{Proof of \cref{thm:lower_bound_non_interactive}}
\label{app:non_interactive}
\thmLowerBoundNonInteractive*
\begin{proof}
The proof largely follows from the proof of Theorem 3 in \cite{katz2020empirical} (but ignore the $\gamma^\star$ term therein. We are effectively using a weaker lower bound, yet it suffices for our purpose. ). The non-interactive MLE uses at least $\frac{1}{2} \rho^\star_d \log(1/\delta)$ with respect to any feature mapping $\psi_d(\cdot)$ for $d_\star \leq d \leq D$. The statement then follows from the monotonicity of $\curly*{\rho^\star_d}_{d=d_\star}^D$ as shown in \cref{prop:rho_monotonic}.
\end{proof}

\subsection{Proof of \cref{prop:rho_monotonic}}

\propRhoMonotonic*

\begin{proof}

We first prove equivalence results in the general setting in Step 1, 2 and 3; and then apply the results to the model selection problem in Step 4 to prove monotonicity over $\curly*{\rho_d^\star}_{d = d_\star}^D$. 

We consider instance $(\cX, \cZ$, $\theta_\star)$ in the general setting, where $\cX = \{x_1,\ldots, x_n\} \subseteq \R^d$, $\spn(\cX) = \R^d$, $\cZ = \{z_1,\ldots, z_m\}$ and $\theta_\star \in \R^d$. We suppose that $z_1  = \argmax_{z \in \cZ} \ang*{\theta_\star, z}$ is the unique optimal arm and $\spn(\curly*{z_1 - z}_{z \in \cZ \setminus \curly*{z_1}}) = \R^d$. We use the notations $y_j \ldef z_1-z_j$ for $j=2,\ldots, m$, and $\nu_{\theta,i} \ldef \mathcal{N}(x_i^\top \theta, 1)$. For any $t = (t_1, \ldots, t_n)^\top \in \R^n_+$, we also use the notation $A(t) = \sum_{i=1}^n t_i x_i x_i^\top \in \R^{d \times d}$ to denote a design matrix with respect to $t$ ($t$ doesn't need to be inside the simplex $\simp_{\cX}$). We consider any fixed $\delta \in (0,0.15]$.

\textbf{Step 1: Closure of constraints.} Let $\cC$ denote the set of parameters where $z_1$ is no longer the best arm anymore, i.e.,
\begin{align*}
    \cC & \ldef \{\theta \in \R^d : \exists i \in [m] \text{ s.t. } \theta^\top (z_1-z_i) < 0 \}.
\end{align*}
Using the ``change of measure'' argument from \cite{kaufmann2016complexity}, the lower bound is given by the following optimization problem \citep{audibert2010best, fiez2019sequential}
\begin{align*}
    \tau^\star & := \min_{t_1,\ldots, t_n \in \R_+} \sum_{i=1}^n t_i \\
    & \qquad \text{ s.t. } \inf_{\theta \in \cC} \sum_{i=1}^n t_i \kl( \nu_{\theta_\star,i}, \nu_{\theta,i}) \geq \log(1/2.4 \delta). 
\end{align*}
First, we show that the value $\tau^\star$ equals to the value of another optimization problem, i.e.,
\begin{align*}
    \tau^\star & = \min_{t_1,\ldots, t_n \in \R_+} \sum_{i=1}^n t_i \\
    & \qquad \text{ s.t. } \min_{\theta \in \bar{\cC}} \sum_{i=1}^n t_i \kl( \nu_{\theta_\star,i}, \nu_{\theta,i}) \geq \log(1/2.4 \delta),
\end{align*}
where $\bar{\cC} =  \{\theta \in \R^d : \exists i \in [m] \text{ s.t. } \theta^\top (z_1-z_i) \leq 0 \}$. Note that that we must show that the minimum in the constraint is attained, i.e., the $\min_{\theta \in \bar \cC}$ part. We first show the equivalence between the original problem and the problem with respect to $\inf_{\theta \in \bar \cC}$; and then show the equivalence between problems with respect to $\inf_{\theta \in \bar \cC}$ and $\min_{\theta \in \bar \cC}$. We fix any $t=(t_1,\ldots,t_n)^\top \in \R^n_+$.

\textbf{Step 1.1:} We claim that $\inf_{\theta \in \cC} \sum_{i=1}^n t_i \kl( \nu_{\theta_\star,i}, \nu_{\theta,i}) \geq \log(1/2.4 \delta)$ if and only if $\inf_{\theta \in \bar{\cC}} \sum_{i=1}^n t_i \kl( \nu_{\theta_\star,i},  \nu_{\theta,i}) \geq \log(1/2.4 \delta)$. 

Since $\bar{\cC} \supset \cC$, the $\Longleftarrow$ direction is obvious. 

Now, suppose $\inf_{\theta \in \bar{\cC}} \sum_{i=1}^n t_i \kl( \nu_{\theta_\star,i}, \nu_{\theta,i}) < \log(1/2.4 \delta)$. By definition of $\inf$, there exists $\theta_0 \in \bar{\cC}$ such that
\begin{align*}
     \sum_{i=1}^n t_i \kl( \nu_{\theta_\star,i}, \nu_{\theta_0,i}) < \log(1/2.4 \delta).
\end{align*}
Since $\bar \cC$ is the closure of an open set $\cC$, there exists a sequence $\curly*{\theta_j}$ in $\cC$ approaching $\theta_0$. 
Note that 
\begin{align*}
    \sum_{i=1}^n t_i \kl( \nu_{\theta_\star,i}, \nu_{\theta,i}) = \sum_{i=1}^n t_i  \frac{1}{2} (x_i^\top (\theta_\star - \theta))^2 = \frac{1}{2} \norm{\theta_\star-\theta}_{A(t)}^2.
\end{align*}
Then, by the continuity of $\frac{1}{2} \norm{\theta_\star -\theta}_{A(t)}^2$ in $\theta$, there exists a $\theta \in \cC$ such that $\sum_{i=1}^n t_i \kl( \nu_{\theta_\star,i}, \nu_{\theta,i}) < \log(1/2.4 \delta)$. This gives a contradiction and thus proves the $\Longrightarrow$ direction.

\textbf{Step 1.2:} Now, we must show that the infimum is attained whenever $\inf_{\theta \in \bar \cC} \sum_{i=1}^n t_i \kl( \nu_{\theta_\star,i} || \nu_{\theta,i}) \geq \log(1/2.4 \delta)$, that is, there exists $\theta_0 \in \bar \cC $ such that
\begin{align*}
  \sum_{i=1}^n t_i \kl( \nu_{\theta_\star,i}, \nu_{\theta_0,i}) & =   \inf_{\theta \in \bar{\cC}} \sum_{i=1}^n t_i \kl( \nu_{\theta_\star,i}, \nu_{\theta,i}).
\end{align*}

\textbf{Claim:} Fix $t = (t_1,\ldots, t_n)^\top \in \R_+^n$. If $\spn(\{x_i : t_i > 0\}) \neq \R^d$, then $\inf_{\theta \in \bar{\cC}} \sum_{i=1}^n t_i \kl( \nu_{\theta_\star,i} , \nu_{\theta,i}) < \log(1/2.4 \delta)$. 

First, we show the claim. Fix $t = (t_1,\ldots, t_n)^\top \in \R_+^n$ and suppose $\spn(\{x_i : t_i > 0\}) \neq \R^d$. Since $\spn(\{x_i : t_i > 0\}) \neq \R^d$, there exists $u \in \R^d $ such that $u^\top x_i = 0$ for all $i$ such that $t_i > 0$. Since $\{z_1-z_i : i \in [m]\}$ spans $\R^d$ by assumption, there exists $i \in [m]$ such that $u^\top (z_1-z_i) \neq 0$. Suppose that $u^\top (z_1-z_i) < 0$ (the other case is similar). Then, there exists a sufficiently large $\alpha > 0$ such that $(\theta_\star + \alpha u)^\top (z_1 -z_i) < 0$, implying that $\theta_\star + \alpha u \in \cC$. Moreover, by construction of $u$, we have
\begin{align*}
    \sum_{i=1}^n t_i \kl( \nu_{\theta_\star,i}, \nu_{\theta_\star + \alpha u,i}) &  = \sum_{i=1}^n t_i  \frac{1}{2} (x_i^\top (\alpha u))^2 =  \sum_{i : t_i > 0} t_i  \frac{1}{2} (x_i^\top (\alpha u))^2 = 0 < \log(1/2.4 \delta),
\end{align*}
and thus leads to the claim.

Now, suppose $\inf_{\theta \in \bar{\cC}} \sum_{i=1}^n t_i \kl( \nu_{\theta_\star,i}, \nu_{\theta,i}) \geq \log(1/2.4 \delta)$. Then, $\spn(\{x_i : t_i > 0\}) = \R^d$. Then, $\norm{\cdot}_{A(t)}^2$ is a norm, and the set
\begin{align*}
    \curly{ \theta \in \R^d : \frac{1}{2} \norm{\theta-\theta_\star}_{A(t)}^2 \leq \epsilon }
\end{align*}
is compact for every $\epsilon$.  Then, since $\bar{\cC}$ is closed and $\frac{1}{2} \norm{\theta-\theta_\star}_{A(t)}^2$ has compact sublevel sets, there exists a $\theta_0 \in \bar{\cC}$ such that
\begin{align*}
    \sum_{i=1}^n t_i \kl( \nu_{\theta_\star,i} , \nu_{\theta_0,i}) = \inf_{\theta \in \bar{\cC}} \sum_{i=1}^n t_i \kl( \nu_{\theta_\star,i} , \nu_{\theta,i}).
\end{align*}
This shows the equivalence between problems with respect to $\inf_{\theta \in \bar \cC}$ and $\min_{\theta \in \bar \cC}$.

\textbf{Step 2: Rewrite the optimization problem.} Define
\begin{align*}
    \bar{\cC}_i & = \{\theta \in \R^d :  \theta^\top (z_1-z_i) \leq 0 \},
\end{align*}
and note that $\bar{\cC} = \cup_{i=1}^m \bar{\cC}_i$. Observe that
\begin{align*}
        \tau^\star & := \min_{t_1,\ldots, t_n \in \R_+} \sum_{i=1}^n t_i \\
    & \qquad \text{ s.t. } \min_{\theta \in \bar{\cC}} \sum_{i=1}^n t_i \kl( \nu_{\theta_\star,i}, \nu_{\theta,i}) \geq \log(1/2.4 \delta) \\
    & = \min_{t_1,\ldots, t_n \in \R_+} \sum_{i=1}^n t_i \\
    & \qquad \text{ s.t. } \min_{i \in [m]} \min_{\theta \in \bar{\cC}_i} \sum_{i=1}^n t_i \kl( \nu_{\theta_\star,i} , \nu_{\theta,i}) \geq \log(1/2.4 \delta).
\end{align*}

Consider the optimization problem:
\begin{align*}
    \min_{\theta \in \bar{\cC}_i} & \frac{1}{2} \sum_{i=1}^n t_i (x_i^\top (\theta_\star - \theta))^2 = \min_{\theta \in \bar{\cC}_i} \frac{1}{2} \norm{\theta_\star-\theta}_{A(t)}^2 
\end{align*}
Note that since the objective is convex and there exists $\theta \in \R^d$ such that $\theta^\top(z_1-z_i) < 0$, Slater's condition holds and, therefore, strong duality holds. We form the Lagrangian with lagrange multiplier $\gamma \in \R_+$ to obtain
\begin{align*}
    \L(\theta, \gamma ) &= \frac{1}{2} \norm{\theta_\star-\theta}_{A(t)}^2 + \gamma \cdot y_i^\top \theta  \\
\end{align*}
Differentiating with respect to $\theta$ and $\gamma$, we have that (note that $A(t)$ is invertible from the claim in Step 1)
\begin{align*}
\begin{cases}
        \theta  &= \theta_\star - \gamma A(t)^{-1} y_i, \\
     y_i^\top \theta  &= 0.
\end{cases}
\end{align*}
These imply that $\theta_0 \ldef  \theta_\star - \frac{y_i^\top \theta_\star A(t)^{-1} y_i}{y_i^\top A(t)^{-1} y_i}$ and $\gamma_0 \ldef \frac{y_i\top \theta_\star}{y_i^\top A(t)^{-1} y_i} \in \R_+$ satisfy the K.K.T. conditions, and $\theta = \theta_0$ is the minimizer (primal optimal solution) of the constrained optimization problem (note that it's a convex program). Therefore, we have
\begin{align*}
   \min_{\theta \in \bar{\cC}_{i}} & \frac{1}{2} \sum_{i=1}^n t_i (x_i^\top (\theta_\star - \theta))^2 = \frac{(y_i^\top \theta_\star )^2}{\norm{y_i}^2_{A(t)^{-1}}} 
\end{align*}
In conclusion, we have 
\begin{align*}
            \tau^\star & = \min_{t_1,\ldots, t_n \in \R_+} \sum_{i=1}^n t_i \\
    & \qquad \text{ s.t. } \frac{(y_j^\top \theta_\star )^2}{\norm{y_j}^2_{A(t)^{-1}}}  \geq \log(1/2.4 \delta) , \forall \, 2 \leq j \leq m.
\end{align*}

\textbf{Step 3: Re-express the optimization problem.} Furthermore, we have that
\begin{align}
    \tau^\star & = \min_{s, t_1,\ldots, t_n \in \R_+} s \label{eq:monotonic_original_opt}\\
    & \qquad \text{ s.t. }  (y_j^\top \theta_\star )^2 \geq \log(1/2.4 \delta) \norm{y_j}^2_{A(t)^{-1}}  , \forall \, 2 \leq j \leq m \nonumber\\
    & \qquad \qquad s \geq \sum_{i=1}^n t_i. \nonumber
\end{align}
Rearranging these constraints, we have that 
\begin{align*}
    s \geq \sum_{i=1}^n t_i \geq \log(1/2.4 \delta) \sum_{i=1}^n t_i \frac{\norm{y_j}^2_{A(t)^{-1}}}{(y_j^\top \theta_\star )^2} = \log(1/2.4 \delta)  \frac{\norm{y_j}^2_{A(\lambda)^{-1}}}{(y_j^\top \theta_\star )^2} , \forall \, 2 \leq j \leq m.
\end{align*}
We do a change of variables $\lambda \in \simp_{\cX}$ and $\lambda_i = \frac{t_i}{\sum_{i=1}^n t_i}$, and the optimization problem is equivalent to
\begin{align*}
     \tau^\star & = \min_{s \in \R_+, \lambda \in \simp_{\cX}} s \\
    & \qquad \text{ s.t. }   s \geq \max_{j =2,\ldots, m} \log(1/2.4 \delta)  \frac{\norm{y_j}^2_{A(\lambda)^{-1}}}{(y_j^\top \theta_\star )^2}.
\end{align*}
Thus, we have that
\begin{align*}
    \tau^\star \geq \inf_{\lambda \in \simp_{\cX}} \max_{j=2,\ldots, m} \frac{\norm{y_j}^2_{A(\lambda)^{-1}}}{(y_j^\top \theta_\star )^2} \log(1/2.4 \delta).
\end{align*}
Now let 
\begin{align*}
    \widetilde \tau^\star \ldef  \inf_{\lambda \in \simp_{\cX}} \max_{j=2,\ldots, m} \frac{\norm{y_j}^2_{A(\lambda)^{-1}}}{(y_j^\top \theta_\star )^2} \log(1/2.4 \delta) = \max_{j=2,\ldots, m} \frac{\norm{y_j}^2_{A(\lambda^\star)^{-1}}}{(y_j^\top \theta_\star )^2} \log(1/2.4 \delta),
\end{align*}
where $\lambda^\star$ is the optimal design of the above optimization problem.\footnote{Again, if the infimum is not attained, one can apply the argument that follows with a limit sequence. See footnote in \cref{app:inverse} for more details on how to construct an approximating design.} Set $\widetilde t = \widetilde \tau^\star \lambda^\star \in \R_+^n$ with $\widetilde t_i = \widetilde \tau^\star \lambda^\star_i \in \R_+$, we can then see that 
\begin{align*}
    \sum_{i=1}^n \widetilde t_i = \widetilde \tau^\star  = \max_{j=2,\ldots, m} \sum_{i=1}^n \widetilde t_i \frac{\norm{y_j}^2_{A(\widetilde t)^{-1}}}{(y_j^\top \theta_\star )^2} \log(1/2.4 \delta)  , \forall \, 2 \leq j \leq m.
\end{align*}
and such $\curly*{\widetilde{t}_i}$ satisfies the constraints in the original optimization problem described in \cref{eq:monotonic_original_opt}. 
As a result, we have $\tau^\star \leq \widetilde \tau^\star$.

We now can write 
\begin{align}
    \tau^\star = \inf_{\lambda \in \simp_{\cX}} \max_{j=2,\ldots, m} \frac{\norm{y_j}^2_{A(\lambda)^{-1}}}{(y_j^\top \theta_\star )^2} \log(1/2.4 \delta) = \rho^\star \log(1/2.4 \delta). \label{eq:monotonic_equivalence}
\end{align}

\textbf{Step 4: Monotonicity.} We now apply the established equivalence to the model selection problem and prove monotonicity over $\curly*{\rho_d^\star}_{d= d_\star}^D$.

Now, define
\begin{align*}
    \tau^\star_{d_\ell} & = \min_{t_1,\ldots, t_n \in \R_+} \sum_{i=1}^n t_i \\
    & \qquad \text{ s.t. } \inf_{\theta \in \cC_{d_\ell}} \sum_{i=1}^n t_i \kl( \nu_{\theta_\star,i}, \nu_{\theta,i}) \geq \log(1/2.4 \delta),
\end{align*}
where $\cC_{d_\ell} = \curly*{ \theta \in \R^D : \forall j > d_\ell : \theta_j = 0 \wedge \exists i \in [m] \text{ s.t. } \theta^\top (z_1-z_i) < 0 }$. Let $d_\star \leq d_1 \leq d_2 \leq D$. Then, since the optimization problem in $\tau^\star_{d_1}$ has fewer constraints than the optimization problem in $\tau^\star_{d_2}$, we have that $\tau^\star_{d_1} \leq \tau^\star_{d_2}$. The established equivalence in \cref{eq:monotonic_equivalence} can be applied with respect to feature mappings $\psi_d(\cdot)$ for $d_\star \leq d \leq D$ (note that we necessarily have $\spn(\curly*{\psi_d(z_\star) - \psi_d(z)}_{z \in \cZ \setminus \curly*{z_\star}}) = \R^d$ as long as $\spn(\curly*{z_\star - z}_{z \in \cZ \setminus \curly*{z_\star}}) = \R^D$). Therefore, we have
\begin{align*}
 \rho^\star_{d_1} \log(1/2.4 \delta)  =  \tau^\star_{d_1}  \leq \tau^\star_{d_2} = \rho^\star_{d_2} \log(1/2.4 \delta),
\end{align*}
leading to the desired result.
\end{proof}

\subsection{Proof of \cref{prop:rho_star_different_d}}
\label{app:rho_star_different_d}

\propRhoStarDifferentD*

\begin{proof}
For any $\lambda \in \simp_{\cX}$, we define
\begin{align*}
    \rho_{d}(\lambda) \ldef \max_{z \in \cZ \setminus \curly*{z_\star }} \frac{\norm*{\psi_d(z_{\star})-\psi_d(z)}^2_{A_{d}(\lambda)^{-1}}}{(h(z_\star) - h(z))^2},
\end{align*}
and 
\begin{align*}
    \iota_{d}(\lambda) \ldef \max_{z \in \cZ \setminus \curly*{z_\star }} {\norm*{\psi_d(z_{\star})-\psi_d(z)}^2_{A_{d}(\lambda)^{-1}}}.
\end{align*}

We consider an instance $\cX =\cZ = \curly*{x_i}_{i=1}^{d_\star + 1} \subseteq \R^{d_\star + 1}$ and expected reward function $h(\cdot)$. The action set is constructed as follows:
\begin{align*}
    x_i = e_i, \text{ for } i = 1, 2,\dots, d_\star, \quad x_{d_\star+1} = (1-\epsilon) \cdot e_{d_\star} + e_{d_\star+1},
\end{align*}
where $e_i$ is the $i$-th canonical basis in $\R^{d_\star + 1}$. The expected reward of each action is set as 
\begin{align*}
    h(x_i) \ldef \ang*{x_i, e_{d_\star}}.
\end{align*}
One can easily see that $d_\star$ is the intrinsic dimension of the problem (in fact, it is the smallest dimension such that linearity in rewards is preserved).

We notice that $\theta_\star \in \R^{d_\star}$; $x_\star = x_{d_\star}$ is the best arm with reward $1$, $x_{d_\star+1}$ is the second best arm with reward $1-\epsilon$ and all other arms have reward $0$. The smallest sub-optimality gap is $\Delta_{\min} = \epsilon$. $\epsilon \in (0,1/2]$ is selected such that $1/4\epsilon^2 > 2d_\star + \gamma$ for any given $\gamma > 0$.\footnote{One can also add an additional arm $x_{0} = e_D/2$ so that $\spn(\curly*{x_\star - x}_{x \in \cX}) = \R^{d_\star + 1}$ (the lower bound on $\rho_{d_\star + 1}^\star$ will be changed to $1/16\epsilon^2$).}

We first consider truncating arms into $\R^{d_\star}$. For any $\lambda \in \simp_{\cX}$, we notice that $A_{d_\star}(\lambda) = \sum_{x \in \cX} \lambda_x \psi_{d_\star}(x) \psi_{d_\star}(x)^\top$ is a diagonal matrix with the $d_\star$-th entry being $\lambda_{x_{d_\star}} + (1-\epsilon)^2 \lambda_{x_{d_\star+1}}$ and the rest entries being $\lambda_{x_i}$. We first show that $\iota_{d_\star}^\star \geq d_{\star}-1$ by contradiction as follows. Suppose $\iota^\star_{d_\star} < d_\star-1$. Since $\norm*{\psi_{d_\star}(x_\star) - \psi_{d_\star}(x_i)}^2_{A_{d_\star}(\lambda)^{-1}} \geq 1/\lambda_{x_i}$ for $i = 1, 2, \dots, d_\star-1$, we must have $\lambda_{x_i} > 1/(d_\star-1)$ for $i = 1, 2, \dots, d_\star-1$. Thus, $\sum_{i=1}^{d_\star-1} \lambda_{x_i} >  1$, which leads to a contradiction for $\lambda \in \simp_{\cX}$. We next analyze $\rho^\star_d$. Let $\lambda^\prime \in \simp_{\cX}$ be the design such that $\lambda^\prime_{x_i} = 1/d_{\star}$ for $i = 1,\dots, d_\star$. With design $\lambda^\prime$, we have $\norm*{\psi_{d_\star}(x_\star) - \psi_{d_\star}(x_i)}^2_{A_{d_\star}(\lambda^\prime)^{-1}} = 2d_\star$ for $i = 1, 2, \dots, d_\star-1$ and $\norm*{\psi_{d_\star}(x_\star) - \psi_{d_\star}(x_{d_\star + 1})}^2_{A_{d_\star}(\lambda^\prime)^{-1}} = \epsilon^2 d_\star$. As a result, we have $\rho_{d_\star}(\lambda^\prime) \leq 2d_\star$, and thus $\rho^\star_{d_\star} \leq \rho_{d_\star}(\lambda^\prime) \leq 2d_\star$.

We now consider arms in the original space, i.e., $\R^{d_\star+1}$. We first upper bound $\iota^\star_{d_\star + 1}$. With an uniform design $\lambda^{\prime \prime}$ such that $\lambda_{x_i}^{\prime \prime} = 1/(d_\star+1), \forall i \in [d_\star + 1]$, we have $\iota^\star_{d_\star+1} \leq \iota_{d_\star + 1}(\lambda^{\prime \prime}) \leq \max \curly*{(3-\epsilon)/(2-\epsilon), \epsilon^2/(2-\epsilon) + 1} \cdot (d_\star + 1) \leq 5(d_\star+1)/3$ when $\epsilon \in (0,1/2]$. In fact, with the same design, we can also upper bound $\iota(\cY(\psi_{d_\star+1}(\cX))) \leq 3(d_\star + 1)$. We analyze $\rho_{d_\star + 1}^\star$ now. Since $\max_{x \in \cX} \norm*{x}^2 \leq 4$ and $\min_{x \in \cX \setminus \curly*{x_\star}} \norm*{x_\star - x}^2 \geq 1$, \cref{lm:psi_ub_lb} leads to the fact that $\rho^\star_{d_\star+1} \geq 1/4 \epsilon^2$. Note that we only have $\min_{x \in \cX \setminus \curly*{x_\star}} \norm*{\psi_{d_\star}(x_\star) - \psi_{d_\star }(x)}^2 \geq \epsilon^2$ when truncating arms into $\R^{d_\star}$.

To summarize, for any given $\gamma > 0$, we have $\rho^\star_{d_\star+1} > \rho^\star_{d_\star} + \gamma$ yet $\iota^\star_{d_\star+1} \leq 2\iota^\star_{d_\star}$ (when $d_\star \geq 11$). Further more, we also have $\iota(\cY(\psi_{d_\star+1}(\cX))) \leq 4 \iota(\cY(\psi_{d_\star}(\cX)))$ (when $d_\star \geq 7$) since $\iota(\cY(\psi_{d_\star}(\cX))) \leq \iota_{d_\star}^\star$.
\end{proof}

\section{OMITTED PROOFS FOR SECTION \ref{sec:fixed_confidence}}
\label{app:fixed_confidence}

\subsection{Proof of \cref{lm:subroutine_fixed_confidence}}

\lmSubroutineFixedConfidence*

\begin{proof}
We consider event 
\begin{align*}
    \cE_k = \curly*{z_\star \in \widehat \cS_k \subseteq \cS_k},
\end{align*}
and prove through induction that 
\begin{align*}
    \P \paren{ \cE_{k+1} \mid \cap_{i \leq k} \cE_i } \geq 1 - \delta_{k},
\end{align*}
where $\delta_0 \ldef 0$. Recall that $\cS_k = \curly*{z \in \cZ: \Delta_z < 4 \cdot 2^{-k}}$ (with $\cS_1 = \cZ$).

\textbf{Step 1: The induction.} We have $\curly*{ z_\star \in \widehat \cS_1 \subseteq \cS_1 }$ since $\widehat \cS_1 = \cS_1 = \cZ$ by definition for the base case (recall that we assume $\max_{z \in \cZ} \Delta_z \leq 2$). We now assume that $\cap_{i \leq k}\cE_{i}$ holds true and we prove for iteration $k+1$. We only need to consider the case when $\abs*{\widehat \cS_k} > 1$, which implies $\abs*{\cS_k} > 1$ and thus $k \leq \floor*{ \log_2 (4 / \Delta_{\min}) }$.

\textbf{Step 1.1: $d_k \geq d_\star$ (Linearity is preserved).} Since $\widehat \cS_{k} \subseteq \cS_k$, we have 
\begin{align}
    g_k(d_\star) & = \max \curly*{ {2^{2k} \iota(\cY(\psi_{d_\star}(\widehat \cS_k)))}, r_{d_\star}(\zeta) } \nonumber \\
    & \leq \max \curly*{ {2^{2k} \iota(\cY(\psi_{d_\star}(\cS_k)))}, r_{d_\star}(\zeta) } \nonumber\\
    & \leq \max \curly*{64 \rho_{d_\star}^\star, r_{d_\star}(\zeta)} \label{eq:subroutine_fixed_confidence_rho_stratified}\\
    & \leq B \label{eq:subroutine_fixed_confidence_B},
\end{align}
where \cref{eq:subroutine_fixed_confidence_rho_stratified} comes from \cref{lm:rho_stratified} and \cref{eq:subroutine_fixed_confidence_B} comes from the assumption. As a result, we know that $d_k \geq d_\star$ since $d_k$ is selected as the largest integer such that $g_k(d_k) \leq B$. 

\textbf{Step 1.2: Concentration.} Let $\curly*{x_1, \ldots, x_{N_k}} $ be the arms pulled at iteration $k$ and $\curly*{r_1, \ldots, r_{N_k}}$ be the corresponding rewards. Let $\widehat{\theta}_k = A_k^{-1} b_k \in \R^{d_k}$ where $A_k = \sum_{i=1}^{N_k} \psi_{d_k}(x_i) \psi_{d_k}(x_i)^\top$, and $b_k =  \sum_{i=1}^{N_k} \psi_{d_k}(x_i) b_i$. Since $d_k \geq d_\star$ and the model is well-specified, we can write $r_i = \ang*{\theta_\star, x_i} + \xi_i = \ang*{\psi_{d_k}(\theta_\star), \psi_{d_k}(x_i)} + \xi_i$, where $\xi_i$ is i.i.d. generated $1$-sub-Gaussian noise. For any $y \in \cY(\psi_{d_k}(\widehat \cS_k))$, we have
\begin{align*}
    \ang{y, \widehat \theta_k - \psi_{d_k}(\theta_\star)} & = y^\top A_k^{-1} \sum_{i=1}^{N_k} \psi_{d_k}(x_i) r_i - y^\top \psi_{d_k}(\theta_\star) \nonumber \\
    & = y^\top A_k^{-1} \sum_{i=1}^{N_k} \psi_{d_k}(x_i) \paren{ \psi_{d_k}(x_i) ^\top \psi_{d_k}(\theta_\star) + \xi_i} - y^\top \psi_{d_k}(\theta_\star) \nonumber \\
    & = y^\top A_k^{-1} \sum_{i=1}^{N_k} \psi_{d_k}(x_i) \xi_i.
\end{align*}
Since $\xi_i$s are independent 1-sub-Gaussian random variables, we know that the random variable $ y^\top A_k^{-1} \sum_{i=1}^{N_k} \psi_{d_k}(x_i) \xi_i $ has variance proxy $\sqrt{\sum_{i=1}^{N_k} \paren*{y^\top A_k^{-1} \sum_{i=1}^{N_k} \psi_{d_k}(x_i)}^2 } = \norm{y}_{A_k^{-1}}$. Combining the standard Hoeffding's inequality with a union bound leads to
\begin{align}
    \P \paren{ \forall y \in \cY(\psi_{d_k}(\widehat \cS_k)) , \abs{ \ang{ y, \widehat \theta_k - \psi_{d_k}(\theta_\star)  } } \leq \norm{y}_{A_k^{-1}} \sqrt{2 \log \paren{ {\abs*{\widehat \cS_k}^2}/{\delta_k} }}  } \geq 1-\delta_k, \label{eq:subroutine_fixed_confidence_event}
\end{align}
where we use the fact that $\abs*{ \cY(\psi_{d_k}(\widehat \cS_k)) } \leq \abs*{\widehat \cS_k}^2/2$ in the union bound. 

\textbf{Step 1.3: Correctness.} We prove $z_\star \in \widehat \cS_{k+1} \subseteq \cS_{k+1}$ under the good event analyzed in \cref{eq:subroutine_fixed_confidence_event}.

\textbf{Step 1.3.1: $z_\star \in \widehat \cS_{k+1}$.} For any $\widehat z \in \widehat \cS_k$ such that $\widehat z \neq z_\star$, we have 
\begin{align*}
    \ang*{ \psi_{d_k}(\widehat z) - \psi_{d_k}(z_\star) , \widehat \theta_k } & \leq \ang{ \psi_{d_k}(\widehat z) - \psi_{d_k}(z_\star) , \psi_{d_k} (\theta_\star) } + \norm{\psi_{d_k}(\widehat z) - \psi_{d_k}(z_\star)}_{A_k^{-1}} \sqrt{2 \log \paren{ {\abs*{\widehat \cS_k}^2}/{\delta_k} }} \\
    & = h(\widehat z ) - h(z_\star) + \norm{\psi_{d_k}(\widehat z) - \psi_{d_k}(z_\star)}_{A_k^{-1}} \sqrt{2 \log \paren{ {\abs*{\widehat \cS_k}^2}/{\delta_k} }} \\
    & < \norm{\psi_{d_k}(\widehat z) - \psi_{d_k}(z_\star)}_{A_k^{-1}} \sqrt{2 \log \paren{ {\abs*{\widehat \cS_k}^2}/{\delta_k} }}.
\end{align*}
As a result, $z_\star$ remains in $\widehat \cS_{k+1}$ according to the elimination criteria.

\textbf{Step 1.3.2: $\widehat \cS_{k+1} \subseteq \cS_{k+1}$.} Consider any $z \in \widehat \cS_k \cap \cS_{k+1}^c$, we know that $\Delta_z \geq 2\cdot 2^{-k}$ by definition. Since $z_\star \in \widehat \cS_k$, we then have 
\begin{align}
    \ang*{ \psi_{d_k}(z_\star) - \psi_{d_k}(z) , \widehat \theta_k } & \geq \ang{ \psi_{d_k}(z_\star) - \psi_{d_k}(z) , \psi_{d_k} (\theta_\star) } - \norm{\psi_{d_k}(z_\star) - \psi_{d_k}(z)}_{A_k^{-1}} \sqrt{2 \log \paren{ {\abs*{\widehat \cS_k}^2}/{\delta_k} }} \nonumber \\
    & = h(z_\star) - h(z) - \norm{\psi_{d_k}(z_\star) - \psi_{d_k}(z)}_{A_k^{-1}} \sqrt{2 \log \paren{ {\abs*{\widehat \cS_k}^2}/{\delta_k} }} \nonumber \\ 
    & \geq 2 \cdot 2^{-k} - \norm{\psi_{d_k}(z_\star) - \psi_{d_k}(z)}_{A_k^{-1}} \sqrt{2 \log \paren{ {\abs*{\widehat \cS_k}^2}/{\delta_k} }} \nonumber \\
    & \geq \norm{\psi_{d_k}(z_\star) - \psi_{d_k}(z)}_{A_k^{-1}} \sqrt{2 \log \paren{ {\abs*{\widehat \cS_k}^2}/{\delta_k} }} \label{eq:subroutine_fixed_confidence_rounding},
\end{align}
where \cref{eq:subroutine_fixed_confidence_rounding} comes from the fact that $\norm{\psi_{d_k}(z_\star) - \psi_{d_k}(z)}_{A_k^{-1}} \sqrt{2 \log \paren{ {\abs*{\widehat \cS_k}^2}/{\delta_k} }} \leq 2^{-k}$, which is resulted from the choice of $N_k$ and the guarantee in \cref{eq:rounding} from the rounding procedure. As a result, we have $z \notin \widehat \cS_{k+1}$ and $\widehat \cS_{k+1} \subseteq \cS_{k+1}$.

To summarize, we prove the induction at iteration $k+1$, i.e.,
\begin{align*}
    \P \paren{ \cE_{k+1} \mid \cap_{i< k+1} \cE_{i} } \geq 1 - \delta_k.
\end{align*}

\textbf{Step 2: The error probability.} Let $\cE = \cap_{i=1}^{n+1} \cE_{i}$ denote the good event, we then have 
\begin{align}
    \P \paren{ \cE } & = \prod_{k=1}^{n} \P \paren{ \cE_k \mid \cE_{k-1} \cap \dots \cap \cE_1 } \nonumber \\
    & = \prod_{k=1}^{n} \paren{1 - \delta_k} \nonumber \\
    & \geq \prod_{k=1}^\infty \paren{1 - \delta/k^2} \nonumber \\
    & = \frac{\sin(\pi \delta)} {\pi \delta} \nonumber \\
    & \geq 1- \delta \label{eq:subroutine_fixed_confidence_delta},
\end{align}
where we use the fact that ${\sin(\pi \delta)}/{\pi \delta} \geq 1-\delta$ for any $\delta \in (0,1)$ in \cref{eq:subroutine_fixed_confidence_delta}. 
\end{proof}

\subsection{Proof of \cref{thm:doubling_fixed_confidence}}

\thmDoublingFixedConfidence*

\begin{proof}
The proof is decomposed into three steps: (1) locating good subroutines; (2) bounding error probability and (3) bounding unverifiable sample complexity.

\textbf{Step 1: Locating good subroutines.} Consider $B_\star = \max \curly*{64 \rho^\star_{d_\star}, r_{d_\star}(\zeta)}$ and $n_\star = \ceil*{ \log_2 (2/\Delta_{\min}) }$. For any subroutines invoked with $B_i \geq B_\star$ and $n_i \geq n_\star$, we know that, from \cref{lm:subroutine_fixed_confidence}, the output set of arms are those with sub-optimality gap $< \Delta_{\min}$, which is a singleton set containing the optimal arm, i.e., $\curly*{z_\star}$. Let $i_\star = \ceil*{\log_2 (B_\star)}$, $j_\star = \ceil*{ \log_2 (n_\star) }$ and $\ell_\star = i_\star + j_\star$. We know that in outer loops $\ell \geq \ell_\star$, there must exists at least one subroutine invoked with $B_i = 2^{i_\star} \geq B_\star$ and $n_i = 2^{j_\star} \geq n_\star$. Once a subroutine, invoked with $B_i \geq B_\star$, outputs a singleton set, it must be the optimal arm $z_\star$ according to \cref{lm:subroutine_fixed_confidence} (up to small error probability, analyzed as below). Since, within each outer loop $\ell$, the value of $B_i = 2^{\ell - i}$ is chosen in a decreasing order, updating the recommendation and breaking the inner loop once a singleton set is identified will not miss the chance of recommending the optimal arm in later subroutines within outer loop $\ell$.

\textbf{Step 2: Error probability.} We consider the good event where all subroutines invoked in \cref{alg:doubling_fixed_confidence} with $B_i \geq B_\star$ and (any) $n_i$ correctly output a set of arms with sub-optimality gap $<2^{1-n_i}$ with probability at least $1 - \delta_\ell$, as shown in \cref{lm:subroutine_fixed_confidence}. This good event clearly happens with probability at least $1 - \sum_{\ell = 1}^\infty \sum_{i=1}^\ell \delta_\ell = 1- \sum_{\ell = 1}^\infty \delta/(2 \ell^2) > 1 - \delta$, after applying a union bound argument. We upper bound the unverifiable sample complexity under this event in the following.

\textbf{Step 3: Unverifiable sample complexity.} 
For any subroutine invoked within outer loop $\ell \leq \ell_\star$, we know, from \cref{alg:subroutine_fixed_budget}, that its sample complexity is upper bounded by (note that $\abs*{\cZ}^2 \geq 4$ trivially holds true)
\begin{align*}
    N_\ell & \leq n_i \paren{ B_i \cdot \paren{2.5 \, \log(\abs{\cZ}^2/\delta_{\ell_\star})} + 1 } \\
    & \leq \gamma_\ell \, 3.5 \, \log \paren{ 2 \abs{\cZ}^2 \ell_\star^3/\delta }.
\end{align*}
Thus, the total sample complexity up to the end of outer loop $\ell_\star$ is upper bounded by 
\begin{align*}
    N & \leq \sum_{\ell = 1}^{\ell_\star} \ell N_\ell \\
    & \leq 3.5\, \log \paren{ 2 \abs{\cZ}^2 \ell_\star^3/\delta } \sum_{\ell = 1}^{\ell_\star} \ell 2^\ell \\
    & \leq 7 \, \log \paren{ 2 \abs{\cZ}^2 \ell_\star^3/\delta } \ell_\star 2^{\ell_\star}.
\end{align*}

Recall that $\tau_\star = \log_2(4/\Delta_{\min}) \max \curly*{\rho^\star_{d_\star}, r_{d_\star}(\zeta)}$. By definition of $\ell_\star$, we have
\begin{align*}
    \ell_\star \leq \log_2 \paren{ 4  \log_2(4/\Delta_{\min})  \max \curly*{64 \rho^\star_{d_\star}, r_{d_\star}(\zeta)} } = O(\log_2 (\tau_\star)),
\end{align*}
and
\begin{align*}
    2^{\ell_\star} & = 2^{(i_\star + j_\star)} \\
    & \leq 4  \paren{ \log_2(2/\Delta_{\min})+1}  \max \curly*{64 \rho^\star_{d_\star}, r_{d_\star}(\zeta)}, \\
    & = 4  \log_2(4/\Delta_{\min})  \max \curly*{64 \rho^\star_{d_\star}, r_{d_\star}(\zeta)}, \\
    & = O(\tau_\star).
\end{align*}

 The unverifiable sample complexity is thus upper bounded by
\begin{align*}
    N & \leq 1792 \, \tau_\star \cdot \paren{\log_2(\tau_\star) + 8} \cdot \log \paren{ {2 \abs{\cZ}^2 \paren{\log_2(\tau_\star) + 8}^3}/{\delta} } \\
    & = O \paren{ \tau_\star \log_2(\tau_\star) \log(\abs{\cZ} \log_2(\tau_\star)/\delta) }.
\end{align*}
\end{proof}

\section{OMITTED PROOFS FOR SECTION \ref{sec:fixed_budget}}

\subsection{Proof of \cref{lm:subroutine_fixed_budget}}

\lmSubroutineFixedBudget*

\begin{proof}
We consider event 
\begin{align*}
    \cE_k = \curly*{z_\star \in \widehat \cS_k \subseteq \cS_k},
\end{align*}
and prove through induction that 
\begin{align*}
    \P \paren{ \cE_{k+1} \mid \cap_{i \leq k} \cE_i } \geq 1 - \delta_{k},
\end{align*}
where the value of $\curly*{\delta_k}_{k=0}^{n}$ will be specified in the proof. 

\textbf{Step 1: The induction.} The base case $\curly*{z_\star \in \widehat \cS_1 \subseteq \cS_1}$ holds with probability $1$ by construction (thus, we have $\delta_0 = 0$). Conditioned on events $\cap_{i=1}^k \cE_i$, we next analyze the event $\cE_{k+1}$. We only need to consider the case when $\abs*{\widehat \cS_k} > 1$, which implies $\abs*{\cS_k} > 1$ and thus $k \leq \floor*{ \log_2 (4 / \Delta_{\min}) }$.

\textbf{Step 1.1: $d_k \geq d_\star$ (Linearity is preserved).} We first notice that $\widetilde D$ is selected as the largest integer such that $r_{\widetilde D}(\zeta) \leq T^\prime$, where $r_d(\zeta)$ represents the number of samples needed for the rounding procedure in $\R^d$ (with parameter $\zeta$). When $T/ n \geq r_{d_\star}(\zeta) + 1$, we have $\widetilde D \geq d_\star$ since $T^\prime \geq T/ n -1 \geq r_{d_\star}(\zeta)$. 
Thus, for whatever $d_k \in [\widetilde D]$ selected, we always have $r_{d_k}(\zeta) \leq r_{\widetilde D}(\zeta) \leq T^\prime$ and can thus safely apply the rounding procedure described in \cref{eq:rounding}.

Since $\widehat \cS_{k} \subseteq \cS_k$, we also have 
\begin{align}
    g_k(d_\star) & =  {2^{2k} \iota(\cY(\psi_{d_\star}(\widehat \cS_k)))} \nonumber \\
    & \leq {2^{2k} \iota(\cY(\psi_{d_\star}(\cS_k)))} \nonumber\\
    & \leq 64 \rho_{d_\star}^\star \label{eq:subroutine_fixed_budget_rho_stratified}\\
    & \leq B \label{eq:subroutine_fixed_budget_B},
\end{align}
where \cref{eq:subroutine_fixed_budget_rho_stratified} comes from \cref{lm:rho_stratified} and \cref{eq:subroutine_fixed_budget_B} comes from the assumption. As a result, we know that $d_{k} \geq d_\star$ since $d_{k} \in [\widetilde D]$ is selected as the largest integer such that $g_k(d_{k}) \leq B$. 

\textbf{Step 1.2: Concentration and error probability.} Let $\curly*{x_1, \ldots, x_{T^\prime}} $ be the arms pulled at iteration $k$ and $\curly*{r_1, \ldots, r_{T^\prime}}$ be the corresponding rewards. Let $\widehat{\theta}_k = A_k^{-1} b_k \in \R^{d_k}$ where $A_k = \sum_{i=1}^{T^\prime} \psi_{d_k}(x_i) \psi_{d_k}(x_i)^\top$, and $b_k =  \sum_{i=1}^{T^\prime} \psi_{d_k}(x_i) b_i$. Since $d_k \geq d_\star$ and the model is well-specified, we can write $r_i = \ang*{\theta_\star, x_i} + \xi_i = \ang*{\psi_{d_k}(\theta_\star), \psi_{d_k}(x_i)} + \xi_i$, where $\xi_i$ is i.i.d. generated zero-mean Gaussian noise with variance $1$. Similarly as analyzed in \cref{eq:subroutine_fixed_confidence_event}, we have 
\begin{align}
    \P \paren{ \forall y \in \cY(\psi_{d_k}(\widehat \cS_k)) , \abs{ \ang{ y, \widehat \theta_k - \psi_{d_k}(\theta_\star)  } } \leq \norm{y}_{A_k^{-1}} \sqrt{2 \log \paren{ {\abs*{\widehat \cS_k}^2}/{\delta_k} }}  } \geq 1-\delta_k. \label{eq:subroutine_fixed_budget_event}
\end{align}
By setting $\max_{y \in \psi_{d_k}(\widehat \cS_k)} \norm{y}_{A_k^{-1}} \sqrt{2 \log \paren{ {\abs*{\widehat \cS_k}^2}/{\delta_k} }} = 2^{-k}$, we have 
\begin{align}
    \delta_k & = \abs*{\widehat \cS_k}^2 \exp \paren{ - \frac{1}{ 2 \cdot 2^{2k} \, \max_{y \in \psi_{d_k}(\widehat \cS_k)} \norm{y}_{A_k^{-1}}^2 } } \nonumber \\
    & \leq \abs*{\widehat \cS_k}^2 \exp \paren{ - \frac{T^\prime}{ 2 \cdot 2^{2k} \, (1 + \zeta) \, \iota(\cY(\psi_{d_k} (\widehat \cS_k))) } } \label{eq:sub_fixed_budget_rounding} \\
    & \leq \abs*{ \cZ }^2 \exp \paren{ - \frac{ T}{ 1024 \, n \, \rho_{d_\star}^\star } } \label{eq:sub_fixed_budget_error_bound},
\end{align}
where \cref{eq:sub_fixed_budget_rounding} comes from the guarantee of the rounding procedure \cref{eq:rounding}; and \cref{eq:sub_fixed_budget_error_bound} comes from combining the following facts: (1) $2^{2k} \, \iota(\cY(\psi_{d_k} (\widehat \cS_k))) \leq B \leq 128 \rho_{d_\star}^\star$; (2) $T^\prime \geq T/n - 1 \geq T/2n$ (note that $T/n \geq r_{d_\star}(\zeta) + 1 \implies T/n \geq 2$ since $r_{d_\star}(\zeta) \geq 1$); (3) $\widehat \cS_k \subseteq \cZ$ and (4) consider some $\zeta \leq 1$ ($\zeta$ only affects constant terms).

\textbf{Step 1.3: Correctness.} We prove $z_\star \in \widehat \cS_{k+1} \subseteq \cS_{k+1}$ under the good event analyzed in \cref{eq:subroutine_fixed_budget_event}.

\textbf{Step 1.3.1: $z_\star \in \widehat \cS_{k+1}$.} For any $\widehat z \in \widehat \cS_k$ such that $\widehat z \neq z_\star$, we have 
\begin{align*}
    \ang{ \psi_{d_k}(\widehat z) - \psi_{d_k}(z_\star) , \widehat \theta_k } & \leq \ang{ \psi_{d_k}(\widehat z) - \psi_{d_k}(z_\star) , \psi_{d_k} (\theta_\star) } + 2^{-k} \\
    & = h(\widehat z ) - h(z_\star) + 2^{-k} \\
    & < 2^{-k}.
\end{align*}
As a result, $z_\star$ remains in $\widehat \cS_{k+1}$ according to the elimination criteria.

\textbf{Step 1.3.2: $\widehat \cS_{k+1} \subseteq \cS_{k+1}$.} Consider any $z \in \widehat \cS_k \cap \cS_{k+1}^c$, we know that $\Delta_z \geq 2\cdot 2^{-k}$ by definition. Since $z_\star \in \widehat \cS_k$, we then have 
\begin{align}
    \ang*{ \psi_{d_k}(z_\star) - \psi_{d_k}(z) , \widehat \theta_k } & \geq \ang{ \psi_{d_k}(z_\star) - \psi_{d_k}(z) , \psi_{d_k} (\theta_\star) } - 2^{-k} \nonumber \\
    & = h(z_\star) - h(z) - 2^{-k} \nonumber \\ 
    & \geq 2 \cdot 2^{-k} - 2^{-k} \nonumber \\
    & = 2^{-k}.
\end{align}
As a result, we have $z \notin \widehat \cS_{k+1}$ and $\widehat \cS_{k+1} \subseteq \cS_{k+1}$.

To summarize, we prove the induction at iteration $k+1$, i.e.,
\begin{align*}
    \P \paren{ \cE_{k+1} \mid \cap_{i< k+1} \cE_{i} } & \geq 1 - \delta_k. \nonumber
\end{align*}

\textbf{Step 2: The error probability.} Let $\cE = \cap_{i=1}^{n+1} \cE_{i}$ denote the good event, we then have 
\begin{align}
    \P \paren{ \cE } & = \prod_{k=1}^{n+1} \P \paren{ \cE_k \mid \cE_{k-1} \cap \dots \cap \cE_1 } \nonumber \\
    & = \prod_{k=1}^{n+1} \paren{1 - \delta_k} \nonumber \\
    & \geq 1 - \sum_{i=1}^{n+1} \delta_k \label{eq:subroutine_fixed_budget_induction} \\
    & \geq 1- n \abs*{ \cZ }^2 \exp \paren{ - \frac{ T}{ 640 \, n \, \rho_{d_\star}^\star } } \nonumber,
\end{align}
where \cref{eq:subroutine_fixed_budget_induction} can be proved using a simple induction. 
\end{proof}

\subsection{Proof of \cref{thm:doubling_fixed_budget}}

\thmDoublingFixedBudget*

\begin{proof}
The proof is decomposed into three steps: (1) locate a good subroutine in the pre-selection step; (2) bound error probability in the validation step; and (3) analyze the total error probability. Some preliminaries are analyzed as follows.

We note that both pre-selection and validation steps use budget less than $T$: in the pre-selection phase, each outer loop indexed by $i$ uses budget less than $T/p$ and there are $p$ such outer loops; it's also clear that the validation steps uses at most $T$ budget. We notice that $p \leq \log_2 T$ since $p \cdot 2^p \leq T$; and $q_i \leq \log_2 T$ since $q_i \cdot 2^{q_i} \leq T/p B_i \leq T$. As a result, at most $(\log_2 T)^2$ subroutines are invoked in \cref{alg:doubling_fixed_budget}, and each subroutine is invoked with budget $T^{\prime \prime} \geq T/(\log_2 T)^2$.

\textbf{Step 1: The good subroutine.} Consider
\begin{align*}
    i_\star \ldef \ceil{ \log_2 \paren{ 64 \rho_{d_\star}^\star } } \quad \text{and} \quad 
    j_\star \ldef \ceil{ \log_2 \paren{ \log_2 \paren{ 2/\Delta_{\min} } } }.
\end{align*}
One can easily see that $64 \rho_{d_\star}^\star \leq B_{i_\star} \leq 128 \rho_{d_\star}^\star$ and $n_{j_\star} \geq \log_2 (2/\Delta_{\min})$. Thus, once a subroutine is invoked with $(i_\star, j_\star)$ and $T^{\prime \prime}/ n_{j_\star} \geq r_{d_\star}(\zeta) + 1$, \cref{lm:subroutine_fixed_budget} guarantees to output the optimal arm with error probability at most 
\begin{align}
     \log_2 (4/\Delta_{\min}) \abs*{ \cZ }^2 \exp \paren{ - \frac{ T}{1024 \, \log_2 (4/\Delta_{\min}) \, \rho_{d_\star}^\star } } \label{eq:sub_fixed_budget_good_sub_error_prob}.
\end{align}
We next show that for sufficiently large $T$, one can invoke the subroutine with $(i_\star, j_\star)$ and $T^{\prime \prime}/ n_{j_\star} \geq r_{d_\star}(\zeta) + 1$. 

We clearly have $p \geq i_\star$ as long as $T \geq \log_2(128 \rho_{d_\star}^\star) \, 128 \rho_{d_\star}^\star$. Focusing on the outer loop with index $i_\star$, we have $q_{i_\star} \geq j_\star$ as long as 
\begin{align*}
    \log_2 \paren{2 \log_2 (2 /\Delta_{\min})} \cdot \paren{2 \log_2 (2/\Delta_{\min})} \leq T^\prime / B_{i_\star},
\end{align*}
Since $T^\prime/B_{i_\star} \geq T/ (128 \rho_{d_\star}^\star \log_2 T )$, we have $q_{i_\star} \geq j_\star$ as long as $T$ is such that
\begin{align}
    T \geq 256 \, \log_2 \paren{2 \log_2 (2 /\Delta_{\min})} \cdot \log_2 (2/\Delta_{\min}) \cdot \rho_{d_\star}^\star \cdot \log_2 T. \label{eq:budget_requirement_1}
\end{align}
Since $T^{\prime \prime} \geq T/(\log_2 T)^2$, we have $T^{\prime \prime}/ n_{j_\star} \geq r_{d_\star}(\zeta) + 1$ as long as $T$ is such that
\begin{align}
    T \geq (r_{d_\star}(\zeta) + 1) \cdot \log_2(4/\Delta_{\min}) \cdot (\log_2 T)^2. \label{eq:budget_requirement_2}
\end{align}
According to \cref{lm:relation_log}, \cref{eq:budget_requirement_1} and \cref{eq:budget_requirement_2} can be satisfied when 
\begin{align*}
    T = \widetilde \Omega \paren{ \log_2(1/\Delta_{\min}) \max \curly*{\rho_{d_\star}^
    \star, r_{d_\star}(\zeta)} },
\end{align*}
where lower order terms with respect to $\log_2(1/\Delta_{\min})$, $\rho_{d_\star}^\star$ and $r_{d_\star}(\zeta)$ are hidden in the $\widetilde \Omega$ notation.

\textbf{Step 2: The validation step.} We have $\abs*{\cA} \leq (\log_2 T)^2$ since there are at most $(\log_2 T)^2$ subroutines and each subroutine outputs one arm. We view each $x \in \cA$ as individual arm and pull it $\floor*{T/\abs*{\cA}} \geq T/(\log_2 T)^2 -1 \geq T/2(\log_2 T)^2$ (as long as $T \geq 2 (\log_2 T)^2$) times. We use $\widehat h(x)$ to denote the empirical mean of $h(x)$. Applying Hoeffding's inequality with a union bound leads to the following concentration result
\begin{align*}
    \P \paren{ \forall x \in \cA: \abs*{\widehat h(x) - h(x)} \geq \Delta_{\min} /2  } \leq 2 (\log_2 T)^2 \exp \paren{ - \frac{T}{8 (\log_2 T)^2/ \Delta_{\min}^2} }
\end{align*}
Thus, as long as $z_\star \in \cA$ is selected in $\cA$ from the pre-selection step, the validation step correctly output $z_\star$ with error probability at most 
\begin{align}
    2 (\log_2 T)^2 \exp \paren{ - \frac{T}{8 (\log_2 T)^2/ \Delta_{\min}^2} }
    \label{eq:sub_fixed_budget_validation_error_prob}.
\end{align}

\textbf{Step 3: Total error probability.} Combining \cref{eq:sub_fixed_budget_good_sub_error_prob} with \cref{eq:sub_fixed_budget_validation_error_prob}, we know that
\begin{align*}
    \P \paren{\widehat z_\star \neq z_\star} \leq & \log_2 (4/\Delta_{\min}) \abs*{ \cZ }^2 \exp \paren{ - \frac{ T}{ 1024 \, \log_2 (4/\Delta_{\min}) \, \rho_{d_\star}^\star } } \\
    &+ 2 (\log_2 T)^2 \exp \paren{ - \frac{T}{8 (\log_2 T)^2/ \Delta_{\min}^2} }.
\end{align*}
Furthermore, if there exists universial constants such that $\max_{x \in \cX} \norm{\psi_{d_\star}(x)}^2 \leq c_1$ and $\min_{z \in \cZ} \norm{\psi_{d_\star}(z_\star) - \psi_{d_\star}(z)}^2 \geq c_2$, \cref{lm:psi_ub_lb} implies that $1/\Delta_{\min}^2 \leq c_1 \rho_{d_\star}^\star/ c_2$. We thus have 
\begin{align*}
    \P  \paren{\widehat z_\star \neq z_\star} =  O \paren{ \max \curly*{ \log_2(1/\Delta_{\min}) \abs*{\cZ}^2, (\log_2 T)^2 } \cdot \exp \paren{ - \frac{c_2 T}{\max \curly*{ \log_2(1/\Delta_{\min}), (\log_2 T)^2 } c_1 \rho_{d_\star}^\star} } }.
\end{align*}
\end{proof}

\section{OMITTED PROOFS FOR SECTION \ref{sec:misspecification}}

\subsection{Omitted Proofs for Propositions}

Some of the propositions are borrowed from \citet{zhu2021pure}, we present detailed proofs here for completeness.

\propNonIncreasingMisspecification*

\begin{proof}
Consider any $1 \leq d < d^\prime \leq D$. Suppose
\begin{align*}
    \theta^d \in \argmin_{\theta \in \R^D} \max_{x \in \cX \cup \cZ} \abs*{h(x) - \ang*{\psi_d(\theta), \psi_d(x)}}.
\end{align*}
Since $\psi_d(\theta^d)$ only keeps the first $d$ component of $\theta^d$, we can choose $\theta^d$ such that it only has non-zero values on its first $d$ entries. As a result, we have $\ang*{\psi_d(\theta^d), \psi_d(x)} = \ang*{\psi_{d^\prime}(\theta^d), \psi_{d^\prime}(x)}$, which implies that $\widetilde \gamma(d^\prime) \leq \widetilde \gamma (d)$.
\end{proof}

\propRhoRelation*

\begin{proof}
To relate $\rho_d^\star(\epsilon)$ with $\widetilde \rho_d^\star(\epsilon)$, we only need to relate $\max \curly{ h(z_\star)- h(z), \epsilon}$ with $\max \curly{ \ang{\psi_d(z_\star) - \psi_d(z), \theta_\star^d}, \epsilon}$. From \cref{eq:mis_level} and the fact that $\epsilon \geq \widetilde \gamma (d)$, we know that 
\begin{align*}
    \ang{\psi_d(z_\star) - \psi_d(z), \theta_\star^d} \leq h(z_\star)- h(z) + 2 \widetilde \gamma (d) \leq h(z_\star)- h(z) + 2 \epsilon \leq 3 \max \curly{ h(z_\star)- h(z), \epsilon},
\end{align*}
and thus 
\begin{align*}
    \max \curly{ \ang{\psi_d(z_\star) - \psi_d(z), \theta_\star^d}, \epsilon} \leq 3 \max \curly{ h(z_\star)- h(z), \epsilon}.
\end{align*}
As a result, we have $\rho_d^\star(\epsilon) \leq 9 \widetilde \rho_d^\star (\epsilon)$.

When $\widetilde \gamma (d) < \Delta_{\min}/2$, we know that $z_\star$ is still the best arm in the perfect linear bandit model (without misspecification) $\widetilde h(x) = \ang*{\psi_d(x), \psi_d(\theta_\star^d)}$. Thus, $\widetilde \rho_d^\star(0)$ represents the complexity measure, in the corresponding linear model, for best arm identification.
\end{proof}

\begin{restatable}[\citet{zhu2021pure}]{proposition}{propGammaUpperBound}
\label{prop:gamma_upper_bound}
The following inequalities hold:
\begin{align*}
    \gamma(d) \leq \paren*{16 + 16 \sqrt{(1+\zeta) d}} \widetilde \gamma(d) = O(\sqrt{d} \, \widetilde \gamma (d)).
\end{align*}
\end{restatable}

\begin{proof}
We first notice that 
\begin{align}
    \iota \paren*{ \cY( \psi_d(\cS_k)) } & = \inf_{\lambda \in \simp_{\cX}} \sup_{y \in \cY( \psi_d(\cS_k))} \norm{y}_{A_{d}(\lambda)^{-1}}^2 \nonumber \\
    & \leq \inf_{\lambda \in \simp_{\cX}} \sup_{y \in \cY( \psi_d(\cX))} \norm{y}_{A_{d}(\lambda)^{-1}}^2 \nonumber \\
    & \leq \inf_{\lambda \in \simp_{\cX}} \sup_{ x \in \cX} 4 \norm{\psi_d(x)}_{A_{d}(\lambda)^{-1}}^2 \nonumber \\
    & = 4d \label{eq:prop_gamma_upper_bound_kw},
\end{align}
where \cref{eq:prop_gamma_upper_bound_kw} comes from Kiefer-Wolfowitz theorem \citep{kiefer1960equivalence}. We then have 
\begin{align*}
    \paren{2 + \sqrt{(1+\zeta) \iota \paren*{ \cY( \psi_d(\cS_k)) }} } \widetilde \gamma(d) \leq \paren{2 + \sqrt{(1+\zeta) 4 d} } \widetilde \gamma(d).
\end{align*}
As a result, we can always find a $n \in \N$ such that 
\begin{align*}
    2^{-n}/2 \leq 2 \, \paren{2 + \sqrt{(1+\zeta) 4 d} } \widetilde \gamma(d),
\end{align*}
and 
\begin{align*}
    \paren{2 + \sqrt{(1+\zeta) \iota \paren*{ \cY( \psi_d(\cS_k)) }} } \widetilde \gamma(d) \leq \paren{2 + \sqrt{(1+\zeta) 4 d} } \widetilde \gamma(d) \leq 2^{-k}/2, \forall k \leq n.
\end{align*}
This leads to the fact that
\begin{align*}
    \gamma(d) \leq 8 \, \paren{2 + \sqrt{(1+\zeta) 4 d} } \widetilde \gamma(d),
\end{align*}
which implies the desired result.
\end{proof}

\begin{restatable}{proposition}{propRoundNumber}
\label{prop:round_number}
If $\gamma(d) \leq \epsilon$, we have 
\begin{align*}
    \paren{2 + \sqrt{(1+\zeta) \iota \paren*{ \cY( \psi_d(\cS_k)) }} } \widetilde \gamma(d) \leq 2^{-k}/2, \forall k \leq \ceil*{\log_2(2/\epsilon)}.
\end{align*}
\end{restatable}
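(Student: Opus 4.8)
The plan is to unwind the definition of $\gamma(d)$ and exploit the fact that the minimizing index is a natural number. Recall that
\begin{align*}
    \gamma(d) = \min \curly{2 \cdot 2^{-n}: n \in \N, \ \forall k \leq n, \ \paren{2 + \sqrt{(1+\zeta) \iota \paren*{ \cY( \psi_d(\cS_k)) }} } \widetilde \gamma(d) \leq 2^{-k}/2}.
\end{align*}
Let $n^\star \in \N$ denote a value of $n$ attaining this minimum, so that $\gamma(d) = 2 \cdot 2^{-n^\star}$. The key structural observation is that, by the very definition of the feasible set in the minimization, this minimizer $n^\star$ must itself satisfy the defining inequality for \emph{every} $k \leq n^\star$:
\begin{align*}
    \paren{2 + \sqrt{(1+\zeta) \iota \paren*{ \cY( \psi_d(\cS_k)) }} } \widetilde \gamma(d) \leq 2^{-k}/2, \quad \forall k \leq n^\star.
\end{align*}
(The minimum is attained whenever the feasible set is nonempty, which holds under the standing assumption that $\gamma(d)$ can be made small, or directly via \cref{prop:gamma_upper_bound}.)

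The second step is to translate the hypothesis $\gamma(d) \leq \epsilon$ into a lower bound on $n^\star$. From $2 \cdot 2^{-n^\star} = \gamma(d) \leq \epsilon$ we obtain $2^{n^\star} \geq 2/\epsilon$, i.e.\ $n^\star \geq \log_2(2/\epsilon)$. Here is where the integrality of $n^\star$ does the work: since $n^\star \in \N$ is an integer bounded below by $\log_2(2/\epsilon)$, it must in fact satisfy $n^\star \geq \ceil*{\log_2(2/\epsilon)}$.

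Combining the two steps closes the argument: any $k \leq \ceil*{\log_2(2/\epsilon)}$ also satisfies $k \leq n^\star$, so the defining inequality for $n^\star$ applies verbatim and yields the claimed bound $\paren{2 + \sqrt{(1+\zeta) \iota \paren*{ \cY( \psi_d(\cS_k)) }} } \widetilde \gamma(d) \leq 2^{-k}/2$ for all such $k$. I do not anticipate a genuine obstacle here; the only point requiring care is the interaction between the real-valued threshold $\log_2(2/\epsilon)$ and the integer constraint $n^\star \in \N$, which is exactly what forces the ceiling to appear and makes the statement tight rather than off by one.
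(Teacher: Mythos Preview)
Your proof is correct and follows essentially the same approach as the paper's: write $\gamma(d) = 2 \cdot 2^{-n^\star}$ for the minimizing integer, use $\gamma(d) \leq \epsilon$ together with $n^\star \in \N$ to conclude $n^\star \geq \ceil*{\log_2(2/\epsilon)}$, and then read off the defining inequality for all $k \leq n^\star$. The paper's argument is identical, just more tersely stated.
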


\begin{proof}
Suppose $\gamma(d) = 2 \cdot 2^{-\widetilde n}$ for a $\widetilde n \in \N$. Since $\gamma(d) \leq \epsilon$, we have $\widetilde n \geq \log_2(2/\epsilon)$. Since $\widetilde n \in \N$, we know that $\widetilde n \geq \ceil*{\log_2(2/\epsilon)}$. The desired result follows from the definition of $\gamma(d)$.
\end{proof}

\subsection{Omitted Materials for the Fixed Confidence Setting with Misspecification}

\subsubsection{Omitted Algorithms}
\label{app:alg_misspecification}

\begin{algorithm}[H]
	\caption{\gemsm Gap Elimination with Model Selection with Misspecification (Fixed Confidence)}
	\label{alg:subroutine_fixed_confidence_mis_gen} 
	\renewcommand{\algorithmicrequire}{\textbf{Input:}}
	\renewcommand{\algorithmicensure}{\textbf{Output:}}
	\begin{algorithmic}[1]
		\REQUIRE  Number of iterations $n$, budget for dimension selection $B$ and confidence parameter $\delta$.
		\STATE Set $\widehat \cS_1 = \cZ$.
		\FOR {$k = 1, 2, \dots, n$}
		\STATE Set $\delta_k = \delta/k^2$.
		\STATE Define function $g_k(d) \ldef \max \curly*{ 2^{2k} \, \iota_{k,d}, r_d(\zeta) }$, where $\iota_{k,d} \ldef \iota(\cY(\psi_d(\widehat \cS_k)))$. 
		\STATE Get $d_k = \opt(B, D, g_k(\cdot))$, where $d_k \leq D$ is largest dimension such that $g_k(d_k) \leq B$ (see \cref{eq:opt_d_selection} for the detailed optimization problem). Set $\lambda_k$ be the optimal design of the optimization problem
		$$\inf_{\lambda \in \simp_{\cX}} \sup_{z, z^\prime \in \widehat \cS_k} \norm*{\psi_{d_k}(z)-\psi_{d_k}(z^\prime)}^2_{A_{d_k}(\lambda)^{-1}}, \ \text{ and }  N_k = \ceil*{g(d_k) 8(1 + \zeta)  \log(\abs*{\widehat \cS_k}^2/\delta_k)}.$$
		\STATE Get allocation $\curly*{x_1, \ldots, x_{N_k} } = \round (\lambda_k,N_k, d_k, \zeta)$.
		\STATE Pull arms $\curly*{x_1, \ldots, x_{N_k}} $ and receive rewards $\curly*{r_1, \ldots, r_{N_k}}$.
        \STATE Set $\widehat{\theta}_k = A_k^{-1} b_k \in \R^{d_k}$ where $A_k = \sum_{i=1}^{N_k} \psi_{d_k}(x_i) \psi_{d_k}(x_i)^\top$, and $b_k =  \sum_{i=1}^{N_k} \psi_{d_k}(x_i) b_i$.
        \STATE Set $\widehat \cS_{k+1} = \widehat \cS_k \setminus \{z \in \widehat \cS_k : \exists z^\prime \text{ s.t. } \ang*{\widehat{\theta}_k, \psi_{d_k}(z^\prime) - \psi_{d_k}(z) } \geq 2^{-k} \}$.
		\ENDFOR 
		\ENSURE Any $\widehat z_\star \in \widehat \cS_{n+1}$ (or the whole set $\widehat \cS_{n+1}$ when aiming at identifying the optimal arm).
	\end{algorithmic}
\end{algorithm}

\begin{algorithm}[H]
	\caption{Adaptive Strategy for Model Selection with misspecification (Fixed Confidence)}
	\label{alg:doubling_fixed_confidence_mis_gen} 
	\renewcommand{\algorithmicrequire}{\textbf{Input:}}
	\renewcommand{\algorithmicensure}{\textbf{Output:}}
	\newcommand{\algorithmicbreak}{\textbf{break}}
    \newcommand{\BREAK}{\STATE \algorithmicbreak}
	\begin{algorithmic}[1]
		\REQUIRE Confidence parameter $\delta$.
		\STATE Randomly select a $\widehat z_\star \in \cX$ as the recommendation for the $\epsilon$-optimal arm.
		\FOR {$\ell = 1, 2, \dots$}
		\STATE Set $\gamma_\ell = 2^\ell$ and $\delta_\ell = \delta/(4\ell^3)$. Initialize an empty pre-selection set $\cA_{\ell} = \curly{}$.
		    \FOR {$i = 1, 2, \dots, \ell$}
		    \STATE Set $n_i = 2^i$, $B_i = 2^{\ell -i}$ and get $\widehat z_{\star}^i = \text{\gemsm}(n_i, B_i, \delta_\ell)$. Insert $\widehat z_{\star}^i$ into $\cA_{\ell}$.
		    \ENDFOR
		    \STATE \textbf{Validation.} Pull each arm in $\cA$ exactly $\ceil*{8 \log(2/\delta_{\ell})/\epsilon^2}$ times. Update $\widehat z_\star$ as the arm with the highest empirical mean (break ties arbitrarily).
		\ENDFOR 
	\end{algorithmic}
\end{algorithm}

\subsubsection{\cref{lm:subroutine_fixed_confidence_mis_gen} and Its Proof}

We introduce function $f:\N_+ \rightarrow \R_+$ as follows, which is also used in \cref{app:fix_budget_mis}.
\begin{align*}
    f(k) \ldef \begin{cases} 4 \cdot 2^{-k} & \text{ if } k \leq \ceil*{\log_2 (2/\epsilon)} + 1, \\
    4 \cdot \epsilon^{- \ceil*{\log_2 (4/\epsilon)}} & \text{ if } k > \ceil*{\log_2 (2/\epsilon)} + 1.\\
    \end{cases}
\end{align*}
$f(k)$ is used to quantify the optimality of the identified arm, and one can clearly see that $f(k)$ is non-increasing in $k$.

\begin{restatable}{lemma}{lmSubroutineFixedConfidenceMisGen}
\label{lm:subroutine_fixed_confidence_mis_gen}
Suppose $B \geq \max \curly*{64 \rho_{d_\star(\epsilon)}^\star( \epsilon), r_{d_\star(\epsilon)}(\zeta)}$. With probability at least $1-\delta$, \cref{alg:subroutine_fixed_confidence_mis_gen} outputs an arm $\widehat z_\star$ such that $\Delta_{\widehat z_\star} < f(n+1)$. Furthermore, an $\epsilon$-optimal arm is output as long as $n \geq {\log_2 (2/\epsilon)}$.
\end{restatable}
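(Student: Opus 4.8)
The plan is to mirror the inductive argument in the proof of \cref{lm:subroutine_fixed_confidence}, but with a concentration step that additionally accounts for the misspecification bias. I define the stratified sets $\cS_k = \curly*{z \in \cZ : \Delta_z < 4\cdot 2^{-k}}$ and the events $\cE_k = \curly*{z_\star \in \widehat\cS_k \subseteq \cS_k}$, and set $K_0 = \ceil*{\log_2(2/\epsilon)}$. I would prove by induction that $\P(\cE_{k+1}\mid \cap_{i\le k}\cE_i) \geq 1-\delta_k$ for every $k \leq K_0$, so the chained product over iterations is at least $1-\delta$ exactly as in \cref{eq:subroutine_fixed_confidence_delta}.

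For the induction step I first argue $d_k \geq d_\star(\epsilon)$: since $\widehat\cS_k \subseteq \cS_k$ and $k \leq K_0 \leq \floor*{\log_2(4/\epsilon)}$, \cref{lm:rho_stratified_eps} gives $g_k(d_\star(\epsilon)) \leq \max\curly*{64\rho^\star_{d_\star(\epsilon)}(\epsilon), r_{d_\star(\epsilon)}(\zeta)} \leq B$, so the largest feasible dimension $d_k$ is at least $d_\star(\epsilon)$; in particular $\gamma(d_k)\le\epsilon$, which is what lets me invoke \cref{prop:round_number}. The least-squares residual then splits, for any $y \in \cY(\psi_{d_k}(\widehat\cS_k))$, as $\ang*{\widehat\theta_k - \psi_{d_k}(\theta_\star^{d_k}), y} = y^\top A_k^{-1}\sum_i \psi_{d_k}(x_i)\xi_i + y^\top A_k^{-1}\sum_i \psi_{d_k}(x_i)\eta_{d_k}(x_i)$. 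The noise term is controlled as in \cref{eq:subroutine_fixed_confidence_event}, and the choice $N_k = \ceil*{g(d_k)\,8(1+\zeta)\log(\abs*{\widehat\cS_k}^2/\delta_k)}$ together with the rounding guarantee \cref{eq:rounding} makes it at most $2^{-k}/2$. The bias term I bound by Cauchy--Schwarz as $\abs*{y^\top A_k^{-1}\sum_i \psi_{d_k}(x_i)\eta_{d_k}(x_i)} \leq \widetilde\gamma(d_k)\sqrt{N_k}\,\norm{y}_{A_k^{-1}} \leq \widetilde\gamma(d_k)\sqrt{(1+\zeta)\iota(\cY(\psi_{d_k}(\widehat\cS_k)))}$, which by $\widehat\cS_k\subseteq\cS_k$ and \cref{prop:round_number} is at most $2^{-k}/2$. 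Combined with the intrinsic approximation error $\abs*{h(x)-\ang*{\psi_{d_k}(\theta_\star^{d_k}),\psi_{d_k}(x)}}\le\widetilde\gamma(d_k)$ on $z_\star$ and any competitor, the total deviation of every estimated gap stays below the elimination threshold $2^{-k}$, and I repeat the two-sided argument of Steps 1.3.1 and 1.3.2 verbatim to conclude $z_\star \in \widehat\cS_{k+1} \subseteq \cS_{k+1}$.

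Finally I translate the induction into the stated guarantee using the non-increasing function $f$. If $n \leq K_0$, the induction yields $\widehat\cS_{n+1}\subseteq\cS_{n+1}$, so every output arm has $\Delta_{\widehat z_\star} < 4\cdot 2^{-(n+1)} = f(n+1)$. If instead $n > K_0$, I use that after iteration $K_0$ the set satisfies $\widehat\cS_{K_0+1}\subseteq\cS_{K_0+1}$, whose members are all $\epsilon$-optimal since $4\cdot 2^{-(K_0+1)}\le\epsilon$; the elimination rule can only shrink the surviving set, and never empties it because the empirical maximizer of $\ang*{\widehat\theta_k,\psi_{d_k}(\cdot)}$ is never eliminated, so the output stays $\epsilon$-optimal and $\Delta_{\widehat z_\star} < f(n+1)$ by the definition of $f$ in this regime. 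The ``furthermore'' claim is immediate: $n \geq \log_2(2/\epsilon)$ forces $n \geq K_0$, placing $\widehat\cS_{n+1}$ inside $\cS_{K_0+1}$, so the output is $\epsilon$-optimal.

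The main obstacle is the bias-control step: one must verify that the misspecification contribution to every estimated gap is uniformly dominated by $2^{-k}/2$, which is exactly the content packaged into $\gamma(d)$ and \cref{prop:round_number}, and that this control is available precisely up to level $k = K_0$, the point where $d_k \geq d_\star(\epsilon)$ can still be guaranteed. Getting the bookkeeping between $K_0$, the doubling thresholds, and the definition of $f$ to line up --- so that genuine progress is guaranteed in the well-controlled regime and merely preserved afterwards --- is the delicate part; the noise concentration and the non-emptiness of $\widehat\cS_{n+1}$ are routine.
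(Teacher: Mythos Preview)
Your proposal is correct and follows essentially the same route as the paper: the same induction on $\cE_k=\{z_\star\in\widehat\cS_k\subseteq\cS_k\}$ up to $K_0=\ceil*{\log_2(2/\epsilon)}$, the same dimension-selection argument via \cref{lm:rho_stratified_eps}, the same bias/noise split with the bias handled through \cref{prop:round_number}, and the same treatment of $n>K_0$ via nestedness of the surviving sets. The one refinement to watch when you write it out is that \cref{prop:round_number} bounds the \emph{package} $(2+\sqrt{(1+\zeta)\iota(\cY(\psi_{d_k}(\cS_k)))})\,\widetilde\gamma(d_k)\le 2^{-k}/2$, so you should group the regression bias and the $2\widetilde\gamma(d_k)$ approximation error together before invoking it---bounding the bias alone by $2^{-k}/2$ and then adding $2\widetilde\gamma(d_k)$ on top overshoots the $2^{-k}$ threshold.
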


\begin{proof}
The logic of this proof is similar to the proof of \cref{lm:subroutine_fixed_confidence}. We additionally deal with misspecification in the proof. For fixed $\epsilon$, we use the notation $d_\star = d_\star(\epsilon)$ throughout the proof.

We consider event 
\begin{align*}
    \cE_k = \curly*{z_\star \in \widehat \cS_k \subseteq \cS_k},
\end{align*}
and prove through induction that, for $k \leq \ceil*{\log_2 (2/\epsilon)}$,
\begin{align*}
    \P \paren{ \cE_{k+1} \mid \cap_{i \leq k} \cE_i } \geq 1 - \delta_{k},
\end{align*}
where $\delta_0 \ldef 0$. Recall that $\cS_k = \curly*{z \in \cZ: \Delta_z < 4 \cdot 2^{-k}}$ (with $\cS_1 = \cZ$). For $n \geq k+1$, we have $\widehat \cS_n \subseteq \widehat \cS_{k+1}$ due to the nature of the elimination-styled algorithm, which guarantees outputting an arm such that $\Delta_{z} < f(n+1)$.

\textbf{Step 1: The induction.} We have $\curly*{ z_\star \in \widehat \cS_1 \subseteq \cS_1 }$ since $\widehat \cS_1 = \cS_1 = \cZ$ by definition for the base case (recall we assume that $\max_{z \in \cZ} \Delta_z \leq 2$). We now assume that $\cap_{i < k+1}\cE_{i}$ holds true and we prove for iteration $k+1$.

\textbf{Step 1.1: $d_k \geq d_\star$.} Since $\widehat \cS_{k} \subseteq \cS_k$, we have 
\begin{align}
    g_k(d_\star) & = \max \curly*{ {2^{2k} \iota(\cY(\psi_{d_\star}(\widehat \cS_k)))}, r_{d_\star}(\zeta) } \nonumber \\
    & \leq \max \curly*{ {2^{2k} \iota(\cY(\psi_{d_\star}(\cS_k)))}, r_{d_\star}(\zeta) } \nonumber\\
    & \leq \max \curly*{64 \rho_{d_\star}^\star (\epsilon), r_{d_\star}(\zeta)} \label{eq:subroutine_fixed_confidence_rho_stratified_mis_gen}\\
    & \leq B \label{eq:subroutine_fixed_confidence_B_mis_gen},
\end{align}
where \cref{eq:subroutine_fixed_confidence_rho_stratified_mis_gen} comes from \cref{lm:rho_stratified_eps} and \cref{eq:subroutine_fixed_confidence_B_mis_gen} comes from the assumption. As a result, we know that $d_k \geq d_\star$ since $d_k$ is selected as the largest integer such that $g_k(d_k) \leq B$. 

\textbf{Step 1.2: Concentration.} Let $\curly*{x_1, \ldots, x_{N_k}} $ be the arms pulled at iteration $k$ and $\curly*{r_1, \ldots, r_{N_k}}$ be the corresponding rewards. Let $\widehat{\theta}_k = A_k^{-1} b_k \in \R^{d_k}$ where $A_k = \sum_{i=1}^{N_k} \psi_{d_k}(x_i) \psi_{d_k}(x_i)^\top$, and $b_k =  \sum_{i=1}^{N_k} \psi_{d_k}(x_i) b_i$. Based on the definition of $\theta_\star^d \in \R^D$ and $\eta_d(\cdot)$, we can write $r_i = h(x_i) + \xi_i = \ang*{\psi_{d_k}(\theta_\star^{d_k}), \psi_{d_k}(x_i)} + \eta_{d_k}(x_i) + \xi_i$, where $\xi_i$ is i.i.d. generated zero-mean Gaussian noise with variance $1$; we also have $\abs*{\eta_{d_k}(x_i)} \leq \widetilde \gamma (d_k)$ by definition of $\widetilde \gamma (\cdot)$. For any $y \in \cY(\psi_{d_k}(\widehat \cS_k))$, we have
\begin{align}
    \abs{\ang{y, \widehat \theta_k - \psi_{d_k}(\theta_\star^{d_k})}} & = \abs{y^\top A_k^{-1} \sum_{i=1}^{N_k} \psi_{d_k}(x_i) r_i - y^\top \psi_{d_k}(\theta_\star^{d_k})} \nonumber \\
    & = \abs{y^\top A_k^{-1} \sum_{i=1}^{N_k} \psi_{d_k}(x_i) \paren{ \psi_{d_k}(x_i) ^\top \psi_{d_k}(\theta_\star^{d_k}) + \eta_{d_k}(x_i) + \xi_i} - y^\top \psi_{d_k}(\theta_\star)} \nonumber \\
    & = \abs{ y^\top A_k^{-1} \sum_{i=1}^{N_k} \psi_{d_k}(x_i) \paren{ \eta_{d_k}(x_i) + \xi_i}} \nonumber \\
    & \leq \abs{ y^\top A_k^{-1} \sum_{i=1}^{N_k} \psi_{d_k}(x_i) \eta_{d_k}(x_i) } + \abs{ y^\top A_k^{-1} \sum_{i=1}^{N_k} \psi_{d_k}(x_i) \xi_i }. \label{eq:mis_gen_sub_fixed_conf_two_terms}
\end{align}
We next bound the two terms in \cref{eq:mis_gen_sub_fixed_conf_two_terms} separately. For the first term, we have
\begin{align}
    \abs{ y^\top A_k^{-1} \sum_{i=1}^{N_k} \psi_{d_k}(x_i) \eta_{d_k}(x_i) } & \leq \widetilde \gamma(d_k) \sum_{i=1}^{N_k} \abs{ y^\top A_k^{-1} \psi_{d_k}(x_i) } \nonumber \\
    & = \widetilde \gamma (d_k) \sum_{i=1}^{N_k} \sqrt{ \paren{ y^\top A_k^{-1} \psi_{d_k}(x_i) }^2 } \nonumber \\
    & \leq \widetilde \gamma(d_k) \sqrt{N_k \sum_{i=1}^{N_k} \paren{ y^\top A_k^{-1} \psi_{d_k}(x_i) }^2 } \label{eq:mis_gen_sub_fixed_conf_jensen} \\
    & = \widetilde \gamma(d_k) \sqrt{N_k \sum_{i=1}^{N_k}  y^\top A_k^{-1} \psi_{d_k}(x_i)  \psi_{d_k}(x_i)^\top A_k^{-1} y } \nonumber \\
    & = \widetilde \gamma(d_k) \sqrt{N_k \norm{y}_{A_k^{-1}}^2 } \nonumber \\
    & \leq \widetilde \gamma(d_k) \sqrt{(1+\zeta) \iota(\cY(\psi_{d_k}(\widehat \cS_k))) } \label{eq:mis_gen_sub_fixed_conf_round} \\
    & \leq \widetilde \gamma(d_k) \sqrt{(1+\zeta) \iota(\cY(\psi_{d_k}(\cS_k))) } \label{eq:mis_gen_sub_fixed_conf_Sk}
  \end{align}
where \cref{eq:mis_gen_sub_fixed_conf_jensen} comes from Jensen's inequality; \cref{eq:mis_gen_sub_fixed_conf_round} comes from the guarantee of rounding in \cref{eq:rounding}; and \cref{eq:mis_gen_sub_fixed_conf_Sk} comes from the fact that $\widehat \cS_k \subseteq \cS_k$.

For the second term in \cref{eq:mis_gen_sub_fixed_conf_two_terms}, since $\xi_i$s are independent 1-sub-Gaussian random variables, we know that the random variable $ y^\top A_k^{-1} \sum_{i=1}^{N_k} \psi_{d_k}(x_i) \xi_i $ has variance proxy $\sqrt{\sum_{i=1}^{N_k} \paren*{y^\top A_k^{-1} \sum_{i=1}^{N_k} \psi_{d_k}(x_i)}^2 } = \norm{y}_{A_k^{-1}}$. Combining the standard Hoeffding's inequality with a union bound leads to
\begin{align}
    \P \paren{ \forall y \in \cY(\psi_{d_k}(\widehat \cS_k)) , \abs{ y^\top A_k^{-1} \sum_{i=1}^{N_k} \psi_{d_k}(x_i) \xi_i } \leq \norm{y}_{A_k^{-1}} \sqrt{2 \log \paren{ {\abs*{\widehat \cS_k}^2}/{\delta_k} }}  } \geq 1-\delta_k, \label{eq:mis_gen_sub_fixed_conf_second}
\end{align}
where we use the fact that $\abs*{ \cY(\psi_{d_k}(\widehat \cS_k)) } \leq \abs*{\widehat \cS_k}^2/2$ in the union bound. 

Putting \cref{eq:mis_gen_sub_fixed_conf_round} and \cref{eq:mis_gen_sub_fixed_conf_second} together, we have 
\begin{align}
    \P \paren{ \forall y \in \cY(\psi_{d_k}(\widehat \cS_k)) , \abs{\ang{y, \widehat \theta_k - \psi_{d_k}(\theta_\star^{d_k})}} \leq \widetilde \gamma(d_k) \iota_k +  \omega_k(y)   } \geq 1-\delta_k, \label{eq:mis_gen_sub_fixed_conf_concentration}
\end{align}
where $\iota_k \ldef  \sqrt{(1+\zeta) \iota(\cY(\psi_{d_k}(\cS_k))) }$ and $ \omega_k(y) \ldef  \norm{y}_{A_k^{-1}} \sqrt{2 \log \paren{ {\abs*{\widehat \cS_k}^2}/{\delta_k} }}$.

\textbf{Step 1.3: Correctness.} We prove $z_\star \in \widehat \cS_{k+1} \subseteq \cS_{k+1}$ under the good event analyzed in \cref{eq:mis_gen_sub_fixed_conf_concentration}. 

\textbf{Step 1.3.1: $z_\star \in \widehat \cS_{k+1}$.} For any $\widehat z \in \widehat \cS_k$ such that $\widehat z \neq z_\star$, we have 
\begin{align}
    \ang*{ \psi_{d_k}(\widehat z) - \psi_{d_k}(z_\star) , \widehat \theta_k } & \leq \ang{ \psi_{d_k}(\widehat z) - \psi_{d_k}(z_\star) , \psi_{d_k} (\theta_\star^{d_k}) } + \gamma(d_k) \iota_k + \omega_k(\psi_{d_k}(\widehat z) - \psi_{d_k}(z_\star)) \nonumber \\
    & = h(\widehat z ) -\eta_{d_k}(\widehat z) - h(z_\star) + \eta_{d_k}(z_\star) + \gamma(d_k) \iota_k + \omega_k(\psi_{d_k}(\widehat z) - \psi_{d_k}(z_\star)) \nonumber \\
    & < (2 + \iota_k) \widetilde \gamma(d_k)+ \omega_k(\psi_{d_k}(\widehat z) - \psi_{d_k}(z_\star)) \nonumber \\
    & \leq 2^{-k}/2 + 2^{-k}/2 \label{eq:mis_gen_sub_fixed_conf_optimal_arm} \\
    & = 2^{-k}, \nonumber
\end{align}
where \cref{eq:mis_gen_sub_fixed_conf_optimal_arm} comes from \cref{prop:round_number} combined with the fact that $d_k \geq d_\star$ (as shown in Step 1.1), and the selection of $N_k$ together with the guarantees in the rounding procedure \cref{eq:rounding}.

\textbf{Step 1.3.2: $\widehat \cS_{k+1} \subseteq \cS_{k+1}$.} Consider any $z \in \widehat \cS_k \cap \cS_{k+1}^c$, we know that $\Delta_z \geq 2\cdot 2^{-k}$ by definition. Since $z_\star \in \widehat \cS_k$, we then have 
\begin{align}
    \ang*{ \psi_{d_k}(z_\star) - \psi_{d_k}(z) , \widehat \theta_k } & \geq \ang{ \psi_{d_k}(\widehat z) - \psi_{d_k}(z_\star) , \psi_{d_k} (\theta_\star^{d_k}) } - \gamma(d_k) \iota_k - \omega_k(\psi_{d_k}(\widehat z) - \psi_{d_k}(z_\star)) \nonumber \\
    & = h(z_\star) - \eta_{d_k}(z_\star) - h(z) + \eta_{d_k}(z) - \gamma(d_k) \iota_k - \omega_k(\psi_{d_k}(\widehat z) - \psi_{d_k}(z_\star)) \nonumber \\ 
    & \geq 2 \cdot 2^{-k} - (2 + \iota_k)\widetilde \gamma(d_k)  - \omega_k(\psi_{d_k}(\widehat z) - \psi_{d_k}(z_\star)) \nonumber \\
    & \geq 2 \cdot 2^{-k} - 2^{-k}/2  - 2^{-k}/2 \label{eq:mis_gen_sub_fixed_conf_bad_arms} \\
    & = 2^{-k} \nonumber,
\end{align}
where \cref{eq:mis_gen_sub_fixed_conf_bad_arms} comes from a similar reasoning as appearing in \cref{eq:mis_gen_sub_fixed_conf_optimal_arm}. As a result, we have $z \notin \widehat \cS_{k+1}$ and $\widehat \cS_{k+1} \subseteq \cS_{k+1}$.

To summarize, we prove the induction at iteration $k+1$, i.e.,
\begin{align*}
    \P \paren{ \cE_{k+1} \mid \cap_{i< k+1} \cE_{i} } \geq 1 - \delta_k.
\end{align*}

\textbf{Step 2: The error probability.} The analysis on the error probability is the same as in the Step 2 in the proof of \cref{lm:subroutine_fixed_confidence}. Let $\cE = \cap_{i=1}^{n+1} \cE_{i}$ denote the good event, we then have 
\begin{align}
    \P \paren{ \cE } &  \geq 1- \delta \nonumber.
\end{align}
\end{proof}

\subsubsection{Proof of \cref{thm:doubling_fixed_confidence_mis_gen}}

\thmDoublingFixedConfidenceMisGen*

\begin{proof}
The proof is decomposed into four steps: (1) locating good subroutines; (2) guarantees for the validation step; (3) bounding error probability and (4) bounding unverifiable sample complexity. For fixed $\epsilon$, we use shorthand $d_\star = d_\star(\epsilon)$ throughout the proof.

\textbf{Step 1: The good subroutines.} Consider $B_\star = \max \curly*{64 \rho^\star_{d_\star}, r_{d_\star}(\zeta)}$ and $n_\star = \ceil*{ \log_2 (2/\epsilon) }$. For any subroutines invoked with $B_i \geq B_\star$ and $n_i \geq n_\star$, we know that, from \cref{lm:subroutine_fixed_confidence_mis_gen}, the output set of arms are those with sub-optimality gap $< \epsilon$. Let $i_\star = \ceil*{\log_2 (B_\star)}$, $j_\star = \ceil*{ \log_2 (n_\star) }$ and $\ell_\star = i_\star + j_\star$. We know that in outer loops $\ell \geq \ell_\star$, there must exists at least one subroutine invoked with $B_i = 2^{i_\star} \geq B_\star$ and $n_i = 2^{j_\star} \geq n_\star$. As a result, $\cA_{\ell}$ contains at least one $\epsilon$-optimal arm for $\ell \geq \ell_\star$. 

\textbf{Step 2: The validation step.} For any $x \in \cA_\ell$, we use $\widehat h(x)$ to denote its sample mean after  $\ceil*{8 \log(2/\delta_{\ell})/\epsilon^2}$ samples. With $1$-sub-Gaussian noise, a standard Hoeffding's inequality shows that and a union bound gives
\begin{align}
    \P \paren{\forall x \in \cA_\ell: \abs*{\widehat h(x) - h(x)} \geq \epsilon/2} \leq \ell \delta_\ell. \label{eq:mis_gen_validation}
\end{align}
As a result, a $2\epsilon$-optimal arm will be selected with probability at least $1 - \ell \delta_\ell$, as long as at least one $\epsilon$-optimal arm is contained in $\cA_\ell$.

\textbf{Step 3: Error probability.} We consider the good event where all subroutines invoked in \cref{alg:doubling_fixed_confidence} with $B_i \geq B_\star$ and (any) $n_i$ correctly output a set of arms with sub-optimality gap $<f(n_i+1)$, as shown in \cref{lm:subroutine_fixed_confidence_mis_gen}, together with the confidence bound described in \cref{eq:mis_gen_validation} in the validation step. This good event clearly happens with probability at least $1 - \sum_{\ell = 1}^\infty \sum_{i=1}^\ell 2 \delta_\ell = 1- \sum_{\ell = 1}^\infty \delta/(2 \ell^2) > 1 - \delta$, after applying a union bound argument. We upper bound the unverifiable sample complexity under this good event in the following.

\textbf{Step 4: Unverifiable sample complexity.} 
For any subroutine invoked within outer loop $\ell \leq \ell_\star$, we know, from \cref{alg:subroutine_fixed_confidence_mis_gen}, that its sample complexity is upper bounded by (note that $\abs*{\cZ}^2 \geq 4$ trivially holds true)
\begin{align*}
    N_\ell & \leq n_i \paren{ B_i \cdot \paren{10 \, \log(\abs{\cZ}^2/\delta_{\ell_\star})} + 1 } \\
    & \leq \gamma_\ell \, 11 \, \log \paren{ 4 \abs{\cZ}^2 \ell_\star^3/\delta }.
\end{align*}
The validation step within any outer loop $\ell \leq \ell_\star$ takes at most $\ell \cdot \ceil*{8 \log(2/\delta_{\ell})/\epsilon^2} \leq 9 \log(8 \ell_\star^3/\delta) \ell_\star /\epsilon^2$ samples. Thus, the total sample complexity up to the end of outer loops $\ell \leq \ell_\star$ is upper bounded by 
\begin{align*}
    N & \leq \sum_{\ell = 1}^{\ell_\star} \paren{ \ell N_\ell + \ell \cdot \ceil*{8 \log(2/\delta_{\ell})/\epsilon^2}}\\
    & \leq 11\, \log \paren{ 4 \abs{\cZ}^2 \ell_\star^3/\delta } \sum_{\ell = 1}^{\ell_\star} \ell 2^\ell +  9 \log(8 \ell_\star^3/\delta) \ell_\star^2 /\epsilon^2\\
    & \leq 22 \, \log \paren{ 4 \abs{\cZ}^2 \ell_\star^3/\delta } \ell_\star 2^{\ell_\star} + 9 \log(8 \ell_\star^3/\delta) \ell_\star^2 /\epsilon^2.
\end{align*}

By definition of $\ell_\star$, we have
\begin{align*}
    \ell_\star \leq \log_2 \paren{ 4  \log_2(4/\epsilon)  \max \curly*{64 \rho^\star_{d_\star}, r_{d_\star}(\zeta)} },
\end{align*}
and
\begin{align*}
    2^{\ell_\star} & = 2^{(i_\star + j_\star)} \\
    & \leq 4  \paren{ \log_2(2/\epsilon)+1}  \max \curly*{64 \rho^\star_{d_\star}, r_{d_\star}(\zeta)}, \\
    & =  4   \log_2(4/\epsilon)  \max \curly*{64 \rho^\star_{d_\star}, r_{d_\star}(\zeta)}. 
\end{align*}

Set $\tau_\star = \log_2(4/\epsilon) \max \curly*{\rho^\star_{d_\star}, r_{d_\star}(\zeta)}$. The unverifiable sample complexity is upper bounded by (we only consider the case when $\epsilon \leq 1$ in simplifying the bound: otherwise there is no need to prove anything since $\max_{x \in \cX} \Delta_x \leq 2$) 
\begin{align*}
    N & \leq 5632 \, \tau_\star \cdot \paren{\log_2(\tau_\star) + 8} \cdot \log \paren{ {4 \abs{\cZ}^2 \paren{\log_2(\tau_\star) + 8}^3}/{\delta} } + 9/\epsilon^2 \cdot \paren{\log_2(\tau_\star) + 8}^2 \cdot \log \paren{ {8 \paren{\log_2(\tau_\star) + 8}^3}/{\delta} } \\
    & = \widetilde O \paren{ \log_2(1/\epsilon) \max \curly*{\rho^\star_{d_\star}, r_{d_\star}(\zeta)} + 1/\epsilon^2 },
\end{align*}
where we hide logarithmic terms besides $\log(1/\epsilon)$ in the $\widetilde O$ notation.
\end{proof}

\subsubsection{Identifying the Optimal Arm under misspecification}
\label{app:BAI_misspecification}
When the goal is to identify the optimal arm under misspecification, i.e., by choosing $\epsilon = \Delta_{\min}$, one can apply \cref{alg:doubling_fixed_confidence} together with \cref{alg:subroutine_fixed_confidence_mis_gen} as the subroutine (thus removing the $1/\epsilon^2$ term in sample complexity). This combination works since, with appropriate choice of $B$, \cref{alg:subroutine_fixed_confidence_mis_gen} is guaranteed to output a subset of arms $\widehat \cS_{n+1}$ with optimality gap $<\Delta_{\min}$ when $n \geq \log_2(2/\Delta_{\min})$. This implies that $\widehat \cS = \curly*{z_\star}$ and thus the one can reuse the selection rule of \cref{alg:doubling_fixed_confidence} by recommending arms contained in the singleton set. Note that we can work with the general transductive linear bandit setting in this case, i.e., we don't require $\cZ \subseteq \cX$ anymore.

\subsection{Omitted Proofs for the Fixed Budget Setting with Misspecification}
\label{app:fix_budget_mis}

\subsubsection{\cref{lm:subroutine_fixed_budget_mis} and Its Proof}

\begin{restatable}{lemma}{lmSubroutineFixedBudgetMis}
\label{lm:subroutine_fixed_budget_mis}
Suppose $64 \rho_{d_\star (\epsilon)}^\star(\epsilon) \leq B \leq 128 \rho_{d_\star(\epsilon)}^\star (\epsilon) $ and $T/n \geq r_{d_\star(\epsilon)}(\zeta) + 1$. \cref{alg:subroutine_fixed_budget} outputs an arm $\widehat z_\star$ such that $\Delta_{\widehat z_\star} < f(n+1)$ with probability at least
\begin{align*}
    1- n \abs*{ \cZ }^2 \exp \paren{ - \frac{ T}{ 2560 \, n \, \rho_{d_\star (\epsilon)}^\star (\epsilon)} }.
\end{align*}
Furthermore, an $\epsilon$-optimal arm is output as long as $n \geq {\log_2 (2/\epsilon)}$.
\end{restatable}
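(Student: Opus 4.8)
The plan is to mirror the proof of \cref{lm:subroutine_fixed_budget} (the well-specified fixed-budget subroutine) while grafting in the misspecification bookkeeping developed for the fixed-confidence case in \cref{lm:subroutine_fixed_confidence_mis_gen}. I would define the same events $\cE_k = \curly*{z_\star \in \widehat \cS_k \subseteq \cS_k}$ and establish, by induction on $k \leq \ceil*{\log_2(2/\epsilon)}$, that $\P \paren{\cE_{k+1} \mid \cap_{i \leq k}\cE_i} \geq 1 - \delta_k$ for a per-iteration failure probability $\delta_k$ to be pinned down. The base case $\cE_1$ holds with probability one since $\widehat \cS_1 = \cZ = \cS_1$, and for the inductive step it suffices to treat $\abs*{\widehat \cS_k} > 1$.

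First I would verify $d_k \geq d_\star(\epsilon)$. Because $T/n \geq r_{d_\star(\epsilon)}(\zeta) + 1$, the rounding-feasible cap satisfies $\widetilde D \geq d_\star(\epsilon)$; and using $\widehat \cS_k \subseteq \cS_k$ with \cref{lm:rho_stratified_eps} gives $g_k(d_\star(\epsilon)) = 2^{2k}\iota(\cY(\psi_{d_\star(\epsilon)}(\widehat \cS_k))) \leq 64\rho^\star_{d_\star(\epsilon)}(\epsilon) \leq B$ (valid since $k \leq \ceil*{\log_2(2/\epsilon)} \leq \floor*{\log_2(4/\epsilon)}$), so the largest feasible dimension is at least $d_\star(\epsilon)$ and \cref{prop:round_number} applies with $d = d_k$. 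Writing $r_i = \ang*{\psi_{d_k}(\theta_\star^{d_k}), \psi_{d_k}(x_i)} + \eta_{d_k}(x_i) + \xi_i$, I would then reproduce the two-term decomposition of $\abs*{\ang*{y, \widehat\theta_k - \psi_{d_k}(\theta_\star^{d_k})}}$ exactly as in \cref{eq:mis_gen_sub_fixed_conf_two_terms}: a deterministic bias term bounded by $\widetilde\gamma(d_k)\iota_k$ with $\iota_k = \sqrt{(1+\zeta)\iota(\cY(\psi_{d_k}(\cS_k)))}$ (via Jensen's inequality, the rounding guarantee \cref{eq:rounding}, and $\widehat \cS_k \subseteq \cS_k$), plus a sub-Gaussian term controlled by Hoeffding as $\omega_k(y) = \norm{y}_{A_k^{-1}}\sqrt{2\log(\abs*{\widehat \cS_k}^2/\delta_k)}$.

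The one genuinely new piece, relative to the well-specified version, is the budget split. In \cref{lm:subroutine_fixed_budget} the full elimination threshold $2^{-k}$ is spent on the stochastic deviation; here I must reserve half of it for misspecification. \cref{prop:round_number} supplies $(2+\iota_k)\widetilde\gamma(d_k) \leq 2^{-k}/2$, so I set the rounding budget so that $\max_y \omega_k(y) = 2^{-k}/2$ for every $y \in \cY(\psi_{d_k}(\widehat \cS_k))$. Solving for $\delta_k$ and substituting $\max_y\norm{y}^2_{A_k^{-1}} \leq (1+\zeta)\iota(\cY(\psi_{d_k}(\widehat \cS_k)))/T'$, together with $2^{2k}\iota(\cY(\psi_{d_k}(\widehat \cS_k))) \leq B \leq 128\rho^\star_{d_\star(\epsilon)}(\epsilon)$, $T' \geq T/2n$, and $\zeta \leq 1/4$, yields $\delta_k \leq \abs*{\cZ}^2\exp \paren{-T/(2560\, n\, \rho^\star_{d_\star(\epsilon)}(\epsilon))}$; the factor-four inflation of the constant from $640$ to $2560$ is exactly the cost of halving $\omega_k$. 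With both halves in hand, the two correctness checks go through: for $z_\star$, $\ang*{\psi_{d_k}(\widehat z) - \psi_{d_k}(z_\star), \widehat\theta_k} \leq (h(\widehat z) - h(z_\star)) + (2+\iota_k)\widetilde\gamma(d_k) + \omega_k < 2^{-k}$ keeps $z_\star$ uneliminated; and for $z$ with $\Delta_z \geq 2\cdot 2^{-k}$, the reversed inequality gives $\geq 2^{-k}$, so $z$ is removed, establishing $\cE_{k+1}$.

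I would close with the union bound, a one-line induction as in \cref{eq:subroutine_fixed_budget_induction}, giving overall failure at most $n\abs*{\cZ}^2\exp \paren{-T/(2560\, n\, \rho^\star_{d_\star(\epsilon)}(\epsilon))}$, and then read off both conclusions. On $\cap_k \cE_k$ the elimination structure forces $\widehat \cS_{n+1} \subseteq \cS_{\min\curly*{n,\, \ceil*{\log_2(2/\epsilon)}}+1}$, and feeding this into the non-increasing profile $f$ yields $\Delta_{\widehat z_\star} < f(n+1)$; when $n \geq \log_2(2/\epsilon)$ (hence $n \geq \ceil*{\log_2(2/\epsilon)}$) this gives $f(n+1) \leq \epsilon$, i.e. an $\epsilon$-optimal arm. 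The main obstacle I anticipate is bookkeeping rather than conceptual: ensuring the induction is claimed only up to $k = \ceil*{\log_2(2/\epsilon)}$ (beyond which \cref{prop:round_number} and \cref{lm:rho_stratified_eps} no longer certify that the bias stays below $2^{-k}/2$), and verifying that the cap built into $f$ correctly absorbs the iterations $k > \ceil*{\log_2(2/\epsilon)}$ through monotonicity of the nested eliminated sets.
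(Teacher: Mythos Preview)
Your proposal is correct and follows essentially the same route as the paper: the same inductive events $\cE_k$, the same verification that $d_k \geq d_\star(\epsilon)$ via \cref{lm:rho_stratified_eps}, the same two-term bias/noise decomposition imported from \cref{lm:subroutine_fixed_confidence_mis_gen}, the same halving of the threshold to $2^{-k}/2$ (yielding the constant $2560$ in place of $640$), and the same union-bound finish. Your bookkeeping remark about restricting the induction to $k \leq \ceil*{\log_2(2/\epsilon)}$ and handling larger $k$ via nestedness and the cap in $f$ matches the paper's treatment exactly.
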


\begin{proof}
The proof is similar to the proof of \cref{lm:subroutine_fixed_budget}, with main differences in dealing with misspecification. We provide the proof here for completeness. We consider event 
\begin{align*}
    \cE_k = \curly*{z_\star \in \widehat \cS_k \subseteq \cS_k},
\end{align*}
and prove through induction that, for $k \leq \ceil*{\log_2 (2/\epsilon)}$,
\begin{align*}
    \P \paren{ \cE_{k+1} \mid \cap_{i \leq k} \cE_i } \geq 1 - \delta_{k},
\end{align*}
where the value of $\curly*{\delta_k}_{k=0}^{\ceil*{\log_2 (2/\epsilon)}}$ will be specified in the proof. For $n \geq k+1$, we have $\widehat \cS_n \subseteq \widehat \cS_{k+1}$ due to the nature of the elimination-styled algorithm, which guarantees outputting an arm such that $\Delta_{z} < f(n+1)$. We use the notation $d_\star = d_\star(\epsilon)$ throughout the rest of the proof.

\textbf{Step 1: The induction.} The base case $\curly*{z_\star \in \widehat \cS_1 \subseteq \cS_1}$ holds with probability $1$ by construction (thus, we have $\delta_0 = 0$). Conditioned on events $\cap_{i=1}^k \cE_i$, we next analyze the event $\cE_{k+1}$. 

\textbf{Step 1.1: $d_k \geq d_\star$.} We first notice that $\widetilde D$ is selected as the largest integer such that $r_{\widetilde D}(\zeta) \leq T^\prime$. When $T/ n \geq r_{d_\star}(\zeta) + 1$, we have $\widetilde D \geq d_\star$ since $T^\prime \geq T/ n -1 \geq r_{d_\star}(\zeta)$. We remark here that for whatever $d_k \in [\widetilde D]$ selected, we always have $r_{d_\star}(\zeta) \leq r_{\widetilde D}(\zeta) \leq T^\prime$ and can thus safely apply the rounding procedure described in \cref{eq:rounding}.

Since $\widehat \cS_{k} \subseteq \cS_k$, we also have 
\begin{align}
    g_k(d_\star) & =  {2^{2k} \iota(\cY(\psi_{d_\star}(\widehat \cS_k)))} \nonumber \\
    & \leq {2^{2k} \iota(\cY(\psi_{d_\star}(\cS_k)))} \nonumber\\
    & \leq 64 \rho_{d_\star}^\star (\epsilon) \label{eq:subroutine_fixed_budget_rho_stratified_mis}\\
    & \leq B \label{eq:subroutine_fixed_budget_B_mis},
\end{align}
where \cref{eq:subroutine_fixed_budget_rho_stratified_mis} comes from \cref{lm:rho_stratified_eps} and \cref{eq:subroutine_fixed_budget_B_mis} comes from the assumption. As a result, we know that $d_{k} \geq d_\star$ since $d_{k} \in [\widetilde D]$ is selected as the largest integer such that $g_k(d_{k}) \leq B$. 

\textbf{Step 1.2: Concentration and error probability.} Let $\curly*{x_1, \ldots, x_{T^\prime}} $ be the arms pulled at iteration $k$ and $\curly*{r_1, \ldots, r_{T^\prime}}$ be the corresponding rewards. Let $\widehat{\theta}_k = A_k^{-1} b_k \in \R^{d_k}$ where $A_k = \sum_{i=1}^{T^\prime} \psi_{d_k}(x_i) \psi_{d_k}(x_i)^\top$, and $b_k =  \sum_{i=1}^{T^\prime} \psi_{d_k}(x_i) b_i$. Since $d_k \geq d_\star$ and the model is well-specified, we can write $r_i = \ang*{\theta_\star, x_i} + \xi_i = \ang*{\psi_{d_k}(\theta_\star), \psi_{d_k}(x_i)} + \xi_i$, where $\xi_i$ is i.i.d. generated zero-mean Gaussian noise with variance $1$. Similarly as analyzed in \cref{eq:mis_gen_sub_fixed_conf_concentration}, we have 
\begin{align}
    \P \paren{ \forall y \in \cY(\psi_{d_k}(\widehat \cS_k)) , \abs{ \ang{ y, \widehat \theta_k - \psi_{d_k}(\theta_\star)  } } \leq  \widetilde \gamma(d_k) \iota_k +  \omega_k(y)   } \geq 1-\delta_k, \label{eq:subroutine_fixed_budget_event_mis}
\end{align}
where $\iota_k \ldef  \sqrt{(1+\zeta) \iota(\cY(\psi_{d_k}(\cS_k))) }$ and $ \omega_k(y) \ldef  \norm{y}_{A_k^{-1}} \sqrt{2 \log \paren{ {\abs*{\widehat \cS_k}^2}/{\delta_k} }}$.

By setting $\max_{y \in \psi_{d_k}(\widehat \cS_k)}\norm{y}_{A_k^{-1}} \sqrt{2 \log \paren{ {\abs*{\widehat \cS_k}^2}/{\delta_k} }} = 2^{-k}/2$, we have 
\begin{align}
    \delta_k & = \abs*{\widehat \cS_k}^2 \exp \paren{ - \frac{1}{ 8 \cdot 2^{2k} \, \max_{y \in \psi_{d_k}(\widehat \cS_k)} \norm{y}_{A_k^{-1}}^2 } } \nonumber \\
    & \leq \abs*{\widehat \cS_k}^2 \exp \paren{ - \frac{T^\prime}{ 8 \cdot 2^{2k} \, (1 + \zeta) \, \iota(\cY(\psi_{d_k} (\widehat \cS_k))) } } \label{eq:sub_fixed_budget_rounding_mis} \\
    & \leq \abs*{ \cZ }^2 \exp \paren{ - \frac{ T}{ 4096 \, n \, \rho_{d_\star}^\star (\epsilon)} } \label{eq:sub_fixed_budget_error_bound_mis},
\end{align}
where \cref{eq:sub_fixed_budget_rounding_mis} comes from the guarantee of the rounding procedure \cref{eq:rounding}; and \cref{eq:sub_fixed_budget_error_bound_mis} comes from combining the following facts: (1) $2^{2k} \, \iota(\cY(\psi_{d_k} (\widehat \cS_k))) \leq B \leq 128 \rho_{d_\star}^\star (\epsilon)$; (2) $T^\prime \geq T/n - 1 \geq T/2n$ (note that $T/n \geq r_{d_\star}(\zeta) + 1 \implies T/n \geq 2$ since $r_{d_\star}(\zeta) \geq 1$);
(3) $\widehat \cS_k \subseteq \cZ$ and (4) consider some $\zeta \leq 1$ ($\zeta$ only affects constant terms).

\textbf{Step 1.3: Correctness.} We prove $z_\star \in \widehat \cS_{k+1} \subseteq \cS_{k+1}$ under the good event analyzed in \cref{eq:subroutine_fixed_budget_event_mis}.

\textbf{Step 1.3.1: $z_\star \in \widehat \cS_{k+1}$.} For any $\widehat z \in \widehat \cS_k$ such that $\widehat z \neq z_\star$, we have 
\begin{align}
    \ang*{ \psi_{d_k}(\widehat z) - \psi_{d_k}(z_\star) , \widehat \theta_k } & \leq \ang*{ \psi_{d_k}(\widehat z) - \psi_{d_k}(z_\star) , \psi_{d_k} (\theta_\star^{d_k}) } + \widetilde \gamma(d_k) \iota_k + 2^{-k}/2  \nonumber \\
    & = h(\widehat z ) -\eta_{d_k}(\widehat z) - h(z_\star) + \eta_{d_k}(z_\star) + \widetilde \gamma(d_k) \iota_k + 2^{-k}/2  \nonumber \\
    & < (2 + \iota_k) \, \widetilde \gamma(d_k)+ 2^{-k}/2  \nonumber \\
    & \leq 2^{-k}/2 + 2^{-k}/2 \label{eq:mis_sub_fixed_budget_optimal_arm} \\
    & = 2^{-k}, \nonumber
\end{align}
where \cref{eq:mis_sub_fixed_budget_optimal_arm} comes from comes from \cref{prop:round_number} combined with the fact that $d_k \geq d_\star$ (as shown in Step 1.1).
As a result, $z_\star$ remains in $\widehat \cS_{k+1}$ according to the elimination criteria.

\textbf{Step 1.3.2: $\widehat \cS_{k+1} \subseteq \cS_{k+1}$.} Consider any $z \in \widehat \cS_k \cap \cS_{k+1}^c$, we know that $\Delta_z \geq 2\cdot 2^{-k}$ by definition. Since $z_\star \in \widehat \cS_k$, we then have 
\begin{align}
    \ang*{ \psi_{d_k}(z_\star) - \psi_{d_k}(z) , \widehat \theta_k } & \geq \ang{ \psi_{d_k}(\widehat z) - \psi_{d_k}(z_\star) , \psi_{d_k} (\theta_\star^{d_k}) } - \widetilde \gamma(d_k) \iota_k - 2^{-k}/2  \nonumber \\
    & = h(z_\star) - \eta_{d_k}(z_\star) - h(z) + \eta_{d_k}(z) - \widetilde \gamma(d_k) \iota_k - 2^{-k}/2  \nonumber \\ 
    & \geq 2 \cdot 2^{-k} - (2 +  \iota_k)\widetilde \gamma(d_k)  - 2^{-k}/2  \nonumber \\
    & = 2 \cdot 2^{-k} - \gamma(d_k)  - 2^{-k}/2  \nonumber \\
    & \geq 2^{-k} \label{eq:mis_sub_fixed_budget_bad_arms},
\end{align}
where \cref{eq:mis_sub_fixed_budget_bad_arms} comes from a similar reasoning as appearing in \cref{eq:mis_sub_fixed_budget_optimal_arm}. As a result, we have $z \notin \widehat \cS_{k+1}$ and $\widehat \cS_{k+1} \subseteq \cS_{k+1}$.

To summarize, we prove the induction at iteration $k+1$, i.e.,
\begin{align*}
    \P \paren{ \cE_{k+1} \mid \cap_{i< k+1} \cE_{i} } \geq 1 - \delta_k.
\end{align*}

\textbf{Step 2: The error probability.} This step is exactly the same as the Step 2 in the proof of \cref{lm:subroutine_fixed_budget}. Let $\cE = \cap_{i=1}^{n+1} \cE_{i}$ denote the good event, we then have 
\begin{align}
    \P \paren{ \cE } &  \geq 1- n \abs*{ \cZ }^2 \exp \paren{ - \frac{ T}{ 4096 \, n \, \rho_{d_\star}^\star (\epsilon) } } \nonumber.
\end{align}
\end{proof}

\subsubsection{Proof of \cref{thm:doubling_fixed_budget_mis}}

\thmDoublingFixedBudgetMis*

\begin{proof}
The proof follows similar steps as the proof of \cref{thm:doubling_fixed_budget}. Although we are dealing with a misspecified model, guarantees derived in \cref{lm:subroutine_fixed_budget_mis} is similar to the ones in \cref{lm:subroutine_fixed_budget}. 
When $\epsilon \leq \Delta_{\min}$, the proof goes almost exactly the same as the proof of \cref{thm:doubling_fixed_budget} (with $\rho_{d_\star}^\star$ replaced by $\rho_{d_\star(\epsilon)}^\star (\epsilon)$), and \cref{alg:doubling_fixed_budget} identifies the optimal arm. When $\epsilon > \Delta_{\min}$, we additionally replace $\Delta_{\min}$ by $\epsilon$ and equally split the $2\epsilon$ slackness between selection and validation steps. We also slightly modify \cref{lm:psi_ub_lb} to an $\epsilon$-relaxed version (e.g., in the derivation of \cref{eq:psi_ub_lb_Delta_min}, select a $z^\prime \in \cZ$ with sub-optimality gap $\leq \epsilon$ and then replace $\Delta_{\min}$ by $\epsilon$). 
\end{proof}

\section{ADDITIONAL EXPERIMENT DETAILS AND RESULTS}
\label{app:experiment}

We set confidence parameter $\delta = 0.05$ in our experiments, and generate rewards with Gaussian noise $\xi_t \sim \cN(0,1)$.
We parallelize our simulations on a cluster consists of two Intel® Xeon® Gold 6254 Processors.

Similar to \citet{fiez2019sequential}, we use a Frank-Wolfe type of algorithm \citep{jaggi2013revisiting} with constant step-size $\frac{2}{k+2}$ (we use $k$ to denote the iteration counter in the Frank-Wolfe algorithm) to approximately solve optimal designs. We terminate the Frank-Wolfe algorithm when the relative change of the design value is smaller than $0.01$ or when $1000$ iterations are reached. We use the rounding procedure developed in \citet{pukelsheim2006optimal} to round continuous designs to discrete allocations (with $\zeta=1$, also see \citet{fiez2019sequential} for a detailed discussion on the rounding procedure). In the implementation of \cref{alg:doubling_fixed_confidence}, we set $\gamma_\ell = 4^\ell$, $n_i = 4^i$ and $B_i = 4^{\ell - i}$, which only affect constant terms in our theoretical guarantees. We use a binary search procedure to select $d_k$ in \cref{alg:subroutine_fixed_confidence}. 

\paragraph{Additional Experiment Results.}
We consider a problem instance with $\cX = \cZ$ being $100$ randomly selected arms from the $D$ dimensional unit sphere.
We set reward function $h(x) = \ang{\theta_\star, x}$ with $\theta_\star = [\frac{1}{1^2}, \frac{1}{2^2}, \dots, \frac{1}{d_\star^2}, 0 \dots, 0]^\top \in \R^D$. We filter out instances whose smallest sub-optimality gap is smaller than $0.08$. We set $d_\star = 5$ and vary the ambient dimension $D\in\curly{25,50,75,100}$. As in \cref{sec:experiment}, we evaluate each algorithm with success rate, (unverifiable) sample complexity and runtime. We run $100$ independent random trials for each algorithm. Due to computational burdens, we force-stop both algorithms after $50,000$ samples; we also force-stop the Frank-Wolfe algorithm when $500$ iterations are reached.

\begin{table}[H]
  \caption{Comparison of Success Rate}
  \label{tab:success_rate_2}
  \centering
  \begin{tabular}{lcccc}
    \toprule
          $D $ & $25$   & $50$ & $75$ & $100$ \\
    \midrule
    \rage    & $100\%$ & $100\%$ & $98\%$ & $95\%$  \\
    Ours    & $91\%$ & $98\%$ & $97\%$ & $98\%$   \\
    \bottomrule
  \end{tabular}
\end{table}

Success rates of both algorithms are shown in \cref{tab:success_rate_2}, and \rage shows advantages over our algorithm when $D$ is small. \cref{fig:comparison_unif} shows the sample complexity of both algorithms: Our algorithm adapts to the true dimension $d_\star$ yet \rage is heavily affected by the increasing ambient dimension $D$.

\begin{figure}[h]
    \centering
    \includegraphics[width=.4\textwidth]{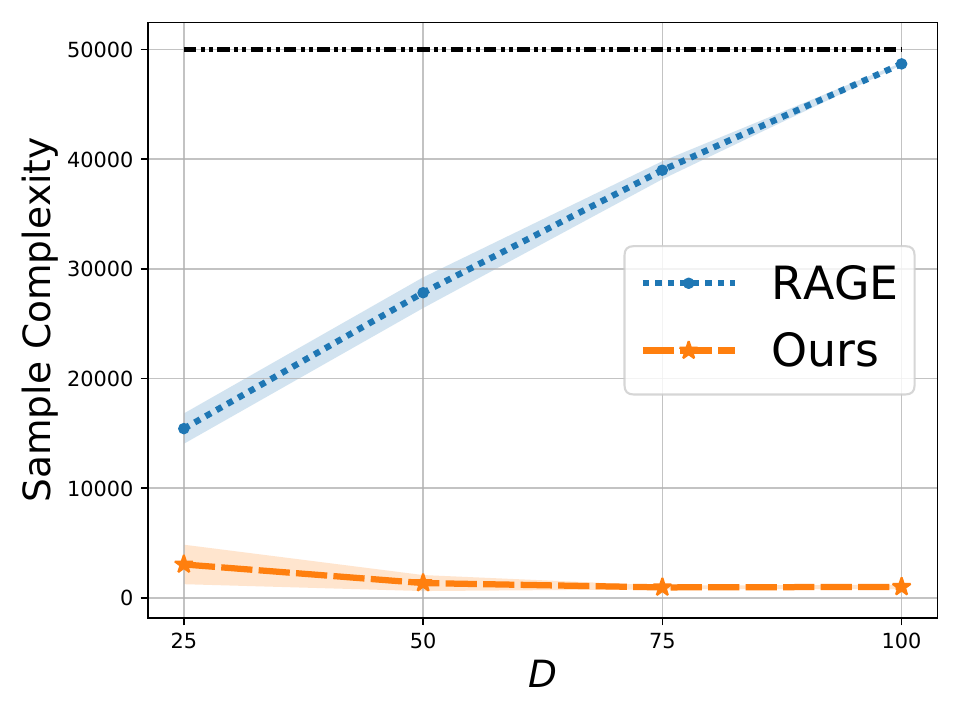}
    \caption{Comparison of Sample Complexity}
    \label{fig:comparison_unif}
\end{figure}

The runtime of both algorithms are shown in \cref{tab:runtime_unif}. \rage shows clear advantage in runtime and our algorithm suffers from computational overheads of conducting model selection.

\begin{table}[H]
  \caption{Comparison of Runtime}
  \label{tab:runtime_unif}
  \centering
  \begin{tabular}{lcccc}
    \toprule
          $\epsilon $ & $10^{-2}$   & $10^{-3}$ & $10^{-4}$ & $10^{-5}$\\
    \midrule
    \rage    & $85.99\,$s & $144.78\,$s & $249.79\,$s & $357.98\,$s \\
    Ours    & $287.09\,$s & $339.67\,$s & $489.50\,$s & $678.93\,$s  \\
    \bottomrule
  \end{tabular}
\end{table}

We remark that, for the current experiment setups with $d_\star$ and $D\in\curly{25,50,75,100}$, our algorithm does not perform well if $\theta_\star$ is chosen to be flat, e.g., $\theta_\star = [\frac{1}{\sqrt{d_\star}},\dots,\frac{1}{\sqrt{d_\star}},0,\dots,0]^\top \in \R^D$. However, we believe that one will eventually see model selection gains if $D$ is chosen to be large enough (and allowing each algorithm takes more samples before force-stopped).
One may need to overcome the computational burdens, e.g., developing practical (or heuristic-based) implementations of our algorithm and \rage, before running experiments in higher dimensional spaces. We leave large-scale evaluations for future work.

\end{document}